\title[LLN for Bayesian two-layer Neural Network trained with Variational Inference]{Law of Large Numbers for Bayesian two-layer Neural Network trained with Variational Inference}
\newcommand{\di}{\mathrm{d}} 
\newcommand{\btheta}{\boldsymbol\theta}
\newcommand{\indep}{\perp \!\!\! \perp}
\begin{document}

\maketitle

\begin{abstract}
  We provide a rigorous analysis of training by variational inference
  (VI) of Bayesian neural networks in the two-layer and infinite-width
  case. We consider a regression problem with a regularized evidence
  lower bound (ELBO) which is decomposed into the expected
  log-likelihood of the data and the Kullback-Leibler (KL) divergence
  between the a priori distribution and the variational
  posterior. With an appropriate weighting of the KL, we prove a law
  of large numbers for three different training schemes: (i) the
  idealized case with exact estimation of a multiple Gaussian integral
  from the reparametrization trick, (ii) a minibatch scheme using
  Monte Carlo sampling, commonly known as \emph{Bayes by Backprop},
  and (iii) a new and computationally cheaper algorithm which we
  introduce as \emph{Minimal VI}. An important result is that all
  methods converge to the same mean-field limit. Finally, we
  illustrate our results numerically and discuss the need for the
  derivation of a central limit theorem.
 \end{abstract}

\begin{keywords}%
Bayesian neural networks, variational inference, mean-field, law of large numbers, infinite-width neural networks.\end{keywords}

\section{Introduction}
Deep Learning has led to a revolution in machine learning with
impressive successes.  However, some limitations of DL have been
identified and, despite, many attempts, our understanding of DL is
still limited. A long-standing problem is the assessment of predictive
uncertainty: DL tends to be overconfident in its predictions
\cite{abdar2021review}, which is a problem in applications such as
autonomous driving
\citep{mcallister2017concrete,michelmore2020uncertainty}, medical
diagnosis \citep{kendall2017uncertainties,filos2019systematic}, or
finance; cf
\cite{krzywinski2013importance,ghahramani2015probabilistic}. Therefore,
on the one hand, analytical efforts are being made to thoroughly
investigate the performance of DL; and on the other hand, many
approaches have been proposed to alleviate its shortcomings. The
Bayesian paradigm is an attractive way to tackle predictive
uncertainty, as it provides a framework for training uncertainty-aware
neural networks (NNs) (e.g.
\cite{ghahramani2015probabilistic,blundell2015weight,gal2016dropout}).

Thanks to a fully probabilistic approach, Bayesian Neural Networks
(BNN) combine the impressive neural-network expressivity with the
decision-theoretic approach of Bayesian inference, making them capable
of providing predictive uncertainty; see
\cite{blundell2015weight,michelmore2020uncertainty,mcallister2017concrete,filos2019systematic}.
However, Bayesian inference requires deriving the posterior
distribution of the NN weights. This posterior distribution is
typically not tractable. A classical approach is to sample the
posterior distribution using Markov chain Monte Carlo methods (such as
Hamilton-Monte-Carlo methods). There are however long-standing
difficulties, such as the proper choice of the prior and fine-tuning
of the sampler.  Such difficulties often become prohibitive in
large-dimensional cases,\citep{cobb2021scaling}. An alternative is to
use variational inference, which has a long history
\citep{Hinton93keepingneural,mackay1995probable,mackay1995ensemble}. Simpler
methods that do not require exact computation of integrals over the
variational posterior were then developed, e.g. first by
\cite{graves2011practical} thanks to some approximation and then by
\cite{blundell2015weight} with the \emph{Bayes by Backprop}
approach. In the latter, the posterior distribution is approximated by
a parametric distribution and a generalisation of the
reparametrization trick used by \cite{kingma2014} leads to an unbiased
estimator of the gradient of the ELBO; see also
\cite{gal2016dropout,louizos2017multiplicative,khan2018fast}. Despite
the successful application of this approach, little is known about the
overparameterized limit and appropriate weighting that must be assumed
to obtain a nontrivial Bayesian posterior, see
\cite{izmailov2021bayesian}. Recently, \cite{huix} outlined  the
importance of balancing in ELBO the integrated log-likelihood term and
the KL regularizer, to avoid both overfitting and dominance of the
prior. However, a suitable limiting theory has yet to be established,
as well as guarantees for the practical implementation of the
stochastic gradient descent (SGD) used to estimate the parameters of
the variational distribution.

Motivated by the need to provide a solid theoretical framework,
asymptotic analysis of NN has gained much interest recently.  The main focus
has been on the gradient descent algorithm and its variants
\citep{Vanden2,chizat2018global,mei2018mean,sirignano2020lln,descours2022law}. In
much of these works, a mean-field analysis is performed to
characterize the limiting nonlinear evolution of the weights of a
two-layer NN, allowing the derivation of a law of large numbers and a
central limit theorem for the empirical distribution of neuron
weights. A long-term
goal of these works is to demonstrate convergence toward a global
minimum of these limits for the mean field. Despite some progress in
this direction, this is still an open and highly challenging problem;
cf \cite{chizat2018global,chizat22,ccff}. Nevertheless, this
asymptotic analysis is also of interest in its own right, as we show
here in the case of variational inference for Bayesian neural
networks. Indeed, based on this asymptotic analysis, we develop an
efficient and new variant of the stochastic gradient descent (SGD)
algorithm for variational inference in BNN that computes only the
information necessary to recover the limit behavior.

Our goal, then, is to work at the intersection of analytical efforts
to gain theoretical guarantees and insights and of practical methods
for a workable variational inference procedure. By adapting the
framework developed by \cite{descours2022law}, we produce a rigorous
asymptotic analysis of BNN trained in a variational setting for a
regression task. {From the limit equation analysis, we first
  find that a proper regularisation of the Kullback-Leibler divergence
  term in relation with the integrated loss leads to their right
  asymptotic balance. Second, we prove the asymptotic equivalence of
  the idealized and Bayes-by-Backprop SGD schemes, as both preserve
  the same core contributions to the limit. Finally, we introduce a
  computationally more favourable scheme, directly stemming from the
  effective asymptotic contributions. This scheme is the true
  mean-field algorithmic approach, as only deriving from
  non-interacting terms.}

More specifically, our contributions are the following:
\begin{itemize}
\item We first focus on the idealized SGD algorithm, where the
  variational expectations of the derivative of the loss from the
  reparametrization trick of \cite{blundell2015weight} are computed
  exactly. More precisely, we prove that with the number of neurons
  $N\to +\infty$, the sequence of trajectories of the scaled empirical
  distributions of the parameters satisfies a law of large
  numbers. This is the purpose of Theorem \ref{thm.ideal}. The proof
  is completely new: it establishes directly the limit in the topology
  inherited by the Wasserstein distance bypassing the highly technical
  Sobolev space arguments used in \cite{descours2022law}.
\end{itemize}
The idealized SGD requires the computation of some integrals, which in
practice prevents a direct application of this algorithm. However, we
can prove its convergence to an explicit nonlinear process. These
integrals are usually obtained by a Monte Carlo approximation, leading to the
\emph{Bayes-by-Backprop} SGD, see \cite{blundell2015weight}.
\begin{itemize}
\item We show for the \emph{Bayes-by-Backprop} SGD (see Theorem
  \ref{thm.z1zN}) that the sequence of trajectories of the scaled
  empirical distributions of the parameters satisfies the same law of
  large numbers as that in Theorem \ref{thm.ideal}, which justifies
  such an approximation procedure. Note that each step of the
  algorithm involves the simulation of $O(N)$ Gaussian random
  variables, which can make the associated gradient evaluation
  prohibitively expensive.
\item A careful analysis of the structure of the limit equation
  \eqref{eq_limit} allows us to develop a new algorithm, called
  \emph{Minimal-VI } SGD, which at each step generates only two
  Gaussian random variables and for which we prove the same limiting
  behavior. The key idea here is to keep only those contributions which
  affect the asymptotic behavior and which can be understood as the
  mean-field approximation from the uncorrelated degrees of
  freedom. This is all the more interesting since
  we observe numerically that the number weights $N$ required to reach
  this asymptotic limit is quite small which makes this variant of
  immediate practical interest.
\item We numerically investigate the convergence of the three methods
  to the common limit behavior on a toy example. We observe that the
  mean-field method is effective for a small number of neurons
  ($N=300$). The differences between the methods are
  reflected in the variances.
\end{itemize}
The paper is organized as follows: Section~\ref{sec:notation}
introduces the variational inference in BNN, as well as the SGD
schemes commonly considered, namely the idealized and
\emph{Bayes-by-backprop} variants. Then, in Section~\ref{sec:ideal} we
establish our initial result, the LLN for the idealized SGD. In
Section~\ref{sec:BbB} we prove the LLN for the
\emph{Bayes-by-backprop} SGD and its variants. We show that both SGD
schemes have the same limit behavior. Based on an analysis of the
obtained limit equation, we present in Section \ref{sec:minimal} the
new \emph{minimal- VI}. Finally, in Section~\ref{sec:numerics} we
illustrate our findings using numerical experiments. The proofs of the
mean-field limits, which are original and quite technically demanding,
are gathered in the supplementary paper.

\paragraph{Related works.}
Law of Large Numbers (LLN) for mean-field interacting particle
systems, have attracted a lot of attentions; see for
example~\cite{hitsuda1986tightness,sznitman_topics_1991,
  fernandez1997hilbertian, jourdainAIHP,delarue, delmoral,
  kurtz2004stochastic} and references therein.  The use of mean-field
particle systems to analyse two-layer neural networks with random
initialization have been considered in \cite{mei2018mean, mei2}, which
establish a LLN on the empirical measure of the weights at fixed times
- we consider in this paper the trajectory convergence{, i.e. the whole empirical measure process (time indexed) converges uniformly w.r.t. Skorohod topology. It enables not only to use the limiting PDE, for example to study the convergence of the weights towards the infimum of the loss function (see \cite{chizat2018global} for preliminary results), but is is also crucial to establish the central limit theorem, see for example \cite{descours2022law}}. \cite{Vanden2} give conditions for global convergence of
GD for exact mean-square loss and online stochastic gradient descent
(SGD) with mini-batches increasing in size with the number of weights
$N$.  A LLN for the entire trajectory of the empirical measure is also
given in \cite{sirignano2020lln} for a standard SGD.
\cite{durmus-neural} establish the propagation of chaos for SGD with
different step size schemes. Compared to the existing literature
dealing with the SGD empirical risk minimization in two-layer neural
networks, \cite{descours2022law} provide the first rigorous proof of
the existence of the limit PDE, and in particular its uniqueness, in
the LLN.

We are interested here in deriving a LLN but for Variational Inference
(VI) of two-layer Bayesian Neural Networks (BNN), where we consider a
regularized version of the Evidence Lower Bound (ELBO).

\section{Variational inference in BNN: Notations and common SGD
  schemes}
\label{sec:notation}
\subsection{Variational inference and Evidence Lower Bound}
\textbf{Setting}. Let $\mathsf X$ and $\mathsf Y$ be subsets of $\mathbf R^n$ ($n\ge 1$) and $\mathbf R$ respectively.
For $N\ge1$ and  $\boldsymbol{w}=(w^1,\dots,w^N)\in(\mathbf R^d)^N$, let $f_{\boldsymbol{w}}^N: \mathsf X\to \mathbf R$ be the following two-layer neural network: for $x\in\mathsf X$,
\begin{equation*}
f_{\boldsymbol{w}}^N(x):=\frac 1N\sum_{i=1}^Ns(w^i,x)\in\mathbf R,
\end{equation*}
where $s:\mathbf R^d\times \mathsf X\to \mathbf R$ is the activation function.
We work in a Bayesian setting, in which we seek a distribution of the latent variable $\boldsymbol{w}$ which represents the weights of the neural network. The standard problem in Bayesian inference over complex models is that the posterior distribution is hard to sample. To tackle this problem,  we consider Variational Inference, in which we consider a family of distribution $\mathcal Q^N=\{ q_{\boldsymbol\theta}^N, \boldsymbol\theta\in \Xi^N\}$ (where $\Xi$ is some parameter space)  easy to sample. The objective is to find the best $q_{\boldsymbol\theta}^N\in\mathcal Q^N$, the one closest in KL divergence (denoted $\mathscr D_{{\rm KL}}$) to the exact posterior. Because we cannot compute the KL, we optimize the evidence lower bound (ELBO), which is equivalent to the KL up to an additive constant.

Denoting by $\mathfrak L: \mathbf R\times\mathbf R\to\mathbf R_+$ the negative log-likelihood (by an abuse of language, we call this quantity the \emph{loss}),  the  ELBO (see \cite{blei2017variational}) is defined, for $\btheta\in \Xi^N$, $(x,y)\in\mathsf X\times\mathsf Y$,  by
\begin{equation*}\label{d-elbo}
\mathrm{E}_{{\rm lbo}}(\boldsymbol\theta,x,y) :=- \int_{(\mathbf R^d)^N}\mathfrak L(y,f_{\boldsymbol{w}}^N(x))q_{\boldsymbol\theta}^N(\boldsymbol{w})\di \boldsymbol{w} - \mathscr D_{{\rm KL}}(q_{\btheta}^N|P_0^N),
\end{equation*}
where $P_0^N$ is some prior on the weights of the NN.  The ELBO is
decomposed into two terms: one corresponding to the Kullback-Leibler
(KL) divergence between the variational density and the prior and the
other to a marginal likelihood term. It was empirically found that the
maximization of the ELBO function is prone to yield very poor
inferences \citep{coker2021wide}. It is argued in \cite{coker2021wide} and
\cite{huix} that optimizing the ELBO leads as $N \to \infty$ to the
collapse of the variational posterior to the prior. \cite{huix}
proposed to consider a regularized version of the ELBO, which consists
in multiplying the KL term by a parameter which is scaled by the
inverse of the number of neurons:
\begin{equation}\label{d-elbo_w}
\mathrm{E}_{{\rm lbo}}^N(\boldsymbol\theta,x,y) :=- \int_{(\mathbf R^d)^N}\mathfrak L(y,f_{\boldsymbol{w}}^N(x))q_{\boldsymbol\theta}^N(\boldsymbol{w})\di \boldsymbol{w} -\frac 1N \mathscr D_{{\rm KL}}(q_{\btheta}^N|P_0^N),
\end{equation}
A first objective of this paper is to show
 that the proposed regularization leads to a stable asymptotic behavior
and the effect of both the integrated loss and Kullback-Leibler terms on the
limiting behavior are balanced in the limit $N \to \infty$.
The maximization of $\mathrm{E}_{{\rm lbo}}^N$  is carried out using SGD.

The variational family $\mathcal Q^N$ we consider is a Gaussian family of distributions. More precisely,   we assume that for any $\btheta=(\theta^1,\dots,\theta^N)\in\Xi^N$, the variational distribution $q_{\btheta}^N$ factorizes over the neurons: for all $\boldsymbol{w}=(w^1,\dots,w^N)\in(\mathbf R^d)^N$, $q_{\btheta}^N(\boldsymbol{w})=\prod_{i=1}^Nq^1_{\theta^i}(w^i)$, where
$\theta=(m,\rho)\in\Xi:=\mathbf R^d\times\mathbf R$ and  $q^1_\theta$ is the probability density function (pdf) of $\mathcal N(m,g(\rho)^2 I_d)$, with $g(\rho)=\log(1+e^{\rho}), \ \rho \in \mathbf R.$

In the following,  we simply  write $\mathbf R^{d+1}$ for $\mathbf R^d\times\mathbf R$.
In addition, following the reparameterisation trick of \cite{blundell2015weight},   $q^1_\theta(w) \di w$  is the pushforward of a reference probability measure with density $\gamma$ by $\Psi_\theta$ (see more precisely Assumption {\rm \textbf{A1}}).
In practice, $\gamma$ is the pdf of $\mathcal N(0,I_d)$ and $\Psi_\theta(z)=m+g(\rho)z$. With these notations, \eqref{d-elbo_w} writes
\begin{align}
\nonumber
\mathrm{E}_{{\rm lbo}}^N(\boldsymbol\theta,x,y) &=- \int_{(\mathbf R^d)^N}\!\!\mathfrak L\Big(y,\frac 1N\sum_{i=1}^Ns(\Psi_{\theta^i}(z^i),x)\Big) \gamma(z^1)\dots\gamma(z^N)\di z_1\dots\di z_N   -\frac 1N \mathscr D_{{\rm KL}}(q_{\btheta}^N|P_0^N).
\end{align}
\noindent
\textbf{Loss function and prior distribution}.
In this work, we focus on the regression problem, i.e.
$\mathfrak L$ is the Mean Square Loss:   for $y_1,y_2\in\mathbf R$,   $\mathfrak L(y_1,y_2)=\frac 12|y_1-y_2|^2$.
 We also introduce the function $\phi:(\theta,z,x)\in \mathbf R^{d+1} \times\mathbf R^d\times\mathsf X\mapsto s(\Psi_\theta(z),x).$ On the other hand,  we  assume that the  prior distribution $P_0^N$ write, for all $\boldsymbol{w}\in(\mathbf R^d)^N$,
$P_0^N(\boldsymbol{w})=\prod_{i=1}^NP_0^1(w^i)$,
 where $P_0^1:\mathbf R^d\to\mathbf R_+$  is the pdf of $\mathcal N(m_0,\sigma^2_0I_d)$, and $\sigma_0>0$. Therefore $\mathscr D_{{\rm KL}}(q_{\btheta}^N|P_0^N)=\sum_{i=1}^N\mathscr D_{{\rm KL}}(q_{\theta^i}|P_0^1)$ and, for $\theta=(m,\rho)\in \mathbf R^{d+1}$,
\begin{align*}
\mathscr D_{{\rm KL}}(q_\theta^1|P_0^1)=\int_{\mathbf R^d} q^1_\theta(x) \log(q^1_\theta(x)/P_0^1(x))\di x=\frac{\|m-m_0\|_2^2}{2\sigma_0^2}+\frac d2\Big(\frac{g(\rho)^2}{\sigma_0^2}-1\Big)+\frac d2\log\Big(\frac{\sigma_0^2}{g(\rho)^2}\Big).
\end{align*}
Note  that $\mathscr D_{{\rm KL}}$ has at most a quadratic growth in $m$ and $\rho$.

{Note that we assume here a Gaussian prior to get an explicit expression of the Kullback-Leibler divergence. Most arguments extend to sufficiently regular densities and are essentially the same for exponential families, using conjugate families for the variational approximation.}
\subsection{Common SGD schemes in backpropagation in a variational setting}
 {\noindent \bf Idealized SGD.} Let $(\Omega, \mathcal F,\mathbf P)$ be a probability space. Consider a data set  $\{(x_k,y_k)\}_{k\ge 0}$    i.i.d.    w.r.t. $\pi\in\mathcal{P}(\mathsf X\times\mathsf Y)$, the space of probability measures over $\mathsf X\times\mathsf Y$. For $N\ge1$ and given a  learning rate $\eta>0$,  the maximization of $\theta\in \mathbf R^{d+1}\mapsto \mathrm{E}_{{\rm lbo}}^N(\boldsymbol\theta,x,y)$ with a SGD algorithm writes as follows:
  for $k\ge 0$ and $i\in\{1,\dots,N\}$,
\begin{equation}\begin{cases}\label{eq.sgd}
 &\boldsymbol\theta_{k+1}=\boldsymbol\theta_k+ \eta \nabla_{\boldsymbol\theta}\mathrm{E}_{{\rm lbo}}^N(\boldsymbol\theta_k,x_k,y_k) \\
 &\boldsymbol\theta_0 \sim \mu_0^{\otimes N},
\end{cases}\end{equation}
 where $\mu_0\in \mathcal P(\mathbf R^{d+1})$ and $\boldsymbol\theta_k=(\theta^1_k,\ldots, \theta^N_k)$.
We now compute $\nabla_{\boldsymbol\theta}\mathrm{E}_{{\rm lbo}}^N(\boldsymbol\theta,x,y)$.

First, under  regularity assumptions on the function $\phi$ (which will be formulated later, see \textbf{A1} and \textbf{A3} below) and by assumption on $\mathfrak L$, we have for all $i\in\{1,\dots,N\}$ and all $(x,y)\in\mathsf X\times\mathsf Y$,
\begin{align}
\nonumber
&\int_{(\mathbf R^d)^N} \nabla_{\theta^i}\mathfrak L\Big(y,\frac 1N\sum_{j=1}^N\phi(\theta^j,z^j,x)\Big)\gamma(z^1)\dots\gamma(z^N)\di z^1\dots\di z^N\nonumber\\
&= -\frac{1}{N^2}\sum_{j=1}^N\int_{(\mathbf R^d)^N}(y-\phi(\theta^j,z^j,x))\nabla_{\theta}\phi(\theta^i,z^i,x)\gamma(z^1)\dots\gamma(z^N)\di z^1\dots\di z^N\label{grad_int0}\\
\nonumber
&=-\frac{1}{N^2}\Big[\sum_{j=1,j\neq i}^N(y-\langle\phi(\theta^j,\cdot,x),\gamma\rangle)\langle\nabla_{\theta}\phi(\theta^i,\cdot,x),\gamma\rangle + \langle(y-\phi(\theta^i,\cdot,x))\nabla_{\theta}\phi(\theta^i,\cdot,x),\gamma\rangle\Big],
\end{align}
where we have used the notation $\langle U,\nu \rangle=\int_{\mathbf R^q}U(z)\nu(\di z) $ for any integrable   function  $U:\mathbf R^q\to \mathbf R$ w.r.t. a measure~$\nu$ (with a slight abuse of notation,  we  denote by $\gamma$ the measure $\gamma(z)\di z$). Second, for $\theta \in \mathbf R^{d+1}$, we have
\begin{align}\label{eq.kl_1}
\nabla_{\theta}\mathscr D_{{\rm KL}}(q_{\theta}^1|P_0^1)=
\begin{pmatrix}
  \nabla_{m}\mathscr D_{{\rm KL}}(q_{\theta}^1|P_0^1)    \\
    \partial_{\rho}\mathscr D_{{\rm KL}}(q_{\theta}^1|P_0^1)
\end{pmatrix}
=
\begin{pmatrix}
  \frac{1}{\sigma_0^2}(m-m_0)    \\
      \frac{d}{\sigma_0^2}g'(\rho)g(\rho)-d\frac{g'(\rho)}{g(\rho)}
\end{pmatrix}.
\end{align}
In conclusion,   the SGD \eqref{eq.sgd} writes: for $k\ge 0$ and $i\in\{1,\dots,N\}$,
\begin{equation}\begin{cases}\label{eq.algo-ideal}
 &\theta_{k+1}^i=\theta_{k}^i\displaystyle-\frac{\eta}{N^2}\sum_{j=1,j\neq i}^N\Big(\langle\phi(\theta_k^j,\cdot,x_k),\gamma\rangle-y_k\Big)\langle\nabla_\theta\phi(\theta_k^i,\cdot,x_k),\gamma\rangle \\
& \qquad \quad \displaystyle-\frac{\eta}{N^2}\Big\langle(\phi(\theta_k^i,\cdot,x_k)-y_k)\nabla_\theta\phi(\theta_k^i,\cdot,x_k),\gamma\Big\rangle-\frac{\eta}{N}\nabla_{\theta}\mathscr D_{{\rm KL}}(q_{\theta^i_k}^1|P_0^1)\\
 &\theta_{0}^i \sim \mu_0.
\end{cases}\end{equation}
We shall call this algorithm \emph{idealised} SGD because it contains an intractable term given by the integral w.r.t. $\gamma$. This has motivated the development of methods where this integral is replaced by an unbiased Monte Carlo estimator (see \cite{blundell2015weight}) as detailed below.\\

\noindent\textbf{\emph{Bayes-by-Backprop} SGD}. The second  SGD algorithm we study
 is based on an approximation, for $i\in\{1,\dots,N\}$, of  $\int_{(\mathbf R^d)^N}(y-\phi(\theta^j,z^j,x))\nabla_{\theta}\phi(\theta^i,z^i,x)\gamma(z^1)\dots\gamma(z^N)\di z^1\dots\di z^N$  (see \eqref{grad_int0})
by
\begin{equation}\label{eq.sum-a}
  \frac 1B\sum_{\ell=1}^B\big (y-\phi(\theta^j, \mathsf Z^{j,\ell},x)\big )\nabla_\theta\phi(\theta^i,\mathsf  Z^{i,\ell},x)
\end{equation}
where $B\in \mathbf N^*$ is a fixed integer and $(\mathsf Z^{q,\ell}, q\in \{i,j\},  1\le \ell\le B)$ is a i.i.d  finite sequence of random variables distributed according to $\gamma(z)\di z$.
In this case, for $N\ge 1$, given a dataset $(x_k,y_k)_{k\ge0}$,  the maximization of $\theta\in \mathbf R^{d+1}\mapsto \mathrm{E}_{{\rm lbo}}^N(\boldsymbol\theta,x,y)$ with a SGD algorithm   is the following: for $k\ge 0$ and $i\in\{1,\dots,N\}$,
\begin{equation}\begin{cases}\label{eq.algo-batch}
&\displaystyle \theta_{k+1}^i=\theta_k^i\displaystyle -\frac{\eta}{N^2B}\sum_{j=1}^N\sum_{\ell=1}^B\big (\phi(\theta_k^j,\mathsf Z^{j,\ell}_{k},x_k)-y_k\big )\nabla_\theta\phi(\theta_k^i,\mathsf Z^{i,\ell}_k,x_k)
-\frac{\eta}{N}\nabla_\theta \mathscr D_{{\rm KL}}(q_{\theta^i_k}^1|P_0^1)\\
 &\theta_{0}^i=(m_{0}^i,\rho_{0}^i)\sim \mu_0,
\end{cases}\end{equation}
where $\eta>0$ and  $(\mathsf Z^{j,\ell}_k, 1\le j\le N, 1\leq\ell\leq B, k\ge 0)$ is a i.i.d  sequence of random variables distributed according to $\gamma$.


\section{Law of large numbers for the idealized SGD}
\label{sec:ideal}
\textbf{Assumptions and notations}.  When $E$ is a metric space and $\mathscr I= \mathbf R_+$ or $\mathscr I=[0,T]$ ($T\ge 0$), we  denote by $\mathcal D(\mathscr  I,E)$ the Skorohod space of c\`adl\`ag functions on $\mathscr I$ taking values in  $E$ and $\mathcal C(\mathscr  I,E)$ the space of continuous functions on $\mathscr I$ taking values in  $E$.
The evolution of the parameters $(\{\theta_k^i, i=1,\ldots,N\})_{k\ge 1}$ defined by \eqref{eq.algo-ideal} is tracked through their empirical distribution $\nu_k^N$ (for $k\geq 0$) and its scaled version $\mu_t^N$ (for $t\in\mathbf{R}_+$), which are defined as follows:
\begin{equation}\label{empirical_distrib1}
\nu_k^N:=\frac 1N\sum_{i=1}^N\delta_{\theta_k^i} \ \ \text{and} \ \ \mu_t^N:=\nu_{\lfloor Nt\rfloor}^N,  \text{ where the  $\theta^i_k$'s are defined  \eqref{eq.algo-ideal}}.
\end{equation}
Fix $T>0$.
For all $N\ge1$, $\mu^N:=\{\mu_t^N, t\in[0,T]\}$ is a random element of $\mathcal D([0,T],\mathcal P(\mathbf R^{d+1}))$, where $\mathcal P(\mathbf R^{d+1})$ is endowed with the weak convergence topology. For $N\ge1$ and $k\ge1$, we introduce the following $\boldsymbol\sigma$-algebras:
\begin{align}\label{eq.Fk1}
\mathcal F_0^N=\boldsymbol\sigma(\theta_{0}^i, 1\le i\le N) \ \ \text{and} \ \ \mathcal F_k^N=\boldsymbol\sigma(\theta_{0}^i,   (x_q,y_q),1\le i\le N, 0\le q\le k-1).
\end{align}
Recall  $q_\theta^1:\mathbf R^d\to\mathbf R_+$ be the pdf of $\mathcal N(m,g(\rho)^2I_d)$ ($\theta=(m,\rho)\in\mathbf R^{d+1}$).
In this work, we assume the following.
\begin{enumerate}
\item[\textbf{A1}.]
There exists a pdf $\gamma:\mathbf R^d\to\mathbf R_+$ such that for all $\theta\in \mathbf R^{d+1}$, $q^1_\theta\di x=\Psi_\theta\#\gamma\di x$, where $\{\Psi_\theta, \theta\in\mathbf R^{d+1}\}$ is a family of $\mathcal C^1$-diffeomorphisms over $\mathbf R^d$ such that for all $z\in\mathbf R^d$, $\theta\in\mathbf R^{d+1}\mapsto \Psi_\theta(z)$ is of class $\mathcal C^\infty$.
Finally, there exists $\mathfrak b:\mathbf R^d\to\mathbf R_+$   such that for all multi-index $\alpha \in \mathbf N^{d+1}$ with $|\alpha|\ge 1$, there exists $C_\alpha>0$, for all $z\in\mathbf R^d$ and $  \theta=(\theta_1,\ldots,\theta_{d+1})\in \mathbf R^{d+1}$,
\begin{equation}\label{jac_T_bounded}
\big| \partial_{\alpha}\Psi_\theta(z)\big|  \leq C_{\alpha} \mathfrak b(z) \ \ \text{ with for all } q\ge 1, \  \langle  \mathfrak b^q, \gamma\rangle <+\infty,
\end{equation}
 where $\partial_\alpha= \partial_{\theta_1}^{\alpha_1}\ldots \partial_{\theta_{d+1}}^{\alpha_{d+1}}$ and $\partial_{\theta_j}^{\alpha_j}$  is the partial derivatives of order $\alpha_j$ w.r.t. to $\theta_j$.

\item[\textbf{A2.}]
The sequence $\{(x_k,y_k)\}_{k\ge 0}$  is  i.i.d.   w.r.t. $  \pi\in\mathcal{P}(\mathsf X\times\mathsf Y)$.
The set $\mathsf X\times\mathsf Y\subset \mathbf R^d\times \mathbf R$ is compact. For all $k\ge0$, $(x_k,y_k)\indep \mathcal F_{k}^N$, where $\mathcal F_k^N$ is defined in \eqref{eq.Fk1}.
\item[\textbf{A3.}]
 The activation function $s:\mathbf R^d\times \mathsf X\to\mathbf R$ belongs to  $\mathcal C^\infty_b(\mathbf R^d\times \mathsf X)$ (the space of smooth functions over $\mathbf R^d\times\mathsf X$ whose derivatives of all order are bounded).
\item[\textbf{A4.}]
 The initial parameters $(\theta_{0}^i)_{i=1}^N$ are i.i.d. w.r.t. $\mu_0\in \mathcal P(\mathbf R^{d+1})$ which has compact support.
\end{enumerate}

Note that {\rm \textbf{A1}} is satisfied  when  $\gamma$ is the pdf of $\mathcal N(0,I_d)$ and $\Psi_\theta(z)=m+g(\rho)z$, with $\mathfrak b(z)=1+|z|$.
With these assumptions, for every fixed $T>0$,  the sequence  $(\{\theta_k^i, i=1,\ldots,N\})_{k=0, \ldots, \lfloor NT \rfloor}$ defined by \eqref{eq.algo-ideal} is a.s.  bounded:

\begin{lemma}[Uniform bound on the parameters]
\label{lem:unif_bound_param}
  Assume {\rm \textbf{A1}}$\to${\rm \textbf{A4}}. Then,
  there exists $C>0$ such that a.s. for all $T>0$, $N\ge 1$,
  $i\in \{1,\dots, N\}$, and $0\leq k\leq \lfloor NT\rfloor$,
  $|\theta_k^i|\leq Ce^{[ C(2+T)]T}$.
\end{lemma}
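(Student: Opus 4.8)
The plan is to establish the bound by a discrete Grönwall-type argument on the quantity $\max_{1\le i\le N}|\theta_k^i|$, controlling the increment contributed by each SGD step in \eqref{eq.algo-ideal} and then iterating over the $\lfloor NT\rfloor$ steps.

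First I would bound each of the three terms appearing in the update \eqref{eq.algo-ideal}. By assumption \textbf{A3}, $s$ is smooth with all derivatives bounded, and by \textbf{A1} the map $\theta\mapsto\Psi_\theta(z)$ is smooth with derivatives dominated by $C_\alpha\mathfrak b(z)$ where $\mathfrak b$ has finite moments of all orders against $\gamma$. Writing $\phi(\theta,z,x)=s(\Psi_\theta(z),x)$ and applying the chain rule, $\phi$ is bounded (since $s$ is bounded) and $\nabla_\theta\phi(\theta,z,x)=\nabla_w s(\Psi_\theta(z),x)\cdot\nabla_\theta\Psi_\theta(z)$ is dominated by $C\mathfrak b(z)$ uniformly in $\theta$ and $x\in\mathsf X$. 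Since $\mathsf X\times\mathsf Y$ is compact by \textbf{A2}, $y_k$ is bounded. Consequently, after integrating against $\gamma$ and using $\langle\mathfrak b,\gamma\rangle<+\infty$, both the interaction term (the sum over $j\neq i$) and the self-interaction term are bounded by a constant $C$ uniformly in $i$, $k$, and $N$: the $1/N^2$ prefactor exactly compensates the $N$ summands, so the interaction term is $O(\eta/N)$ and the self term is $O(\eta/N^2)$. The KL gradient term requires a little more care: from \eqref{eq.kl_1} its components are $\tfrac{1}{\sigma_0^2}(m-m_0)$ and $\tfrac{d}{\sigma_0^2}g'(\rho)g(\rho)-d\tfrac{g'(\rho)}{g(\rho)}$. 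Since $g(\rho)=\log(1+e^\rho)$ has $g'(\rho)=e^\rho/(1+e^\rho)\in(0,1)$, the factor $g'(\rho)g(\rho)$ grows at most linearly in $|\rho|$ while $g'(\rho)/g(\rho)$ stays bounded for $|\rho|$ away from $0$ and the singularity at the origin is removable; hence $|\nabla_\theta\mathscr D_{\rm KL}(q^1_\theta|P_0^1)|\le C(1+|\theta|)$, consistent with the stated ``at most quadratic growth'' of $\mathscr D_{\rm KL}$ itself.

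Collecting these estimates, the per-step increment satisfies
\begin{equation*}
|\theta_{k+1}^i|\le |\theta_k^i|+\frac{C\eta}{N}+\frac{C\eta}{N}\bigl(1+|\theta_k^i|\bigr),
\end{equation*}
so that $\max_i|\theta_{k+1}^i|\le (1+\tfrac{C\eta}{N})\max_i|\theta_k^i|+\tfrac{C\eta}{N}$ for a possibly enlarged constant $C$ absorbing $\eta$. Iterating this affine recursion over $k$ steps and using $(1+a/N)^k\le e^{ak/N}$ gives, for $0\le k\le\lfloor NT\rfloor$,
\begin{equation*}
\max_{1\le i\le N}|\theta_k^i|\le e^{C T}\Bigl(\max_i|\theta_0^i|+C T\Bigr).
\end{equation*}
By \textbf{A4} the initial parameters lie in the compact support of $\mu_0$, so $\max_i|\theta_0^i|$ is bounded by a deterministic constant almost surely. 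Absorbing all constants and the polynomial factor $(2+T)T$ into the exponent yields the claimed bound $|\theta_k^i|\le Ce^{[C(2+T)]T}$.

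**Main obstacle.**

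The principal subtlety, in my view, is not the Grönwall iteration itself but verifying that the interaction term is genuinely $O(1/N)$ rather than $O(1)$: one must check that replacing the full Gaussian integral over $(\mathbf R^d)^N$ by the product structure in \eqref{grad_int0} does not hide a term that fails to contract, and in particular that the coupling between neuron $i$ and the $N-1$ other neurons, each entering through a bounded factor $\langle\phi(\theta_k^j,\cdot,x_k),\gamma\rangle-y_k$, really does sum to $O(1)$ after the $1/N^2$ scaling. The second delicate point is the uniform-in-$\rho$ control of the $\rho$-component of the KL gradient near $\rho\to-\infty$, where $g(\rho)\to 0$ and the term $g'(\rho)/g(\rho)$ could a priori blow up; a careful expansion shows $g'(\rho)/g(\rho)\sim 1/g(\rho)\cdot e^\rho/(1+e^\rho)$ remains controlled because $g'(\rho)$ vanishes at the same rate, preventing the linear-growth bound from degenerating. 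Once these two points are secured, the remaining steps are routine.
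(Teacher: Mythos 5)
Your proposal is correct and follows essentially the same route as the paper: the identical term-by-term estimates (boundedness of $\phi-y$ via \textbf{A2}--\textbf{A3}, the bound $\langle|\nabla_\theta\phi(\theta,\cdot,x)|,\gamma\rangle\le C$ via \textbf{A1}, and the linear-growth bound $|\nabla_\theta\mathscr D_{\rm KL}(q^1_\theta|P_0^1)|\le C(1+|\theta|)$), yielding the per-step increment bound $|\theta_{k+1}^i-\theta_k^i|\le \frac{C}{N}(1+|\theta_k^i|)$, followed by a discrete Gr\"onwall iteration over $\lfloor NT\rfloor$ steps and \textbf{A4} for the initial condition. The only cosmetic difference is that the paper telescopes and invokes a separate comparison lemma (Lemma~\ref{lem_suite_exponentielle}) while you iterate the affine recursion directly on $\max_i|\theta_k^i|$; both give the stated bound.
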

\noindent
Lemma \ref{lem:unif_bound_param} implies that a.s.  for all $T>0$ and $N\ge 1$, $\mu^N \in \mathcal D([0,T],\mathcal P(\Theta_T))$, where
$$\Theta_T=\{\theta \in \mathbf R^{d+1}, |\theta|\le Ce^{[ C(2+T)]T}\}.$$


 \noindent
\textbf{Law of large numbers for $(\mu^N)_{N\ge1}$  defined in \eqref{empirical_distrib1}}.  The first main result of this work is the following.


\begin{theorem}\label{thm.ideal}
Assume  {\rm \textbf{A1}}$\to${\rm \textbf{A4}}. Let $T>0$. Then, the sequence   $(\mu^N)_{N\ge1}\subset \mathcal D([0,T],\mathcal P(\Theta_T))$  defined in \eqref{empirical_distrib1} converges in probability to the unique deterministic solution $\bar\mu\in \mathcal C([0,T],\mathcal P(\Theta_T))$ to the following measure-valued evolution equation: $\forall f\in\mathcal C^\infty(\Theta_T) \text{ and } \forall t\in [0,T],$
\begin{align}\label{eq_limit}
\langle f,\bar\mu_t\rangle-\langle f,\mu_0\rangle&=- \eta\int_{0}^t\int_{\mathsf X\times\mathsf Y}\big \langle\phi(\cdot,\cdot,x)-y,\bar\mu_s\otimes\gamma\big \rangle\big \langle\nabla_\theta f\cdot\nabla_\theta\phi( \cdot ,\cdot,x),\bar\mu_s\otimes\gamma\big \rangle  \pi(\di x,\di y)\di s\nonumber\\
&\quad- \eta\int_0^t\big \langle\nabla_\theta f\cdot \nabla_\theta \mathscr D_{{\rm KL}}(q_{\,_\cdot }^1|P_0^1),\bar\mu_s\big \rangle\di s.
\end{align}
\end{theorem}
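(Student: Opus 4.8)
The plan is to follow the classical two-step program for mean-field limits---tightness of $(\mu^N)$ together with the identification of every limit point as the unique solution of \eqref{eq_limit}---but to carry out the identification and, above all, the uniqueness directly in a Wasserstein metric. The key structural simplification is that, by Lemma~\ref{lem:unif_bound_param}, the whole system lives in the fixed compact set $\Theta_T$, so $\mathcal P(\Theta_T)$ is itself compact and all the functionals of $\mu$ appearing in \eqref{eq_limit} are bounded and Lipschitz.

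First I would derive a semimartingale decomposition for $t\mapsto\langle f,\mu^N_t\rangle$, $f\in\mathcal C^\infty(\Theta_T)$. Writing the telescoping sum $\langle f,\nu^N_{k+1}\rangle-\langle f,\nu^N_k\rangle=\frac1N\sum_i[f(\theta^i_{k+1})-f(\theta^i_k)]$ and Taylor-expanding $f$ to second order, the increment $\theta^i_{k+1}-\theta^i_k$ is $O(1/N)$ by \eqref{eq.algo-ideal}, so the quadratic remainder contributes $O(1/N^2)$ per step and $O(1/N)$ after summing the $\lfloor Nt\rfloor$ steps. For the first-order term I would split each increment into its $\mathcal F^N_k$-conditional expectation plus a martingale increment. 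Using $(x_k,y_k)\indep\mathcal F^N_k$ (Assumption~\textbf{A2}), the conditional expectation integrates the data against $\pi$; the crucial algebraic point is that the double sum $\frac1{N^2}\sum_i\sum_{j\neq i}a_i(x)b_j(x)$ factorizes as $\langle a(\cdot,x),\nu^N_k\rangle\langle b(\cdot,x),\nu^N_k\rangle$ up to the diagonal $j=i$, which is $O(1/N)$. Together with the $O(1/N^2)$ self-interaction term of \eqref{eq.algo-ideal} and the Kullback--Leibler drift, and recognizing the resulting sum over $k$ as a Riemann sum for $\int_0^t\cdots\di s$, this shows the drift of $\langle f,\mu^N_t\rangle$ equals the right-hand side of \eqref{eq_limit} with $\nu^N_{\lfloor Ns\rfloor}$ in place of $\bar\mu_s$, up to an $O(1/N)$ error.

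The martingale part $M^N_t$ has increments of size $O(1/N)$, so its predictable quadratic variation is $O(\lfloor Nt\rfloor/N^2)=O(1/N)$ and $\mathbf E[\sup_{t\le T}|M^N_t|^2]\to0$ by Doob's inequality. Tightness of $(\mu^N)$ in $\mathcal D([0,T],\mathcal P(\Theta_T))$ then follows from the compact containment provided by Lemma~\ref{lem:unif_bound_param} together with tightness of the real-valued processes $\langle f,\mu^N\rangle$ for $f$ in a countable determining family, which the increment bounds above supply; since the jumps are $O(1/N)$, every limit point is supported on $\mathcal C([0,T],\mathcal P(\Theta_T))$. Passing to the limit along a convergent subsequence is justified by the continuity on the compact $\Theta_T$ of $\mu\mapsto\langle\phi(\cdot,\cdot,x)-y,\mu\otimes\gamma\rangle$ and $\mu\mapsto\langle\nabla_\theta f\cdot\nabla_\theta\phi(\cdot,\cdot,x),\mu\otimes\gamma\rangle$ (boundedness and smoothness of $\phi$ from \textbf{A1}, \textbf{A3}), so that every limit point $\bar\mu$ satisfies \eqref{eq_limit}; the vanishing of $M^N$ makes $\bar\mu$ deterministic.

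The main obstacle, and the place where the Wasserstein viewpoint replaces the Sobolev arguments of \cite{descours2022law}, is uniqueness for \eqref{eq_limit}. I would recognize \eqref{eq_limit} as the weak form of the McKean--Vlasov flow $\dot\theta_t=b(\theta_t,\bar\mu_t)$, $\theta_0\sim\mu_0$, $\bar\mu_t=\mathrm{Law}(\theta_t)$, with
\begin{equation*}
b(\theta,\mu)=-\eta\int_{\mathsf X\times\mathsf Y}\langle\phi(\cdot,\cdot,x)-y,\mu\otimes\gamma\rangle\,\langle\nabla_\theta\phi(\theta,\cdot,x),\gamma\rangle\,\pi(\di x,\di y)-\eta\,\nabla_\theta\mathscr D_{{\rm KL}}(q^1_\theta|P_0^1).
\end{equation*}
On $\Theta_T$ the field $b$ is bounded and Lipschitz in $\theta$ (smoothness of $\nabla_\theta\phi$ and the explicit Gaussian form of $\nabla_\theta\mathscr D_{{\rm KL}}$, with $g(\rho)$ bounded away from $0$ on the compact), and since its $\mu$-dependence enters only through the linear functional $\langle\phi(\cdot,\cdot,x)-y,\mu\otimes\gamma\rangle$ with bounded Lipschitz integrand, it is also Lipschitz in $\mu$ for $W_1$. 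Given two solutions $\bar\mu^1,\bar\mu^2$, a synchronous coupling started from the same $\theta_0$ gives, with $u(t)=\mathbf E|\theta^1_t-\theta^2_t|$ and $W_1(\bar\mu^1_s,\bar\mu^2_s)\le u(s)$, the bound $u(t)\le C\int_0^t u(s)\,\di s$, whence $u\equiv0$ by Grönwall and $\bar\mu^1=\bar\mu^2$; the same contraction yields existence by Picard iteration in $\mathcal C([0,T],\mathcal P(\Theta_T))$. Since $(\mu^N)$ is tight and all its limit points coincide with this unique deterministic $\bar\mu$, the sequence converges in distribution, hence in probability, to $\bar\mu$, which is the claim.
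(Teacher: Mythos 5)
Your plan for the first three steps---the pre-limit equation obtained by Taylor expansion of the telescoping sum, the factorization of the double sum up to the $O(1/N)$ diagonal, the $O(1/N)$ bounds on the remainder and on the martingale's quadratic variation, tightness in $\mathcal D([0,T],\mathcal P(\Theta_T))$ from compact containment (Lemma~\ref{lem:unif_bound_param}) plus tightness of the real-valued projections $\langle f,\mu^N\rangle$, continuity of limit points because the jumps are $O(1/N)$, and the passage to the limit by continuity of the relevant functionals on the compact $\Theta_T$---coincides with what the paper does in Sections~\ref{sec.pre-eq-1}--\ref{sec.LP-1}. The divergence is in the uniqueness step, and there your argument, as written, has a genuine gap.

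Your synchronous coupling defines $\theta^1_t,\theta^2_t$ by the ODEs $\dot\theta^i_t=b(\theta^i_t,\bar\mu^i_t)$ with the two given solutions $\bar\mu^1,\bar\mu^2$ of \eqref{eq_limit} frozen in the drift, and then uses the inequality $\mathsf W_1(\bar\mu^1_s,\bar\mu^2_s)\le u(s)=\mathbf E|\theta^1_s-\theta^2_s|$. That inequality is valid only if $(\theta^1_s,\theta^2_s)$ is a coupling of $(\bar\mu^1_s,\bar\mu^2_s)$, i.e.\ only if $\mathrm{Law}(\theta^i_s)=\bar\mu^i_s$; but this identification is precisely what must be proved. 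A priori, $\bar\mu^i$ is merely a weak (distributional) solution of the continuity equation $\partial_t\bar\mu^i_t=\mathrm{div}\big(\boldsymbol v[\bar\mu^i_t]\bar\mu^i_t\big)$, while $\mathrm{Law}(\theta^i_t)$ is another weak solution of the same \emph{linear} equation with the same initial datum; equating the two requires a representation (superposition) theorem for continuity equations driven by time-dependent Lipschitz vector fields. This is exactly the ingredient the paper imports: it first verifies that $\theta\mapsto\boldsymbol v[\mu](\theta)$ has uniformly bounded Jacobian on all of $\mathbf R^{d+1}$ and that $t\mapsto\boldsymbol v[\bar\mu_t](\theta)$ is Lipschitz, and only then invokes \cite[Theorem 5.34]{villani2021topics} to get $\bar\mu_t=\phi_t\#\mu_0$ with $\phi_t$ the flow of the field; the Wasserstein stability estimates of \cite{piccoli2016properties} (your Gr\"onwall loop) are applied \emph{after} this Lagrangian representation is available. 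Without that step your argument is circular: it assumes the probabilistic representation of an arbitrary weak solution, which is the actual crux of uniqueness for measure-valued equations of this type. The gap is fixable---the Lipschitz bounds you verified in $\theta$ and in $\mu$ (w.r.t.\ $\mathsf W_1$) are the hypotheses of the representation theorem---but note one further point: the field must be Lipschitz and defined globally on $\mathbf R^{d+1}$ (the paper gets this from the global boundedness of $\mathrm{Hess}_\theta\,\mathscr D_{\rm KL}$ and of $\langle|\mathrm{Hess}_\theta\phi(\cdot,z,x)|,\gamma\rangle$), not merely on $\Theta_T$ as in your sketch, since otherwise the flow $\phi_t$ is not even defined off the compact.
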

The proof of Theorem \ref{thm.ideal} is given in Appendix
\ref{sec.proof1}. We stress here the most important steps and used
techniques. In a first step, we derive an identity satisfied by
$(\mu^N)_{N\ge 1}$, namely the pre-limit
equation~\eqref{eq.pre_limit}; see Sec.~\ref{sec.pre-eq-1}.  Then we
show in Sec.~\ref{sec.RC-1} that $(\mu^N)_{N\ge 1}$ is relatively
compact in $\mathcal D([0,T],\mathcal P(\Theta_T))$.
To do so, we check that the sequence $(\mu^N)_{N\ge 1}$ satisfies all the required assumptions of \cite[Theorem 3.1]{jakubowski1986skorokhod} when $E= \mathcal P(\Theta_T)$ there.
In
Sec.~\ref{sec.LP-1} we prove that every limit point of
$(\mu^N)_{N\ge 1}$ satisfies the limit equation~\eqref{eq_limit}. Then, in Section \ref{sec.U-1},
we prove that there is a unique solution of the measure-valued equation~\eqref{eq_limit}.
To prove the uniqueness of the solution of~\eqref{eq_limit},
we use techniques  developed in~\cite{piccoli2015control} which are based on
a  representation formula for solution to measure-valued equations~\cite[Theorem 5.34]{villani2021topics} together with estimates in  Wasserstein distances between two solutions of~\eqref{eq_limit} derived in \cite{piccoli2016properties}.
In  Section \ref{sec.U-1}, we also conclude the
proof of Theorem~\ref{thm.ideal}.  Compared
to~\cite[Theorem~1]{descours2022law}, the fact that
$(\{\theta_k^i, i=1,\ldots,N\})_{k=0, \ldots, \lfloor NT \rfloor}$
defined by \eqref{eq.algo-ideal} are a.s.  bounded allows to use
different and more straightforward arguments to prove (i) the relative
compactness in $\mathcal D([0,T],\mathcal P(\Theta_T))$ of
$(\mu^N)_{N\ge1}$ (defined in~\eqref{empirical_distrib1}) (ii) the
continuity property of the operator
$\mathsf m\mapsto \boldsymbol{\Lambda}_t[f](\mathsf m)$ defined in
\eqref{d-lambda} w.r.t. the topology of
$\mathcal D([0,T],\mathcal P(\Theta_T))$ and (iii) $(\mu^N)_{N\ge 1}$
has limit points in $\mathcal C([0,T],\mathcal P(\Theta_T))$. Step
(ii) is necessary in order to pass to the limit $N\to +\infty$ in the
pre-limit equation and Step (iii) is crucial since  we prove that  there is at most
one solution of~\eqref{eq_limit} in
$\mathcal C([0,T],\mathcal P(\Theta_T))$.  It is worthwhile to
emphasize that, as $N \to \infty$, the effects of the integrated loss
and of the KL terms are balanced, as conjectured in \cite{huix}.




{To avoid further technicalities, we have chosen what may seem restrictive assumptions on the data or the activation function. Note however that it readily extends to unbounded set  $\mathsf X$, and also unbounded $\mathsf Y$ assuming that $\pi$ as polynomial moments of sufficiently high order. Also, RELU (or more easily leaky RELU) may be considered by using weak derivatives (to consider the singularity at 0), and a priori moment bounds on the weights.}

\section{LLN for the \emph{Bayes-by-Backprop} SGD}
\label{sec:BbB}
The sequence $\{\theta_k^i, i\in \{1,\ldots N\}\}_{k=0, \ldots, \lfloor NT \rfloor}$ defined recursively by the algorithm~\eqref{eq.algo-batch} is in general not bounded, since $\nabla_\theta\phi(\theta ,\mathsf Z, x)$ is not necessarily bounded if $\mathsf Z\sim \gamma(s)\di z$. Therefore, we cannot expect Lemma \ref{lem:unif_bound_param} to hold for $\{\theta_k^i, i\in \{1,\ldots N\}\}_{k=0, \ldots, \lfloor NT \rfloor}$ set by~\eqref{eq.algo-batch}. Thus, the sequence $\{\theta_k^i, i\in \{1,\ldots N\}\}_{k=0, \ldots, \lfloor NT \rfloor}$ is considered on the whole space~$\mathbf R^{d+1}$.\\
\textbf{Wasserstein spaces and results}.
For $N\ge1$,  and   $k\ge 1$,  we set
\begin{equation}\label{eq.Fk2}
\mathcal F_k^N=\boldsymbol\sigma \Big (\theta_{0}^i ,   \mathsf Z^{j,\ell}_q,(x_q,y_q),  1\leq i,j\leq N, 1\leq\ell\leq B, 0\le q\le k-1\big \} \Big ).
\end{equation}
In addition to  {\rm \textbf{A1}}$\to${\rm \textbf{A4}}  (where in {\rm \textbf{A2}}, when $k\ge 1$, $\mathcal F_k^N$ is now the one defined in \eqref{eq.Fk2}),
we assume:
 \begin{enumerate}
\item[\textbf{A5}.]   The sequences $(\mathsf Z^{j,\ell}_k,1\leq j\leq N, 1\leq\ell\leq B, k\ge 0)$ and $((x_k,y_k), k\ge 0)$ are independent. In addition, for $k\ge 0$, $\big((x_k,y_k),\mathsf Z^{j,\ell}_k, 1\leq j\leq N, 1\leq\ell\leq B\big)\indep  \mathcal F_k^N$.
 \end{enumerate}

\noindent
Note that the last statement of \textbf{A5} implies the last statement of {\rm \textbf{A2}}.
We introduce the scaled empirical distribution of the parameters of the algorithm \eqref{eq.algo-batch}, i.e. for  $k\ge 0$ and $t\ge 0$:
\begin{equation}\label{empirical_distrib2}
\nu_k^N:=\frac 1N\sum_{i=1}^N\delta_{\theta_k^i} \ \ \text{and} \ \ \mu_t^N:=\nu_{\lfloor Nt\rfloor}^N, \text{ where the  $\theta^i_k$'s are defined  \eqref{eq.algo-batch}}.
\end{equation}
One can no longer rely on the existence of a compact subset $\Theta_T\subset \mathbf R^{d+1}$ such that a.s. $(\mu^N)_{N\ge1}\subset\mathcal D([0,T], \mathcal P(\Theta_T))$, where $\mu^N=\{t\ge 0\mapsto \mu_t^N\}$ is defined in \eqref{empirical_distrib2}. For this reason, we will work in Wasserstein spaces $\mathcal P_q(\mathbf R^{d+1})$, $q\ge 0$, which, we recall, are defined by
\begin{equation}\label{eq.Wq}
\mathcal P_q(\mathbf R^{d+1})=\Big\{ \nu  \in  \mathcal P(\mathbf R^{d+1}), \int_{\mathbf R^{d+1}} |\theta|^q \nu (\di \theta)<+\infty\Big\}.
\end{equation}
These spaces are endowed with the Wasserstein metric $\mathsf W_q$, see e.g.~\cite[Chapter 5]{santambrogio2015optimal} for more materials on Wasserstein spaces. For all $q\ge 0$,  $(\mu^N)_{N\ge1}\subset \mathcal D(\mathbf R_+,\mathcal P_q(\mathbf R^{d+1}))$.
The second main results   of this work is a LLN for   $(\mu^N)_{N\ge1}$ defined in \eqref{empirical_distrib2}.
\begin{theorem}\label{thm.z1zN}
Assume  {\rm \textbf{A1}}$\to${\rm \textbf{A5}}. Let $\gamma_0> 1+ \frac{d+1}{2}$.  Then, the sequence $(\mu^N)_{N\ge1}$ defined in~\eqref{empirical_distrib2}   converges in probability in $\mathcal D(\mathbf R_+,\mathcal P_{\gamma_0}(\mathbf R^{d+1}))$ to a deterministic element $\bar\mu\in \mathcal D(\mathbf R_+,\mathcal P_{\gamma_0}(\mathbf R^{d+1}))$, where  $\bar\mu\in \mathcal C(\mathbf R_+,\mathcal P_{1}(\mathbf R^{d+1}))$ is the unique solution in $\mathcal C(\mathbf R_+,\mathcal P_{1}(\mathbf R^{d+1}))$ to the following measure-valued evolution equation:$ \forall f\in \mathcal C^\infty_b(\mathbf R^{d+1}) \text{ and } \forall t\in \mathbf R_+$,
\begin{align}
\label{eq.P2}
\langle f,\bar\mu_t\rangle-\langle f,\mu_0\rangle&=- \eta\int_{0}^t\int_{\mathsf X\times\mathsf Y}\big \langle\phi(\cdot,\cdot,x)-y,\bar\mu_s\otimes\gamma\big \rangle\big \langle\nabla_\theta f\cdot\nabla_\theta\phi( \cdot ,\cdot,x),\bar\mu_s\otimes\gamma\big \rangle  \pi(\di x,\di y)\di s\nonumber\\
&\quad- \eta\int_0^t\big \langle\nabla_\theta f\cdot \nabla_\theta \mathscr D_{{\rm KL}}(q_{\,_\cdot }^1|P_0^1),\bar\mu_s\big \rangle\di s.
\end{align}
\end{theorem}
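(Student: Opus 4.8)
The plan is to re-run the four-step programme used for Theorem~\ref{thm.ideal} --- pre-limit equation, relative compactness, identification of limit points, uniqueness --- while accommodating the two new features of \eqref{eq.algo-batch}: the parameters are no longer pathwise bounded (Lemma~\ref{lem:unif_bound_param} fails), and the increments carry extra randomness from the Monte Carlo samples $\mathsf Z_k^{j,\ell}$. Accordingly I would work throughout in the Wasserstein spaces $\mathcal P_q(\mathbf R^{d+1})$ and replace the a.s.\ bound by uniform-in-$N$ moment estimates. \textbf{Moment bounds.} From \eqref{eq.algo-batch}, Assumptions \textbf{A1} and \textbf{A3} give $|\nabla_\theta\phi(\theta,z,x)|\le C\mathfrak b(z)$ and $|\phi|\le\|s\|_\infty$, while \eqref{eq.kl_1} shows $|\nabla_\theta\mathscr D_{{\rm KL}}(q_\theta^1|P_0^1)|\le C(1+|\theta|)$. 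Hence the one-step increment satisfies $|\theta_{k+1}^i-\theta_k^i|\le \tfrac{\eta C}{N}(1+|\theta_k^i|+\Xi_k^i)$, where $\Xi_k^i=\tfrac1B\sum_\ell\mathfrak b(\mathsf Z_k^{i,\ell})$ is independent of $\theta_k^i$ and has finite moments of all orders by \eqref{jac_T_bounded}. A discrete Grönwall argument over the $O(N)$ steps in $[0,T]$, using that $\mu_0$ has compact support (\textbf{A4}), then yields, for every $q\ge1$, a bound $\sup_N\sup_{k\le\lfloor NT\rfloor}\mathbf E[\langle|\theta|^q,\nu_k^N\rangle]\le C(q,T)$; choosing $q>\gamma_0$ provides the uniform integrability that underlies relative compactness in $\mathcal P_{\gamma_0}(\mathbf R^{d+1})$.

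\textbf{Pre-limit equation and martingale decomposition.} For $f\in\mathcal C_b^\infty(\mathbf R^{d+1})$ a second-order Taylor expansion of $\langle f,\nu_{k+1}^N\rangle-\langle f,\nu_k^N\rangle$ produces a first-order term plus a remainder that is $O(1/N^2)$ per step, hence $O(1/N)$ after summation over $[0,T]$. The crucial point is that, conditionally on $\mathcal F_k^N$, averaging the first-order term over the samples $(\mathsf Z_k^{j,\ell})$ reproduces exactly the idealized drift of \eqref{eq.algo-ideal}: by independence (\textbf{A5}) the off-diagonal indices $j\ne i$ yield $(\langle\phi(\theta_k^j,\cdot,x_k),\gamma\rangle-y_k)\langle\nabla_\theta\phi(\theta_k^i,\cdot,x_k),\gamma\rangle$, whereas the diagonal index $j=i$ yields the self-interaction term; a further conditional average over $(x_k,y_k)\sim\pi$ then produces the two integrands of \eqref{eq.P2}. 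Summing over $k\le\lfloor Nt\rfloor$, the off-diagonal contribution converges to the integral term of \eqref{eq.P2} while the diagonal self-interaction is $O(1/N)$ and disappears. The difference between the increment and its conditional mean defines a martingale $M^N$ whose increments are $O(1/N)$ with conditional variance $O(1/N^2)$; summing the $O(N)$ steps gives $\mathbf E[\sup_{t\le T}|M_t^N|^2]=O(1/N)\to0$, so both the Monte Carlo noise and the data-sampling noise vanish in the limit.

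\textbf{Relative compactness, identification and uniqueness.} I would verify the hypotheses of \cite[Theorem~3.1]{jakubowski1986skorokhod} with $E=\mathcal P_{\gamma_0}(\mathbf R^{d+1})$: the compact-containment condition follows from the moment bounds above, and the temporal regularity of $t\mapsto\langle f,\mu_t^N\rangle$ from the pre-limit equation. Passing to the limit along a convergent subsequence, discarding the remainder and martingale terms and using continuity of the drift functionals in $\mathsf W_{\gamma_0}$, shows that every limit point lies in $\mathcal C(\mathbf R_+,\mathcal P_1)$ and solves \eqref{eq.P2}. For uniqueness in $\mathcal C(\mathbf R_+,\mathcal P_1(\mathbf R^{d+1}))$ I would reuse the machinery of \cite{piccoli2015control,piccoli2016properties}, reading \eqref{eq.P2} as a continuity equation driven by the velocity field
\[
b_s[\mathsf m](\theta)=-\eta\int_{\mathsf X\times\mathsf Y}\langle\phi(\cdot,\cdot,x)-y,\mathsf m\otimes\gamma\rangle\,\langle\nabla_\theta\phi(\theta,\cdot,x),\gamma\rangle\,\pi(\di x,\di y)-\eta\,\nabla_\theta\mathscr D_{{\rm KL}}(q_\theta^1|P_0^1),
\]
checking that it is Lipschitz in $\theta$ and $\mathsf W_1$-Lipschitz in $\mathsf m$ with at most linear growth, and closing a Grönwall estimate on $\mathsf W_1(\bar\mu_t,\bar\mu_t')$ through the representation formula \cite[Theorem~5.34]{villani2021topics}. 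Relative compactness together with a unique limit point then upgrades to convergence in probability in $\mathcal D(\mathbf R_+,\mathcal P_{\gamma_0}(\mathbf R^{d+1}))$.

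\textbf{Main obstacle.} The genuinely new difficulty, compared with Theorem~\ref{thm.ideal}, is to run the whole argument in the non-locally-compact space $\mathcal P_{\gamma_0}(\mathbf R^{d+1})$: one must propagate moments of order $>\gamma_0$ uniformly in $N$ despite the unbounded Monte Carlo increments, and extract from them the compact-containment condition for tightness, all without the crutch of a fixed compact $\Theta_T$. The linear growth of $\nabla_\theta\mathscr D_{{\rm KL}}$ likewise forces the uniqueness argument to handle an unbounded velocity field, which is precisely where the a priori $\mathcal P_1$ moment control and the hypothesis $\gamma_0>1+\tfrac{d+1}{2}$ become essential.
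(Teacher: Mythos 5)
Your proposal follows essentially the same route as the paper's proof: uniform moment bounds via a discrete Gr\"onwall argument (the paper's Lemma~\ref{lem:moment_param}), the pre-limit equation with martingale decomposition, Jakubowski-type tightness in $\mathcal D(\mathbf R_+,\mathcal P_{\gamma_0}(\mathbf R^{d+1}))$ with compact containment extracted from higher-order moments, identification of limit points as solutions of \eqref{eq.P2}, continuity of limit points as $\mathcal P_1$-valued paths, and uniqueness through the flow representation of \cite[Theorem 5.34]{villani2021topics} combined with the Wasserstein estimates of \cite{piccoli2016properties}. The only minor discrepancies are that the paper invokes \cite[Theorem 4.6]{jakubowski1986skorokhod} (rather than Theorem 3.1) for the whole-line time horizon, and that the a.s.\ continuity of limit points in $\mathcal C(\mathbf R_+,\mathcal P_1(\mathbf R^{d+1}))$ is obtained there from jump estimates via the $\mathsf W_1$ duality formula and \cite[Proposition 3.26 in Chapter VI]{jacod2003skorokhod} rather than from continuity of the drift functionals, but neither point changes the substance of the argument.
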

Theorem \ref{thm.z1zN} is proved in the appendix \ref{sec.proof2}.
Since~$\{\theta_k^i, i\in \{1,\ldots N\}\}_{k=0, \ldots, \lfloor NT \rfloor}$ defined by \eqref{eq.algo-batch} is not bounded in general, we work in the space $\mathcal D(\mathbf R_+, \mathcal P_{\gamma_0}(\mathbf R^{d+1}))$. The proof of Theorem \ref{thm.z1zN} is more involved than that of Theorem~\ref{thm.ideal}, and generalizes the latter to the case where the parameters of the SGD algorithm are unbounded.
We prove that $(\mu^N)_{N\ge1}$ (defined in \eqref{empirical_distrib2}) is relatively compact in $\mathcal D(\mathbf R_+, \mathcal P_{\gamma_0}(\mathbf R^{d+1}))$. To this end we now use \cite[Theorem 4.6]{jakubowski1986skorokhod}. The compact containment, which is the purpose of Lemma \ref{lem_cc_mu^N},  is not straightforward since  $\mathcal P_{\gamma_0}(\mathbf R^{d+1})$ is not compact contrary to  Theorem \ref{thm.ideal}  where we used the  compactness of  $\mathcal P(\Theta_T)$. More precisely, the compact containment  of $(\mu^N)_{N\ge 1}$ relies on a characterization of the compact subsets of  $\mathcal P_{\gamma_0}(\mathbf R^{d+1})$ (see Proposition \ref{prop_rc_inP}) and moment estimates on  $\{\theta_k^i, i\in \{1,\ldots N\}\}_{k=0, \ldots, \lfloor NT \rfloor}$ (see Lemma~\ref{lem:moment_param}).
We also mention that contrary to what is done in the proof of Theorem \ref{thm.ideal}, we do not show that every limit point  of $(\mu^N)_{N\ge1}$ in $\mathcal D(\mathbf R_+, \mathcal P_{\gamma_0}(\mathbf R^{d+1}))$ is continuous in time but we still manage to  prove that they all satisfy  \eqref{eq.P2}.  Then, using the duality formula for the $\mathsf W_1$-distance   together with rough estimates on the jumps of $t\mapsto \langle f, \mu_t^N\rangle$ (for $f$ uniformly Lipschitz over $\mathbf R^{d+1}$), we then show that    every limit point  of $(\mu^N)_{N\ge1}$ in $\mathcal D(\mathbf R_+, \mathcal P_{\gamma_0}(\mathbf R^{d+1}))$ belongs a.s. to $\mathcal C(\mathbf R_+, \mathcal P_{1}(\mathbf R^{d+1}))$. Again this is important since we have uniqueness of \eqref{eq.P2} in $\mathcal C(\mathbf R_+, \mathcal P_{1}(\mathbf R^{d+1}))$.

We conclude this section with the following important uniqueness result.
\begin{proposition}\label{pr.u}
Under the assumptions of Theorems \ref{thm.ideal} and \ref{thm.z1zN}, the solution to \eqref{eq_limit} is independent of $T$ and is equal to the solution to \eqref{eq.P2}.
\end{proposition}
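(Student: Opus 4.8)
The plan is to take the unique solution $\bar\mu$ of \eqref{eq.P2} in $\mathcal C(\mathbf R_+,\mathcal P_1(\mathbf R^{d+1}))$ as the master object and to show that, for every fixed $T>0$, its restriction to $[0,T]$ is an admissible solution of \eqref{eq_limit} in $\mathcal C([0,T],\mathcal P(\Theta_T))$; the statement then follows at once from the uniqueness part of Theorem~\ref{thm.ideal}. What makes this possible is that \eqref{eq_limit} and \eqref{eq.P2} have \emph{exactly the same} right-hand side: only the ambient space ($\mathcal P(\Theta_T)$ versus $\mathcal P_1(\mathbf R^{d+1})$) and the class of test functions ($\mathcal C^\infty(\Theta_T)$ versus $\mathcal C^\infty_b(\mathbf R^{d+1})$) differ. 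Hence everything reduces to proving that the $\mathcal P_1$-solution is, on $[0,T]$, carried by the compact set $\Theta_T$.

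To obtain this support property I would freeze $\bar\mu_s$ in the drift and introduce the time-dependent velocity field
\[
b_s(\theta)=-\eta\int_{\mathsf X\times\mathsf Y}\big\langle\phi(\cdot,\cdot,x)-y,\bar\mu_s\otimes\gamma\big\rangle\,\big\langle\nabla_\theta\phi(\theta,\cdot,x),\gamma\big\rangle\,\pi(\di x,\di y)-\eta\,\nabla_\theta\mathscr D_{{\rm KL}}(q^1_\theta|P_0^1).
\]
Under \textbf{A1} and \textbf{A3} the factor $\langle\nabla_\theta\phi(\theta,\cdot,x),\gamma\rangle$ is smooth and bounded (the bound $|\partial_\alpha\Psi_\theta|\le C_\alpha\mathfrak b$ with $\langle\mathfrak b^q,\gamma\rangle<\infty$ combined with $s\in\mathcal C^\infty_b$), whereas $\nabla_\theta\mathscr D_{{\rm KL}}$ is affine in $m$ and has at most linear growth in $\rho$ (since $g'g$ grows linearly while $g'/g$ stays bounded). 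Thus $b_s$ is locally Lipschitz with at most linear growth, uniformly in $s$, so its non-autonomous flow $\Phi_{0,t}$ is globally defined, and the representation formula already used for uniqueness (\cite[Theorem~5.34]{villani2021topics} together with \cite{piccoli2016properties}) gives $\bar\mu_t=(\Phi_{0,t})_\#\mu_0$. Since $\mu_0$ has compact support by \textbf{A4}, a Grönwall estimate along characteristics reproduces the bound of Lemma~\ref{lem:unif_bound_param}, namely $|\Phi_{0,t}(\theta_0)|\le Ce^{[C(2+T)]T}$ for $t\le T$, so that $\mathrm{supp}(\bar\mu_t)\subseteq\Theta_T$ for all $t\in[0,T]$.

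With the support inclusion in hand, fix $f\in\mathcal C^\infty(\Theta_T)$ and extend it to $\tilde f\in\mathcal C^\infty_b(\mathbf R^{d+1})$ with $\tilde f=f$ on a neighbourhood of $\Theta_T$. Because $\bar\mu_s$ is supported in $\Theta_T$ for every $s\le T$, all the brackets against $\bar\mu_s$ and $\bar\mu_s\otimes\gamma$ in \eqref{eq.P2} only see the values of $\tilde f$ and $\nabla_\theta\tilde f$ on $\Theta_T$, where they coincide with $f$ and $\nabla_\theta f$; hence \eqref{eq.P2} written for $\tilde f$ is precisely \eqref{eq_limit} written for $f$. Therefore $\{\bar\mu_t,\ t\in[0,T]\}\in\mathcal C([0,T],\mathcal P(\Theta_T))$ solves \eqref{eq_limit}, and by uniqueness in Theorem~\ref{thm.ideal} it \emph{is} the solution of \eqref{eq_limit}. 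This yields both assertions simultaneously: the solution of \eqref{eq_limit} on $[0,T]$ equals $\bar\mu|_{[0,T]}$, hence equals the restriction of the solution of \eqref{eq.P2}; and since $\bar\mu$ is a single $T$-independent object, for $T'\le T$ the solution on $[0,T]$ restricts to the one on $[0,T']$, which is the precise meaning of independence of $T$.

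The step I expect to be the main obstacle is the support containment, specifically matching the Grönwall constant for the continuous characteristics to the constant $C$ entering the definition of $\Theta_T$, which was produced for the \emph{discrete} idealized SGD in Lemma~\ref{lem:unif_bound_param}. One must verify that the drift bounds controlling the continuous flow are dominated by those governing the discrete recursion so that the reachable set up to time $T$ genuinely sits inside $\Theta_T$; should the constants fail to match verbatim, one first proves the identity on a possibly larger ball and then invokes the fact that uniqueness for \eqref{eq_limit} holds on any such compact set. A secondary, more routine point is the rigorous justification of $\bar\mu_t=(\Phi_{0,t})_\#\mu_0$ on all of $\mathbf R^{d+1}$ for the $\mathcal P_1$-solution, which is exactly where the local-Lipschitz and linear-growth properties of $b_s$ are used.
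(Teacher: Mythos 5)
Your proposal is correct in substance, but it runs the argument in the opposite direction from the paper, and that direction is the laborious one. The paper's proof is essentially two lines: any solution of \eqref{eq_limit} lives in $\mathcal C([0,T],\mathcal P(\Theta_T))$, hence is \emph{automatically} compactly supported and therefore belongs to $\mathcal C([0,T],\mathcal P_1(\mathbf R^{d+1}))$; since every test function $f\in\mathcal C^\infty_b(\mathbf R^{d+1})$ of \eqref{eq.P2} restricts to an element of $\mathcal C^\infty(\Theta_T)$ and the brackets only see the support, the \eqref{eq_limit}-solution satisfies \eqref{eq.P2} on $[0,T]$, and Proposition \ref{pr.uni2} (whose proof gives agreement of two solutions of \eqref{eq.P2} on every interval $[0,T]$) concludes. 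In other words, the embedding of solution classes is trivial in the direction ``compactly supported $\to\ \mathcal P_1$'', because compact support is built into the definition of a solution of \eqref{eq_limit}. You instead embed in the direction ``$\mathcal P_1\to$ compactly supported'', which forces you to prove a support-propagation bound for the $\mathcal P_1$-solution $\bar\mu$ (via the flow representation and Gr\"onwall), to worry about whether the resulting radius fits inside $\Theta_T$, and, in the fallback case, to re-run the uniqueness argument of Proposition \ref{p-uniq} on a larger compact set. All of these steps can be closed with ingredients the paper provides: the representation $\bar\mu_t=\phi_t\#\mu_0$ is established in the proof of Proposition \ref{pr.uni2} (see \eqref{eq.vi2}), the drift has linear growth by \eqref{eq.kl_2}, \eqref{borne_phi-y} and \eqref{borne nablaphi}, and the uniqueness proof of Proposition \ref{p-uniq} uses nothing about $\Theta_T$ beyond its compactness, so your hedge about mismatched constants is handled correctly. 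What your route buys, beyond the statement itself, is an explicit a priori confinement of the mean-field limit to a compact set, which is of some independent interest; but as a proof of Proposition \ref{pr.u} it is considerably longer than necessary, since the single observation that a $\mathcal P(\Theta_T)$-valued solution is already a $\mathcal P_1$-valued solution of \eqref{eq.P2} makes the entire support analysis superfluous.
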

This uniqueness result states that both idealized and
\emph{Bayes-by-backprop} SGD have the same limiting behavior. It is also noteworthy that the mini-batch $B$ is held fixed $B$. The effect of batch size can be seen at the level of the central limit theorem, which we leave for future work.

\section{The \emph{Minimal-VI} SGD algorithm}\label{sec:minimal}

The idea behing the \emph{Bayes-by-Backprop} SGD stems from the fact
that there are integrals wrt $\gamma$ in the loss function that cannot
be computed in practice and it is quite natural up to a
reparameterization trick, to replace these integrals by a Monte Carlo
approximation (with i.i.d. gaussian random variables).  To devise a
new cheaper algorithm based on the only terms impacting the asymptotic
limit, we directly analyse the limit equation \eqref{eq_limit} and remark that it can be rewritten as,
$\forall f\in\mathcal C^\infty(\Theta_T) \text{ and } \forall t\in
[0,T],$
\begin{align*}
&\langle f,\bar\mu_t\rangle-\langle f,\mu_0\rangle\\
&=- \eta\int_{0}^t\int_{\mathsf X\times\mathsf Y\times (\mathbf{R}^d)^2}\big \langle\phi(\cdot,z_1,x)-y,\bar\mu_s\big \rangle\big \langle\nabla_\theta f\cdot\nabla_\theta\phi( \cdot ,z_2,x),\bar\mu_s\big \rangle  \gamma^{\otimes 2}(\di z_1\di z_2)\pi(\di x,\di y)\di s\\
&\quad- \eta\int_0^t\big \langle\nabla_\theta f\cdot \nabla_\theta \mathscr D_{{\rm KL}}(q_{\,_\cdot }^1|P_0^1),\bar\mu_s\big \rangle\di s.
\end{align*}
Thus, the integration over $\gamma^{\otimes 2}$ can be considered as that over $\pi$, i.e., we can consider them as two more data variables that only need to be sampled at each new step. In this case, the SGD ~\eqref{eq.algo-batch}  becomes: for $k\ge 0$ and $i\in\{1,\dots,N\}$,
\begin{equation}\begin{cases}\label{eq.algo-z1z2}
&\displaystyle \theta_{k+1}^i=\theta_k^i\displaystyle -\frac{\eta}{N^2}\sum_{j=1}^N \big (\phi(\theta_k^j,\mathsf Z^{1}_{k},x_k)-y_k\big )\nabla_\theta\phi(\theta_k^i,\mathsf Z^{2}_k,x_k)
-\frac{\eta}{N}\nabla_\theta \mathscr D_{{\rm KL}}(q_{\theta^i_k}^1|P_0^1)\\
 &\theta_{0}^i=(m_{0}^i,\rho_{0}^i)\sim \mu_0,
\end{cases}\end{equation}
where $\eta>0$ and $(\mathsf Z^{p}_k, p\in \{1,2\}, k\ge 0)$ is a
i.i.d sequence of random variables distributed according to
$\gamma^{\otimes2}$.  We call this backpropagation scheme
\emph{minimal- VI SGD } which is much cheaper in terms of computational complexity, with the same limiting behavior as we now discuss.
\begin{figure}
    \centering
    \includegraphics[width=0.7\columnwidth]
    {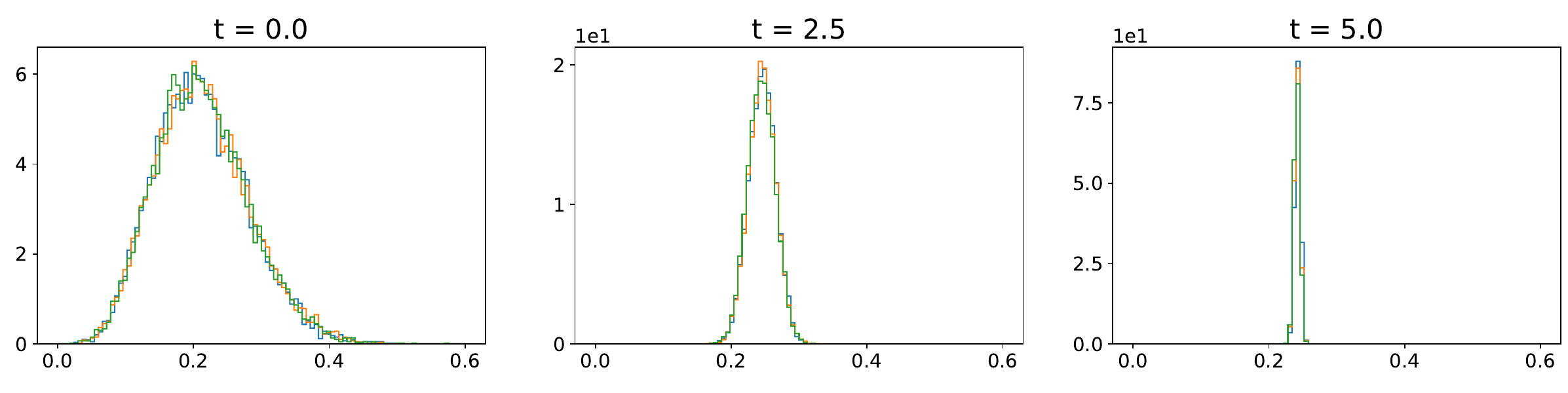}
    \includegraphics[width=0.7\columnwidth]{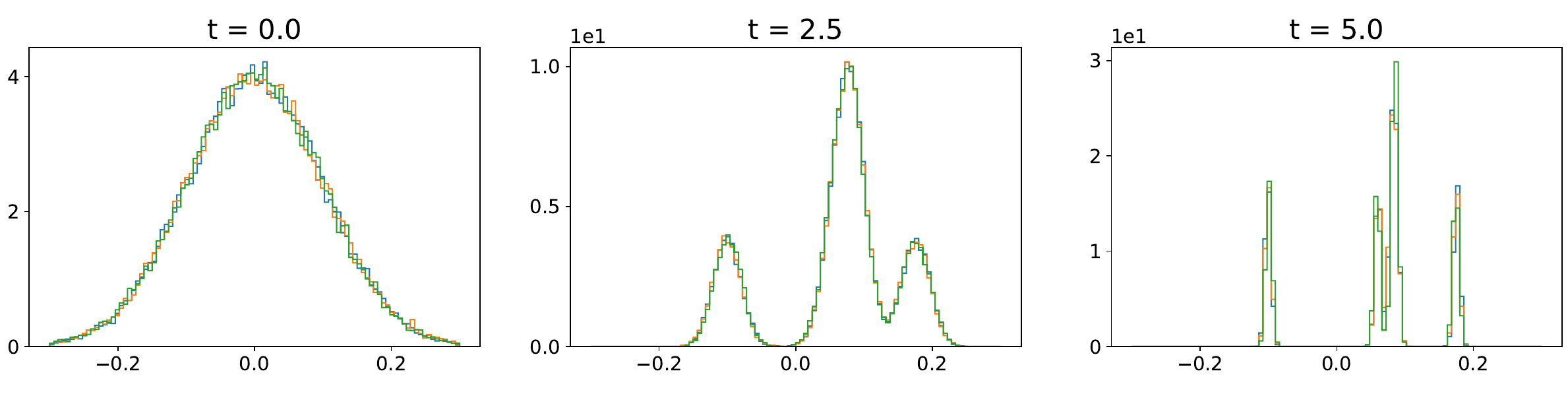}
    \caption{Histograms of $\{F(\theta_{\lfloor NT\rfloor}^i),  i=1,\dots,N\}$, at different times (initialization ($t=0$), half ($t=2.5$) and end of training ($T=5$)), when $N=10000$. First line: $F(\theta)=\|m\|_2$, where $\theta=(m,\rho)\in\mathbf R^d\times\mathbf R.$ Second line: $F(\theta)=m\in\mathbf R^d$. Idealized (blue), \emph{Bayes-by-Backprop} (orange) and \emph{Minimal-VI} (green).}
    \label{fig:hist}
\end{figure}

We introduce the  $\boldsymbol\sigma$-algebra for $N,k\ge 1$:
\begin{equation}\label{eq.Fk3}
\mathcal F_k^N=\boldsymbol\sigma \Big (\theta_{0}^i ,   \mathsf Z^{p}_q,(x_q,y_q),  1\leq i\leq N, p\in \{1,2\}, 0\le q\le k-1\big \} \Big ).
\end{equation}
In addition to  {\rm \textbf{A1}}$\to${\rm \textbf{A4}}  (where in {\rm \textbf{A2}},  $\mathcal F_k^N$ is now the one defined above  in~\eqref{eq.Fk3} when $k\ge 1$),  the following assumption
\begin{enumerate}
\item[\textbf{A6}.]   The sequences $(\mathsf Z^{p}_k, p\in \{1,2\},  k\ge 0)$ and $((x_k,y_k), k\ge 0)$ are independent. In addition, for $k\ge 0$, $\big((x_k,y_k),\mathsf Z^{p}_k, p\in \{1,2\}\big)\indep  \mathcal F_k^N$, where $\mathcal F_k^N$ is defined in~\eqref{eq.Fk3}.
 \end{enumerate}
Set   for $k\ge 0$ and $t\ge 0$, $\nu_k^N:=\frac 1N\sum_{i=1}^N\delta_{\theta_k^i}$ and $ \mu_t^N:=\nu_{\lfloor Nt\rfloor}^N$, where the  $\theta^i_k$'s are defined in \eqref{eq.algo-z1z2}.
The last main result of this work  states  that the  sequence  $(\mu^N)_{N\ge1}$ satisfies  the same law of large numbers  when $N\to +\infty$ as  the one satisfied by~\eqref{empirical_distrib2}, whose proof will be omitted as it is the same as the one made for  Theorem \ref{thm.z1zN}.

\begin{theorem}\label{thm.z1z2}
Assume  {\rm \textbf{A1}}$\to${\rm \textbf{A4}} and {\rm \textbf{A6}}.  Then, the sequence of $(\mu^N)_{N\ge1}$  satisfies all the statements of Theorem~\ref{thm.z1zN}.
\end{theorem}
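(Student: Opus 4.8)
The plan is to run, essentially verbatim, the five-step scheme already used for Theorem~\ref{thm.z1zN}: (i) derive a pre-limit identity for $t\mapsto\langle f,\mu_t^N\rangle$ with $f\in\mathcal C^\infty_b(\mathbf R^{d+1})$; (ii) prove relative compactness of $(\mu^N)_{N\ge1}$ in $\mathcal D(\mathbf R_+,\mathcal P_{\gamma_0}(\mathbf R^{d+1}))$ via Jakubowski's criterion; (iii) show every limit point satisfies~\eqref{eq.P2}; (iv) show limit points lie in $\mathcal C(\mathbf R_+,\mathcal P_1(\mathbf R^{d+1}))$; (v) invoke uniqueness of~\eqref{eq.P2} there. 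Because the target equation~\eqref{eq.P2} is literally the same as in Theorem~\ref{thm.z1zN}, steps (iv) and (v) transfer word for word: the $\mathsf W_1$-duality estimate on the jumps and the uniqueness argument depend only on~\eqref{eq.P2} and on moment bounds, not on the sampling scheme; likewise Proposition~\ref{prop_rc_inP} and the containment mechanism carry over. Only the three dynamics-dependent inputs---the moment bounds, the drift identification, and the martingale control---must be re-derived for the recursion~\eqref{eq.algo-z1z2}. I would obtain the pre-limit identity by a first-order Taylor expansion of $f(\theta_{k+1}^i)-f(\theta_k^i)$ along~\eqref{eq.algo-z1z2}, summed over $i$, then split the increment into its $\mathcal F_k^N$-conditional mean (the drift), a martingale part, and a second-order remainder.

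The structural heart is the drift, and here~\eqref{eq.algo-z1z2} is even cleaner than the \emph{Bayes-by-Backprop} scheme~\eqref{eq.algo-batch}. Since $\mathsf Z^1_k$ and $\mathsf Z^2_k$ are independent of each other and of the past (Assumption~\textbf{A6}), the double sum in the loss increment, which equals a product of two single sums, factorizes exactly upon conditioning:
\begin{align*}
&\mathbf E\Big[\Big(\textstyle\sum_{j=1}^N(\phi(\theta_k^j,\mathsf Z^1_k,x_k)-y_k)\Big)\Big(\textstyle\sum_{i=1}^N\nabla_\theta f(\theta_k^i)\cdot\nabla_\theta\phi(\theta_k^i,\mathsf Z^2_k,x_k)\Big)\,\Big|\,\mathcal F_k^N,(x_k,y_k)\Big]\\
&\qquad=\Big(\textstyle\sum_{j=1}^N\langle\phi(\theta_k^j,\cdot,x_k)-y_k,\gamma\rangle\Big)\Big(\textstyle\sum_{i=1}^N\langle\nabla_\theta f(\theta_k^i)\cdot\nabla_\theta\phi(\theta_k^i,\cdot,x_k),\gamma\rangle\Big).
\end{align*}
Dividing by $N^3$ identifies the conditional expected increment of $\langle f,\nu_k^N\rangle$ as $-\tfrac{\eta}{N}\langle\phi(\cdot,\cdot,x_k)-y_k,\nu_k^N\otimes\gamma\rangle\langle\nabla_\theta f\cdot\nabla_\theta\phi(\cdot,\cdot,x_k),\nu_k^N\otimes\gamma\rangle$; averaging over $(x_k,y_k)\sim\pi$ and summing the $\lfloor Nt\rfloor$ steps reproduces exactly the product drift of~\eqref{eq.P2}. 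Crucially, unlike the \emph{Bayes-by-Backprop} case there is \emph{no} diagonal term to discard: because the first factor carries $\mathsf Z^1$ and the second $\mathsf Z^2$, the product structure of the rewritten limit~\eqref{eq_limit} from the start of Section~\ref{sec:minimal} is recovered term by term, which is precisely why \emph{Minimal-VI} is the genuine mean-field scheme.

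The moment bounds transfer with only a cosmetic change. Writing $\nabla_\theta\phi=\nabla_w s(\Psi_\theta(z),x)\cdot\nabla_\theta\Psi_\theta(z)$ and using $s\in\mathcal C^\infty_b$ (Assumption~\textbf{A3}) together with $|\partial_\alpha\Psi_\theta(z)|\le C\mathfrak b(z)$ (Assumption~\textbf{A1}), one has $|\phi|\le\|s\|_\infty$ and $|\nabla_\theta\phi(\theta,z,x)|\le C\mathfrak b(z)$; combined with the at most linear growth of $\nabla_\theta\mathscr D_{{\rm KL}}(q^1_\theta|P_0^1)$ recorded in~\eqref{eq.kl_1}, the recursion~\eqref{eq.algo-z1z2} gives
\begin{equation*}
|\theta_{k+1}^i|\le\Big(1+\tfrac{\eta C}{N}\Big)|\theta_k^i|+\tfrac{\eta C}{N}\big(1+\mathfrak b(\mathsf Z^2_k)\big).
\end{equation*}
Since $\mathfrak b$ has all polynomial moments under $\gamma$ by~\eqref{jac_T_bounded}, a discrete Gr\"onwall argument yields $\sup_{k\le\lfloor NT\rfloor}\mathbf E[|\theta_k^i|^{\gamma_0}]<\infty$ uniformly in $N,i$---the analogue of Lemma~\ref{lem:moment_param}---which supplies the compact containment driving relative compactness. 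The second-order remainder is bounded by $\|\nabla^2 f\|_\infty$ times the squared increment, of order $1/N^2$ per step, hence $O(1/N)$ over $[0,T]$.

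I expect the martingale control to be the one genuinely delicate point, because \emph{Minimal-VI} reuses the \emph{same} two samples across all $N$ neurons, so the per-neuron randomness does not self-average: with $A_k:=\sum_{j}(\phi(\theta_k^j,\mathsf Z^1_k,x_k)-y_k)$ and $B_k:=\sum_{i}\nabla_\theta f(\theta_k^i)\cdot\nabla_\theta\phi(\theta_k^i,\mathsf Z^2_k,x_k)$, each sum is driven by a single Gaussian and therefore has conditional standard deviation of order $N$ rather than $\sqrt N$, so the stochastic gradient sits far from its conditional mean at every step. The scaling nonetheless saves the argument: by conditional independence, $\mathbf E[(A_kB_k)^2\mid\mathcal F_k^N,(x_k,y_k)]=\mathbf E[A_k^2\mid\cdots]\,\mathbf E[B_k^2\mid\cdots]=O(N^2)\cdot O(N^2)=O(N^4)$, so the martingale increment $M_k$, which carries the prefactor $\eta/N^3$, satisfies $\mathbf E[M_k^2\mid\mathcal F_k^N]=O(1/N^2)$. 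Summing the orthogonal increments of $\mathcal M_t^N:=\sum_{k=0}^{\lfloor Nt\rfloor-1}M_k$ over the $\lfloor Nt\rfloor$ steps and applying Doob's maximal inequality gives $\mathbf E[\sup_{t\le T}|\mathcal M_t^N|^2]=O(1/N)\to0$, so the martingale term vanishes in probability uniformly on $[0,T]$. This is the only estimate that must be reworked with care, and it is exactly what shows that the cheap two-sample scheme recovers the same limit~\eqref{eq.P2}; everything else follows as in Theorem~\ref{thm.z1zN}, and the limit coincides with that of Proposition~\ref{pr.u}.
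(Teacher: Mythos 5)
Your proposal is correct and coincides with the paper's approach: the paper omits the proof of Theorem~\ref{thm.z1z2} precisely because it is the proof of Theorem~\ref{thm.z1zN} rerun on the recursion~\eqref{eq.algo-z1z2}, which is what you do. You correctly isolate the only three ingredients that must be re-derived — the exact factorization of the conditional drift from the independence of $\mathsf Z^1_k$ and $\mathsf Z^2_k$ (with no diagonal correction term), the uniform moment bounds via discrete Gr\"onwall, and the $O(1/N^2)$ per-step martingale second moment surviving the lack of self-averaging thanks to the $\eta/N^3$ prefactor — while the compactness, continuity of limit points, and uniqueness arguments transfer unchanged.
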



\begin{figure}
    \centering
    \includegraphics[width=0.3\columnwidth]{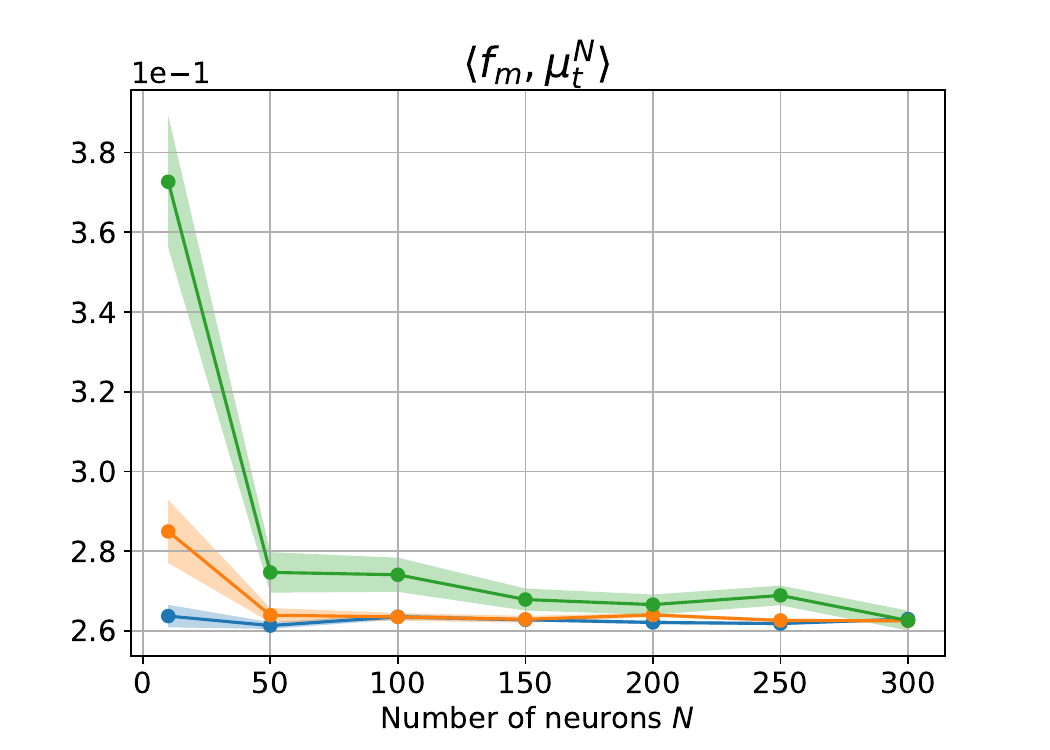}\hfill
    \includegraphics[width=0.3\columnwidth]{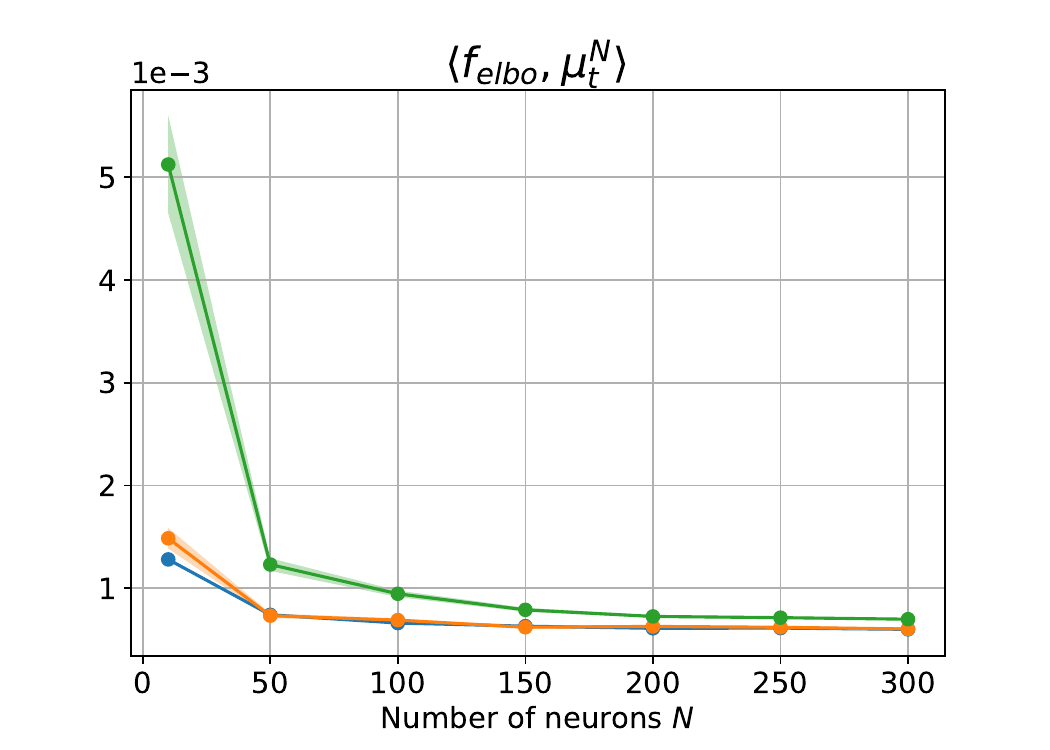}\hfill
    \includegraphics[width=0.3\columnwidth]{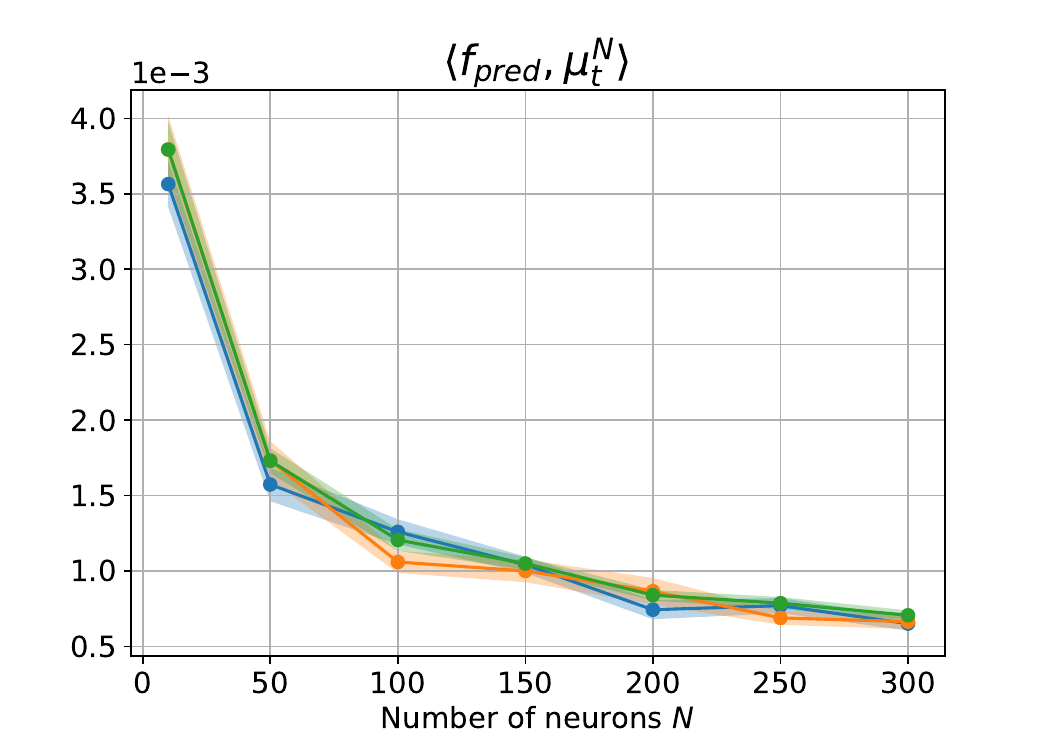}
    \caption{{  Convergence of $\langle f, \mu_T^{N}\rangle$ to $\langle f, \bar\mu_T\rangle$, for the idealized (blue), \emph{Bayes-by-Backprop} (orange) and \emph{Minimal-VI} (green) SGD algorithms over $50$ realizations}.}
    \label{fig:convergence}
\end{figure}

\begin{figure}
    \centering
    \includegraphics[width=0.6\columnwidth]{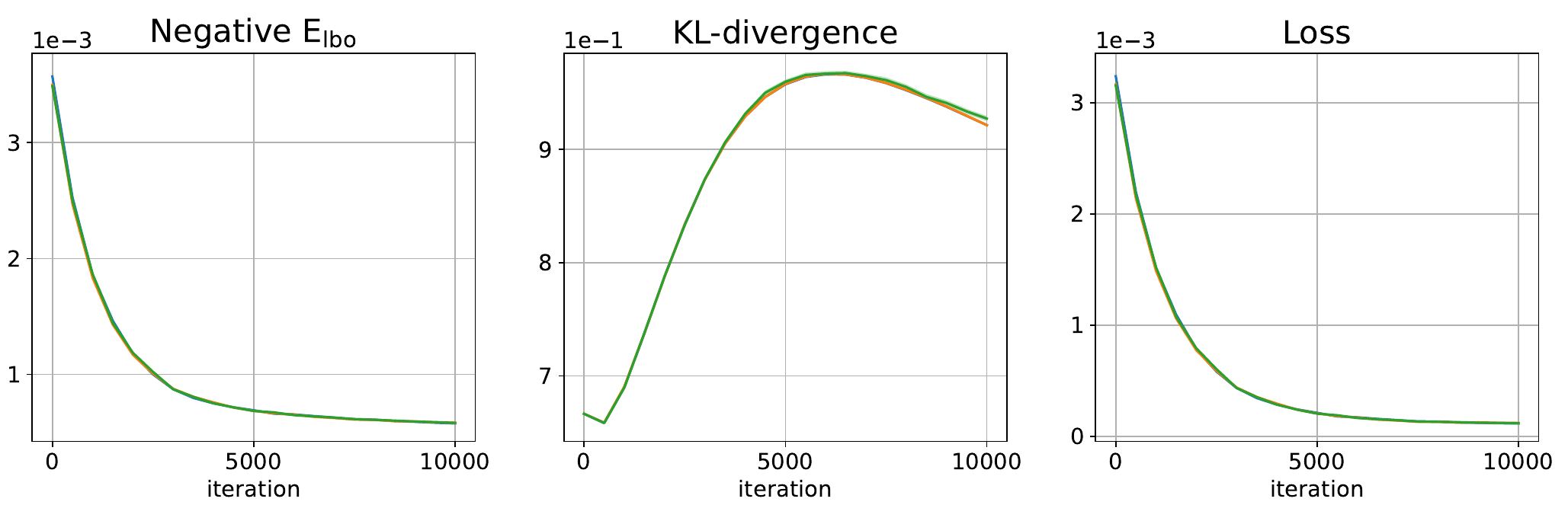}
    \caption{{  Decay of the negative ELBO (left) and its two components (KL (middle), loss (right)) during the training process done by the idealized (blue), \emph{Bayes-by-Backprop} (orange) and \emph{Minimal-VI} (green) SGD algorithms, for $N=10000$.}}
    \label{fig:convergence_time}
\end{figure}

\section{Numerical experiments}
\label{sec:numerics}
In this section we illustrate the theorems \ref{thm.ideal},
\ref{thm.z1zN}, and \ref{thm.z1z2} using the following toy model. We
set $d=5$. Given $\theta^*\in \mathbf R^d$ (drawn from a normal
distribution and scaled to the unit norm), we draw i.i.d observations
as follows: Given $x\sim \mathcal U([-1,1]^d)$, we draw
$y=\tanh(x^\top\theta^*)+\epsilon,$ where $\epsilon$ is zero mean with
variance $10^{-4}$. The initial distribution of parameters is centered
around the prior:
$\boldsymbol{\theta}_0\sim (\mathcal N(m_0,0.01I_d)\times \mathcal
N(g^{-1}(\sigma_0),0.01))^{\otimes N}$, {with $m_0=0$ and
  $\sigma_0=0.2$}. Since the idealized algorithm cannot be
implemented exactly, a mini-batch of size 100 is used as a proxy for
the following comparisons of the different algorithms. For the
algorithm \eqref{eq.algo-batch} SGD we set $B=1$.

\noindent{\bf Evolution and limit of the distribution}
Fig.~\ref{fig:hist} displays the histograms of
$\{F(\theta_{\lfloor Nt\rfloor}^i), i=1,\dots,N\}$
($F(\theta)\!=\!\|m\|_2, g(\rho)$ or $m$, where
$\theta=(m,\rho)\in\mathbf R^d\times\mathbf R$), for $N=10000$, at initialization, halfway through training, and at the end of training. The empirical distributions illustrated by these histograms are very similar over the course of training. It can be seen that for $N=10000$ the limit of the mean field is reached.

\noindent{\bf Convergence with respect to the numbers of neurons.}
We investigate here the speed of convergence of $\mu_t^N$ to
$\bar\mu_t$ (as $N\to+\infty$), when tested against test functions
$f$. More precisely, we fix a time $T$ (end of training) and Figure
\ref{fig:convergence} represents the empirical mean of
$\langle f, \mu_T^{N}\rangle$ over 50 realizations.  The test
functions used for this experiment are $f_m(\theta) = \Vert m\Vert_2$,
$f_{\mathrm {Elbo}(\theta)} = -
\hat{\mathrm{E}}_{\mathrm{lbo}}(\theta)^{N}$ where
$\hat{\mathrm{E}}_{\mathrm{lbo}}$ is the empirical
$\mathrm E_{\mathrm{lbo}}^N$ (see \eqref{d-elbo_w}) computed with 100
samples of $(x,y)$ and $(z^1,\dots,z^N)$. Finally,
$f_{pred}(\theta) =
\hat{\mathbb{E}}_x\Big[\hat{\mathbb{V}}_{\boldsymbol{w} \sim
  q_{\boldsymbol{\theta}}^N}{[f_{\boldsymbol{w}}^N(x)]}^{1/2}\Big]$
where $\hat{\mathbb{E}}$ and $\hat{\mathbb V}$ denote respectively the
empirical mean and the empirical variance over 100 samples.  All
algorithms are converging to the same limit and are performing similarly
even with a limited number of neurons ($N=300$ in this example).


\noindent{\bf Convergence with respect to time.}
This section illustrates the training process of a BNN with a given
number of neurons $N = 10000$. In Figure \ref{fig:convergence_time},
we plot the negative ELBO on a test set and its two components, the
loss and the KL-divergence terms. Figure \ref{fig:convergence_time}
shows that the BNN is able to learn on this specific task and all
algorithms exhibit a similar performance. It illustrates the
trajectorial convergence of $\{\mu_t^N, t\in[0,T]\}_{N\ge1}$ to
$\{\bar\mu_t, t\in[0,T]\}$ as $N\to+\infty$.

\noindent{\bf Behavior around the limit $\bar\mu$. }
On Figure \ref{fig:boxplots}, we plot the boxplots
of $\langle f,\mu_t^N\rangle$ for 50 realizations and $N=10000$, at
different times of the training.
\emph{Minimal-VI} scheme (which is computationally cheaper as
explained in \ref{sec:minimal}) exhibit a larger variance than the
other algorithms.

\begin{figure}
    \centering
    \includegraphics[width=0.85\columnwidth]{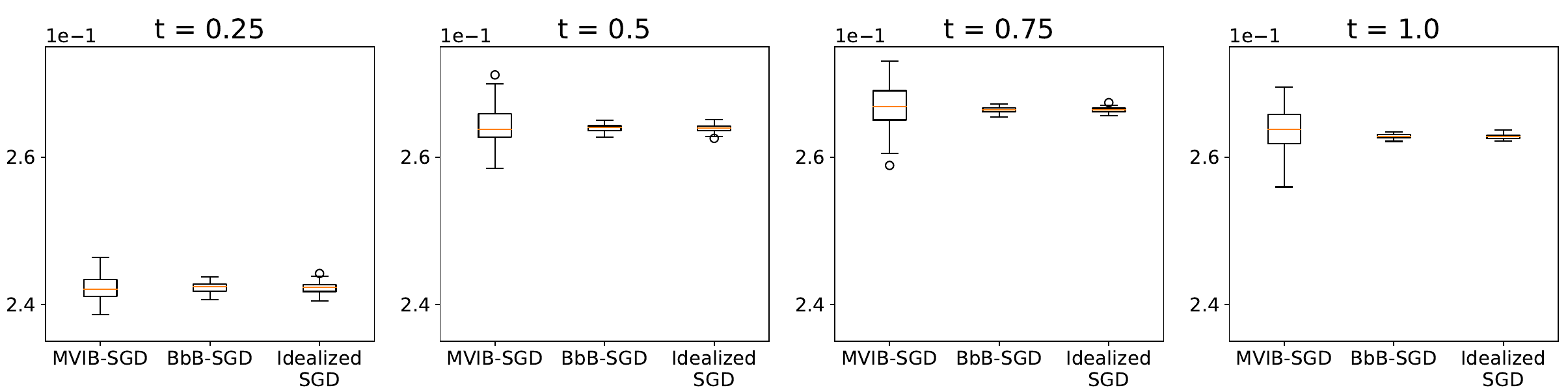}
    \includegraphics[width=0.85\columnwidth]{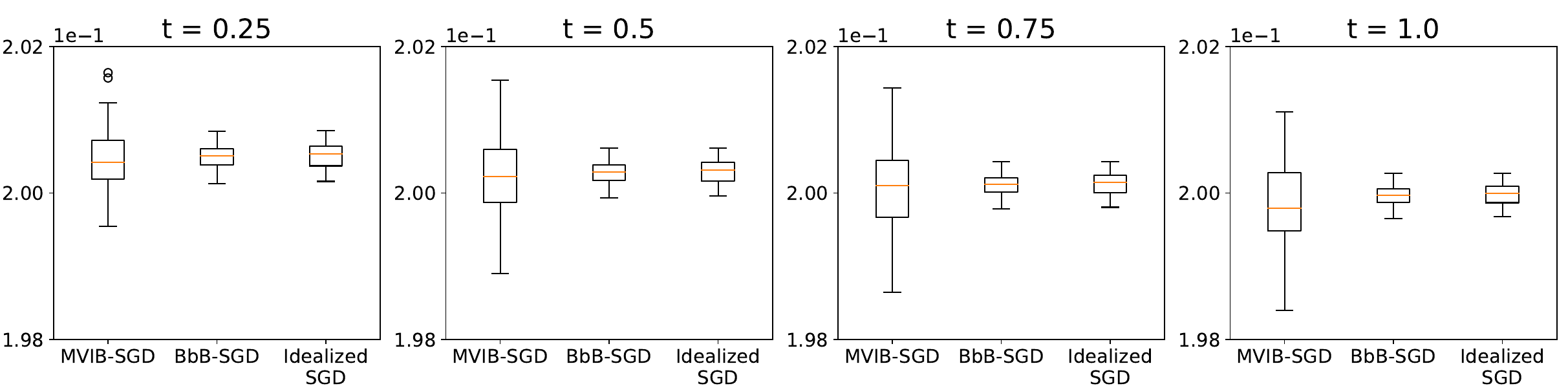}
    \caption{{  Boxplots for $50$ runs of $\langle f,\mu_t^N\rangle$ for the three SGD schemes for $f(\theta)=\|m\|_2$ on the first line and $f(\theta)=g(\rho)$ on the second line. MVIB-SGD: \textit{Minimal-VI} SGD. BbB-SGD: \textit{Bayes-by-Backprop} SGD}.}
    \label{fig:boxplots}
\end{figure}

\section{Conclusion}
By establishing the limit behavior of the idealized
SGD for the variational inference of BNN with the weighting suggested
by \cite{huix}, we have rigorously shown that the most-commonly used
in practice \emph{Bayes-by-Backprop} scheme indeed exhibits the same
limit behavior. {Furthermore, the analysis of the limit equation
  led us to validate the correct scaling of the KL divergence term in
  with respect to the loss}. {Notably, the mean-field limit dynamics
  has also helped us to devise a far less costly new SGD algorithm,
  the \emph{Minimal-VI}. This scheme shares the same limit
  behavior, but only stems from the non-vanishing asymptotic
  contributions, hence the reduction of the computational cost}. Aside
from confirming the analytical results, the first simulations
presented here show that the three algorithms, while having the same
limit, may differ in terms of variance. Thus, deriving a CLT result
and discussing the right trade-off between computational complexity
and variance will be done in future work. {Also, on a more
  general level regarding uncertainty quantification, an interesting
  question is to analyse the impact of the correct scaling of the KL
  divergence term on the error calibration and how to apply the same
  analysis in the context of deep ensembles.\\}

\acks{A.D. is grateful for the
support received from the Agence Nationale de la Recherche (ANR) of
the French government through the program "Investissements d'Avenir"
(16-IDEX-0001 CAP 20-25)   A.G. is supported by the French ANR under
the grant ANR-17-CE40-0030 (project \emph{EFI}) and the Institut
Universtaire de France. M.M. acknowledges the support of the the
French ANR under the grant ANR-20-CE46-0007 (\emph{SuSa} project).
B.N. is supported by the grant IA20Nectoux from the Projet I-SITE
Clermont CAP 20-25. E.M. and T.H. acknowledge the support of ANR-CHIA-002, "Statistics, computation and Artificial Intelligence"; Part of the work has been developed under the auspice of the Lagrange Center for Mathematics and Calculus}

\bibliography{elbo_bib}

\clearpage

\appendix

\section{Proof of Theorem \ref{thm.ideal}}
\label{sec.proof1}
For simplicity, we prove the theorem \ref{thm.ideal} when $T=1$, and we denote~$\Theta_1$ simply by~$\Theta$. In this section we assume  \textbf{A1}--\textbf{A4}.
\subsection{Pre-limit equation~\eqref{eq.pre_limit} and  error terms in~\eqref{eq.pre_limit}}
\label{sec.pre-eq-1}

\subsubsection{Derivation of the pre-limit equation}\label{sec:pre_limit_eq}
\label{sec.PR1}


The aim of this section is to establish the so-called pre-limit equation~\eqref{eq.pre_limit}, which will be our starting point to derive Equation~\eqref{eq_limit}.
Let $N\ge 1$, $k\in \{0,\ldots,N\}$, and $f\in\mathcal C^\infty(\Theta)$.     Recall that by Lemma~\ref{lem:unif_bound_param} and since $0\le k \le N$, a.s. $\theta^i_k\in \Theta$, and thus a.s. $f(\theta^i_k)$ is well-defined. The Taylor-Lagrange formula yields
\begin{align*}
\langle f,\nu_{k+1}^N\rangle-\langle f,\nu_k^N\rangle&=\frac{1}{N}\sum_{i=1}^Nf(\theta_{k+1}^i)-f(\theta_k^i)\nonumber\\
&=\frac 1N\sum_{i=1}^N\nabla_\theta f(\theta_k^i)\cdot(\theta_{k+1}^i-\theta_k^i) +\frac{1}{2N}\sum_{i=1}^N(\theta_{k+1}^i-\theta_k^i)^T\nabla^2f(\widehat{\theta_k^i})(\theta_{k+1}^i-\theta_k^i),
\end{align*}
where, for all $i\in \{1,\dots, N\}$, $\widehat{\theta_k^i}\in (\theta_k^i,\theta_{k+1}^i)\subset \Theta$.
Using \eqref{eq.algo-ideal}, we then obtain
\begin{align}\label{eq_1}
\langle f,\nu_{k+1}^N\rangle-\langle f,\nu_k^N\rangle&=
-\frac{\eta}{N^3}\sum_{i=1}^N\sum_{j=1,j\neq i}^N\big (\big \langle\phi(\theta_k^j,\cdot,x_k),\gamma\big \rangle-y_k\big )\big \langle\nabla_\theta f(\theta_k^i)\cdot\nabla_\theta  \phi(\theta_k^i,\cdot,x_k),\gamma\big \rangle\nonumber\\
&\quad-\frac{\eta}{N^2}\big \langle\big(\phi(\cdot,\cdot,x_k)-y_k\big)\nabla_\theta f\cdot\nabla_\theta \phi(\cdot,\cdot,x_k),\nu_k^N\otimes\gamma\big \rangle\nonumber\\
%
&\quad-\frac{\eta}{N}\big \langle\nabla_\theta f\cdot \nabla_\theta \mathscr D_{{\rm KL}}(q_{\,_\cdot }^1|P_0^1),\nu_k^N\big \rangle + \mathbf R_k^N[f],
\end{align}
where
\begin{equation}\label{def_R_k^N}
\mathbf R_k^N[f]:=\frac{1}{2N}\sum_{i=1}^N(\theta_{k+1}^i-\theta_k^i)^T\nabla^2f(\widehat{\theta_k^i})(\theta_{k+1}^i-\theta_k^i).
\end{equation}
Let us define
\begin{align}\label{d-D}
\mathbf D_{k}^N[f]&:= \mathbf E\Big[-\frac{\eta}{N^3}\sum_{i=1}^N\sum_{j=1,j\neq i}^N\big (\big \langle\phi(\theta_k^j,\cdot,x_k),\gamma\big \rangle-y_k\big )\big \langle\nabla_\theta f(\theta_k^i)\cdot\nabla_\theta \phi(\theta_k^i,\cdot,x_k),\gamma\big \rangle\Big|\mathcal F_k^N\Big]\nonumber\\
&\quad-\mathbf E\Big[\frac{\eta}{N^2}\big \langle(\phi(\cdot,\cdot,x_k)-y_k)\nabla_\theta f\cdot\nabla_\theta \phi(\cdot,\cdot,x_k),\nu_k^N\otimes\gamma\big \rangle\Big|\mathcal F_k^N\Big].
\end{align}
Note that using     \eqref{borne_phi-y} and \eqref{borne nablaphi} together with the fact that $|\nabla_\theta f(\theta_k^i)|\le \sup_{\theta \in \Theta} |\nabla_\theta f(\theta)|$,
the integrant in \eqref{d-D} is integrable and thus $\mathbf D_{k}^N[f]$ is well defined.
Using  the fact that $(x_k,y_k)\indep \mathcal F_{k}^N$ by {\rm \textbf{A2}}  and that $\{\theta_k^i, i=1,\ldots,N\}$ is $\mathcal F_k^N$-measurable by \eqref{eq.algo-ideal}, we have:
\begin{align}\label{D_k-calcule}
\mathbf D_{k}^N[f]&=-\frac{\eta}{N^3}\sum_{i=1}^N\sum_{j=1,j\neq i}^N\int_{\mathsf X\times\mathsf Y}\big (\big \langle\phi(\theta_k^j,\cdot,x),\gamma\big \rangle-y\big )\big \langle\nabla_\theta f(\theta_k^i)\cdot\nabla_\theta\phi(\theta_k^i,\cdot,x),\gamma\big \rangle\pi(\di x,\di y)\nonumber\\
&\quad-\frac{\eta}{N^2}\int_{\mathsf X\times\mathsf Y}\big \langle(\phi(\cdot,\cdot,x)-y)\nabla_\theta f\cdot\nabla_\theta\phi(\cdot,\cdot,x),\nu_k^N\otimes\gamma\big \rangle\pi(\di x,\di y).
\end{align}
Introduce also
\begin{align*}
\mathbf  M_{k}^{N}[f]&:=-\frac{\eta}{N^3}\sum_{i=1}^N\sum_{j=1,j\neq i}^N(\langle\phi(\theta_k^j,\cdot,x_k),\gamma\rangle-y_k)\langle\nabla_ \theta f(\theta_k^i)\cdot\nabla_\theta\phi(\theta_k^i,\cdot,x_k),\gamma\rangle\nonumber\\
&\quad-\frac{\eta}{N^2}\langle(\phi(\cdot,\cdot,x_k)-y_k)\nabla_\theta f\cdot\nabla_\theta\phi(\cdot,\cdot,x_k),\nu_k^N\otimes\gamma\rangle-\mathbf D_{k}^N[f].
\end{align*}
Note that $\mathbf E\big [\mathbf M_{k}^{N}[f]|\mathcal F_k^N\big]=0$. Equation \eqref{eq_1} then writes
\begin{align}\label{eq_2}
\langle f,\nu_{k+1}^N\rangle-\langle f,\nu_k^N\rangle&=\mathbf D_{k}^N[f]+ \mathbf  M_{k}^N[f] -\frac{\eta}{N}\big \langle\nabla_\theta f\cdot \nabla_\theta \mathscr D_{{\rm KL}}(q_{\,_\cdot }^1|P_0^1),\nu_k^N\big \rangle +\mathbf R_k^N[f].
\end{align}
Notice also that
\begin{align}\label{calc_fD^1}
\mathbf  D_{k}^N[f]&=-\frac{\eta}{N^3}\sum_{i=1}^N\sum_{j=1}^N\int_{\mathsf X\times\mathsf Y}(\langle\phi(\theta_k^j,\cdot,x),\gamma\rangle-y)\langle\nabla_\theta f(\theta_k^i)\cdot\nabla_\theta \phi(\theta_k^i,\cdot,x),\gamma\rangle\pi(\di x,\di y) \nonumber\\
&\quad+\frac{\eta}{N^3}\sum_{i=1}^N\int_{\mathsf X\times\mathsf Y}(\langle\phi(\theta_k^i,\cdot,x),\gamma\rangle-y)\langle\nabla_\theta f(\theta_k^i)\cdot\nabla_\theta \phi(\theta_k^i,\cdot,x),\gamma\rangle\pi(\di x,\di y)\nonumber\\
&\quad-\frac{\eta}{N^2}\int_{\mathsf X\times\mathsf Y}\langle(\phi(\cdot,\cdot,x)-y)\nabla_\theta f\cdot\nabla_\theta \phi(\cdot,\cdot,x),\nu_k^N\otimes\gamma\rangle\pi(\di x,\di y)\nonumber\\
&=-\frac{\eta}{N}\int_{\mathsf X\times\mathsf Y}\langle\phi(\cdot,\cdot,x)-y,\nu_k^N\otimes\gamma\rangle\langle\nabla_\theta  f\cdot\nabla_\theta \phi(\cdot,\cdot,x),\nu_k^N\otimes\gamma\rangle\pi(\di x,\di y)\nonumber \\
&\quad+\frac{\eta}{N^2}\int_{\mathsf X\times\mathsf Y}\Big\langle(\langle\phi(\cdot,\cdot,x),\gamma\rangle-y)\langle\nabla_\theta f\cdot\nabla_\theta \phi(\cdot,\cdot,x),\gamma\rangle,\nu_k^N\Big\rangle\pi(\di x,\di y)  \nonumber\\
&\quad-\frac{\eta}{N^2}\int_{\mathsf X\times\mathsf Y}\langle(\phi(\cdot,\cdot,x)-y)\nabla_\theta f\cdot\nabla_\theta \phi(\cdot,\cdot,x),\nu_k^N\otimes\gamma\rangle\pi(\di x,\di y).
\end{align}
 Now, we define for   $t\in [0,1]$:
\begin{align}\label{def D_t}
\mathbf D_t^{N}[f]:=\sum_{k=0}^{\lfloor Nt\rfloor-1} \mathbf  D_k^{N}[f], \  \  \mathbf R_t^N[f]:=\sum_{k=0}^{\lfloor Nt\rfloor-1}\mathbf R_k^N[f],  \ \ \text{and}\ \ \mathbf  M_t^{N}[f]:=\sum_{k=0}^{\lfloor Nt\rfloor-1}\mathbf M_{k}^{N}[f] .
\end{align}
We can rewrite $\mathbf D_t^{N}[f]$ has follows:
\begin{align*}
\mathbf  D_t^{N}[f]=\sum_{k=0}^{\lfloor Nt\rfloor-1}\int_{\frac{k}{N}}^{\frac{k+1}{N}}N \mathbf  D_{\lfloor Ns\rfloor}^{N}[f]\di s=N\int_0^t \mathbf D_{\lfloor Ns\rfloor}^{N}[f]\di s-N\int_{\frac{\lfloor Nt\rfloor}{N}}^t  \mathbf  D_{\lfloor Ns\rfloor}^{N}[f]\di s.
\end{align*}
Since $\nu_{\lfloor Ns\rfloor}^N=\mu_s^N$ (by  definition, see \eqref{empirical_distrib1}), we have, using also  \eqref{calc_fD^1} with $k=\lfloor Ns\rfloor$,
\begin{align}\label{calc_D_t}
 \mathbf D_t^{N}[f]&=-\eta\int_{0}^t\int_{\mathsf X\times\mathsf Y}\langle\phi(\cdot,\cdot,x)-y,\mu_s^N\otimes\gamma\rangle\langle\nabla_\theta f\cdot\nabla_\theta \phi(\cdot,\cdot,x),\mu_s^N\otimes\gamma\rangle  \pi(\di x,\di y)\di s\nonumber\\
&\quad +\frac{\eta}{N}\int_{0}^t\int_{\mathsf X\times\mathsf Y} \Big\langle\langle\phi(\cdot,\cdot,x)-y,\gamma\rangle\langle\nabla_\theta f\cdot\nabla_\theta \phi(\cdot,\cdot,x),\gamma\rangle,\mu_s^N\Big\rangle \pi(\di x,\di y)\di s\nonumber\\
&\quad -\frac{\eta}{N}\int_{0}^t\int_{\mathsf X\times\mathsf Y} \Big\langle(\phi(\cdot,\cdot,x)-y)\nabla_\theta f\cdot\nabla_\theta \phi(\cdot,\cdot,x),\mu_s^N\otimes\gamma\Big\rangle \pi(\di x,\di y)\di s-\mathbf V_t^{N}[f],
\end{align}
where
\begin{align*}
\mathbf V_t^{N}[f]&:=-\eta\int^{t}_{\frac{\lfloor Nt\rfloor}{N}}\int_{\mathsf X\times\mathsf Y}\langle\phi(\cdot,\cdot,x)-y,\mu_s^N\otimes\gamma\rangle\langle\nabla_\theta f\cdot\nabla_\theta \phi(\cdot,\cdot,x),\mu_s^N\otimes\gamma\rangle  \pi(\di x,\di y)\di s\\
&\quad +\frac{\eta}{N}\int^{t}_{\frac{\lfloor Nt\rfloor}{N}}\int_{\mathsf X\times\mathsf Y} \Big\langle\langle\phi(\cdot,\cdot,x)-y,\gamma\rangle\langle\nabla_\theta f\cdot\nabla_\theta \phi(\cdot,\cdot,x),\gamma\rangle,\mu_s^N\Big\rangle \pi(\di x,\di y)\di s\\
&\quad -\frac{\eta}{N}\int^{t}_{\frac{\lfloor Nt\rfloor}{N}}\int_{\mathsf X\times\mathsf Y} \Big\langle(\phi(\cdot,\cdot,x)-y)\nabla_\theta f\cdot\nabla_\theta \phi(\cdot,\cdot,x),\mu_s^N\otimes\gamma\Big\rangle \pi(\di x,\di y)\di s.
\end{align*}
On the other hand, we also have for $t\in [0,1]$,
\begin{align}\label{calc_KL}
&\sum_{k=0}^{\lfloor Nt\rfloor-1}-\frac{\eta}{N}\big \langle\nabla_\theta f\cdot \nabla_\theta \mathscr D_{{\rm KL}}(q_{\,_\cdot }^1|P_0^1),\nu_k^N\big \rangle   =-\eta \int_0^{\frac{\lfloor Nt\rfloor}{N}} \big \langle\nabla_\theta f\cdot \nabla_\theta \mathscr D_{{\rm KL}}(q_{\,_\cdot }^1|P_0^1),\mu_s^N\big \rangle\di s.
\end{align}
We finally set:
\begin{align}\label{def V_t^N}
 \mathbf  W_t^{N}[f]:=-  \mathbf  V_t^{N}[f] + \eta \int^t_{\frac{\lfloor Nt\rfloor}{N}}\big \langle\nabla_\theta f\cdot \nabla_\theta \mathscr D_{{\rm KL}}(q_{\,_\cdot }^1|P_0^1),\mu_s^N\big \rangle\di s.
\end{align}
Since $\langle f,\mu_t^N\rangle-\langle f,\mu_0^N\rangle=\sum_{k=0}^{\lfloor Nt\rfloor-1}\langle f,\nu_{k+1}^N\rangle-\langle f,\nu_k^N\rangle$,  we deduce from \eqref{eq_2}, \eqref{def D_t}, \eqref{calc_D_t}, \eqref{calc_KL}   and \eqref{def V_t^N},   the so-called pre-limit equation satisfied by
$\mu^N$:  for $N\ge1$, $t\in [0,1]$, and $f\in \mathcal C^\infty(\Theta)$,
\begin{align}\label{eq.pre_limit}
\langle f,\mu_t^N\rangle-\langle f,\mu_0^N\rangle&=-\eta\int_{0}^t\int_{\mathsf X\times\mathsf Y}\langle\phi(\cdot,\cdot,x)-y,\mu_s^N\otimes\gamma\rangle\langle\nabla_\theta f\cdot\nabla_\theta \phi(\cdot,\cdot,x),\mu_s^N\otimes\gamma\rangle  \pi(\di x,\di y)\di s\nonumber\\
&\quad -\eta \int_0^t \big \langle\nabla_\theta f\cdot \nabla_\theta \mathscr D_{{\rm KL}}(q_{\,_\cdot }^1|P_0^1),\mu_s^N\big \rangle\di s  \nonumber\\
&\quad +\frac{\eta}{N}\int_{0}^t\int_{\mathsf X\times\mathsf Y} \Big\langle\langle\phi(\cdot,\cdot,x)-y,\gamma\rangle\langle\nabla_\theta  f\cdot\nabla_\theta \phi(\cdot,\cdot,x),\gamma\rangle,\mu_s^N\Big\rangle \pi(\di x,\di y)\di s\nonumber\\
&\quad -\frac{\eta}{N}\int_{0}^t\int_{\mathsf X\times\mathsf Y} \Big\langle(\phi(\cdot,\cdot,x)-y)\nabla_\theta f\cdot\nabla_\theta \phi(\cdot,\cdot,x),\mu_s^N\otimes\gamma\Big\rangle \pi(\di x,\di y)\di s\nonumber\\
&\quad +  \mathbf M_t^{N}[f] +\mathbf W_t^{N}[f]+ \mathbf R_t^N[f].
\end{align}


\subsubsection{The last five  terms in \eqref{eq.pre_limit} are error terms}


The purpose of this section is to show that the last five  terms appearing in the r.h.s. of \eqref{eq.pre_limit} are error terms  when  $N\to+\infty$.
For $J\in \mathbf N^*$ and $f\in\mathcal C^J(\Theta)$, set $\Vert f\Vert_{\mathcal C^J(\Theta)}:=\sum_{|k|\leq J}\Vert \partial_kf \Vert_{\infty, \Theta}$,
where $\Vert g\Vert_{\infty, \Theta}=\sup_{\theta\in \Theta}|g(\theta)|$ for $g:\Theta\to \mathbf R^m$.

\begin{lemma}[Error terms]\label{le.error}
Assume  {\rm \textbf{A1}}$\to${\rm \textbf{A4}}. Then, there exists $C>0$  such that a.s.  for all $f\in\mathcal C^\infty(\Theta)$ and $N\ge1$,
\begin{enumerate}
\item\label{remTi1}  $\frac{\eta}{N}\int_{0}^1\int_{\mathsf X\times\mathsf Y} \Big|\Big\langle\langle\phi(\cdot,\cdot,x)-y,\gamma\rangle\langle\nabla_\theta f\cdot\nabla_\theta \phi(\cdot,\cdot,x),\gamma\rangle,\mu_s^N\Big\rangle\Big| \pi(\di x,\di y)\di s \leq C\Vert f\Vert_{\mathcal C^1(\Theta)}/N$.

\item\label{remTi2}  $\frac{\eta}{N}\int_{0}^1\int_{\mathsf X\times\mathsf Y} \Big|\Big\langle(\phi(\cdot,\cdot,x)-y)\nabla_\theta f\cdot\nabla_\theta \phi(\cdot,\cdot,x),\mu_s^N\otimes\gamma\Big\rangle\Big| \pi(\di x,\di y)\di s\leq  C\Vert f\Vert_{\mathcal C^1(\Theta)}/N$.
\item\label{remTi6} $\sup_{t\in[0,1]}|\mathbf W_t^{N}[f]|+ \sup_{t\in[0,1]}|\mathbf R_t^N[f]| \leq   C\Vert f\Vert_{\mathcal C^2(\Theta)}/N$.
\end{enumerate}
Finally, $\sup_{t\in[0,1]}\mathbf E\big[|\mathbf M_t^{N}[f]|\big]\leq  {C}\Vert f\Vert_{\mathcal C^1(\Theta)}/{\sqrt N}$.
\end{lemma}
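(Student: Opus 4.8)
The plan is to bound each of the five error terms separately, using the a.s. boundedness of the parameters from Lemma~\ref{lem:unif_bound_param}, the smoothness assumptions \textbf{A1} and \textbf{A3}, and compactness of the data set \textbf{A2}. The key preliminary observation is that since all relevant quantities live on the compact set $\Theta$ (for the parameters) and $\mathsf X\times\mathsf Y$ (for the data), and since $s\in\mathcal C^\infty_b$ with $\Psi_\theta$ controlled by $\mathfrak b$ via \eqref{jac_T_bounded}, there exist \emph{deterministic} constants $C>0$ such that, a.s. for every $\theta\in\Theta$, $x\in\mathsf X$, $y\in\mathsf Y$,
\begin{equation*}
|\langle\phi(\theta,\cdot,x)-y,\gamma\rangle|\le C,\qquad |\langle\nabla_\theta\phi(\theta,\cdot,x),\gamma\rangle|\le C,
\end{equation*}
and likewise $|\langle(\phi(\theta,\cdot,x)-y)\nabla_\theta\phi(\theta,\cdot,x),\gamma\rangle|\le C$; these are precisely the estimates \eqref{borne_phi-y} and \eqref{borne nablaphi} invoked after \eqref{d-D}, and follow from the moment bounds $\langle\mathfrak b^q,\gamma\rangle<+\infty$. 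Throughout, $|\nabla_\theta f|$ and $|\nabla^2 f|$ are bounded on $\Theta$ by $\|f\|_{\mathcal C^1(\Theta)}$ and $\|f\|_{\mathcal C^2(\Theta)}$ respectively.

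\textbf{Items \ref{remTi1} and \ref{remTi2}} are immediate: each integrand is a product of the bounded quantities above times $|\nabla_\theta f|\le\|f\|_{\mathcal C^1(\Theta)}$, integrated against the probability measures $\mu_s^N$, $\gamma$ and $\pi$ over $s\in[0,1]$. The explicit prefactor $\eta/N$ then yields the bound $C\|f\|_{\mathcal C^1(\Theta)}/N$ directly, with $C$ depending only on $\eta$ and the structural constants.

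\textbf{For item \ref{remTi6}}, the term $\mathbf W_t^N[f]=-\mathbf V_t^N[f]+\eta\int_{\lfloor Nt\rfloor/N}^t\langle\nabla_\theta f\cdot\nabla_\theta\mathscr D_{\rm KL},\mu_s^N\rangle\di s$ is an integral over the short interval $(\lfloor Nt\rfloor/N,t]$ of length at most $1/N$; since the integrands of both $\mathbf V_t^N[f]$ and the KL-term are bounded a.s. on $\Theta$ (the KL-gradient \eqref{eq.kl_1} is continuous, hence bounded on the compact $\Theta$) by $C\|f\|_{\mathcal C^1(\Theta)}$, we obtain $\sup_{t}|\mathbf W_t^N[f]|\le C\|f\|_{\mathcal C^1(\Theta)}/N$. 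For the Taylor remainder $\mathbf R_t^N[f]=\sum_{k=0}^{\lfloor Nt\rfloor-1}\mathbf R_k^N[f]$, we use \eqref{def_R_k^N} and $|\nabla^2 f|\le\|f\|_{\mathcal C^2(\Theta)}$ to get $|\mathbf R_k^N[f]|\le\frac{1}{2N}\|f\|_{\mathcal C^2(\Theta)}\sum_{i=1}^N|\theta_{k+1}^i-\theta_k^i|^2$. The crucial point is that each increment $\theta_{k+1}^i-\theta_k^i$ carries a factor $\eta/N$ from \eqref{eq.algo-ideal}, so $|\theta_{k+1}^i-\theta_k^i|\le C/N$ a.s.\ (using the bounded increments, which follow from the same a priori estimates as in Lemma~\ref{lem:unif_bound_param}); hence $\sum_{i=1}^N|\theta_{k+1}^i-\theta_k^i|^2\le C/N$, giving $|\mathbf R_k^N[f]|\le C\|f\|_{\mathcal C^2(\Theta)}/N^2$, and summing the $\lfloor Nt\rfloor\le N$ terms yields $\sup_t|\mathbf R_t^N[f]|\le C\|f\|_{\mathcal C^2(\Theta)}/N$.

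\textbf{The martingale term} is the only step requiring more than a direct bound, and is where I expect the main (though still modest) effort. Since $\mathbf M_t^N[f]=\sum_{k=0}^{\lfloor Nt\rfloor-1}\mathbf M_k^N[f]$ with $\mathbf E[\mathbf M_k^N[f]\mid\mathcal F_k^N]=0$, the summands form martingale increments. The plan is to bound $\mathbf E[|\mathbf M_t^N[f]|]\le(\mathbf E[|\mathbf M_t^N[f]|^2])^{1/2}$ and exploit orthogonality: by the tower property the cross terms vanish, so
\begin{equation*}
\mathbf E\big[|\mathbf M_t^N[f]|^2\big]=\sum_{k=0}^{\lfloor Nt\rfloor-1}\mathbf E\big[|\mathbf M_k^N[f]|^2\big].
\end{equation*}
Each $\mathbf M_k^N[f]$ is (a centered version of) a quantity of order $\eta/N$, since the double-sum term in its definition is $\frac{\eta}{N^3}\sum_{i,j}(\cdots)$ with $N^2$ bounded summands and the second term is $O(\eta/N^2)$; thus $|\mathbf M_k^N[f]|\le C\|f\|_{\mathcal C^1(\Theta)}/N$ a.s., giving $\mathbf E[|\mathbf M_k^N[f]|^2]\le C\|f\|_{\mathcal C^1(\Theta)}^2/N^2$. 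Summing the at most $N$ terms yields $\mathbf E[|\mathbf M_t^N[f]|^2]\le C\|f\|_{\mathcal C^1(\Theta)}^2/N$, and taking square roots gives $\sup_{t\in[0,1]}\mathbf E[|\mathbf M_t^N[f]|]\le C\|f\|_{\mathcal C^1(\Theta)}/\sqrt N$, as claimed. The only subtlety is verifying the a.s.\ bound on $|\mathbf M_k^N[f]|$ uniformly over the compact $\Theta$, which again reduces to the deterministic bounds \eqref{borne_phi-y}–\eqref{borne nablaphi} combined with the boundedness of $\nabla_\theta f$ on $\Theta$.
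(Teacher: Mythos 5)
Your proposal is correct and follows essentially the same route as the paper's proof: the same deterministic bounds \eqref{borne_phi-y}--\eqref{borne nablaphi} for Items \ref{remTi1}--\ref{remTi2}, the same short-interval and bounded-increment arguments (via \eqref{m^l+1-m^l} and Lemma \ref{lem:unif_bound_param}) for Item \ref{remTi6}, and the same martingale-increment orthogonality combined with the a.s. bound $|\mathbf M_k^N[f]|\le C\Vert f\Vert_{\mathcal C^1(\Theta)}/N$ and Cauchy--Schwarz for the final estimate. No gaps.
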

\begin{proof}   All along the proof, $C>0$ denotes a positive constant independent of $N\ge 1,k\in \{0,\ldots,N-1\},(s,t)\in [0,1]^2,(x,y)\in \mathsf X\times \mathsf Y,\theta\in \Theta,z\in \mathbf R^d$, and  $f\in\mathcal C^\infty(\Theta)$ which can change from one occurrence to another.
Using \eqref{borne nablaphi}, the Cauchy-Schwarz inequality, and the  fact that $\nabla_\theta f$ is bounded over  $\Theta$ imply:
\begin{align}\label{borne_nabla}
|\langle\nabla_\theta f(\theta)\cdot\nabla_\theta\phi(\theta,\cdot,x),\gamma\rangle|\le \langle|\nabla_\theta f(\theta)\cdot\nabla_\theta\phi(\theta,\cdot,x)|,\gamma\rangle \leq C\Vert f\Vert_{\mathcal C^1(\Theta)}.
\end{align}
Combining \eqref{borne_phi-y} and \eqref{borne_nabla}, we obtain:
\begin{align*}
\int_{0}^1\int_{\mathsf X\times\mathsf Y} \Big|\Big\langle\langle\phi(\cdot,\cdot,x)-y,\gamma\rangle\langle\nabla_\theta f\cdot\nabla_\theta \phi(\cdot,\cdot,x),\gamma\rangle,\mu_s^N\Big\rangle\Big| \pi(\di x,\di y)\di s\leq C\Vert f\Vert_{\mathcal C^1(\Theta)}
\end{align*}
and
\begin{align*}
\int_{0}^1\int_{\mathsf X\times\mathsf Y} \Big|\Big\langle(\phi(\cdot,\cdot,x)-y)\nabla_mf\cdot\nabla_m\phi(\cdot,\cdot,x),\mu_s^N\otimes\gamma\Big\rangle\Big| \pi(\di x,\di y)\di s\leq C\Vert f\Vert_{\mathcal C^1(\Theta)},
\end{align*}
which proves Items \ref{remTi1} and \ref{remTi2}.

 Let us now prove Item \ref{remTi6}. By \eqref{borne_phi-y} and \eqref{borne_nabla}, $\sup_{t\in[0,1]}|\mathbf V_t^{N}[f]|\leq C\Vert f\Vert_{\mathcal C^1(\Theta)}/N$.
On the other hand,
   because $f\in \mathcal C^\infty(\Theta)$ and $\theta\mapsto \nabla_\theta \mathscr D_{{\rm KL}}(q_{\theta }^1|P_0^1)$ is continuous (see \eqref{eq.kl_1}) over  $\Theta$ which is compact, it holds,  $ \Vert \nabla_\theta f\cdot \nabla_\theta \mathscr D_{{\rm KL}}(q_{\theta }^1|P_0^1)\Vert_{\infty,\Theta}<+\infty$.
%
   Hence, it holds:
\begin{align*}
\sup_{t\in[0,1]}\Big|\int^t_{\frac{\lfloor Nt\rfloor}{N}}\big \langle\nabla_\theta f\cdot \nabla_\theta \mathscr D_{{\rm KL}}(q_{\,_\cdot }^1|P_0^1),\mu_s^N\big \rangle\di s\Big|\leq C\Vert f\Vert_{\mathcal C^1(\Theta)}/N.
\end{align*}
Using \eqref{def V_t^N}, it then holds   $\sup_{t\in[0,1]}|\mathbf W_t^{N}[f]| \leq   C\Vert f\Vert_{\mathcal C^1(\Theta)}/N$.
  Since $f\in\mathcal C^\infty(\Theta)$, we have, by \eqref{def_R_k^N}, for $N\ge 1$ and $0\leq k\leq N-1$, $|\mathbf R_k^N[f]|\leq \Vert f\Vert_{\mathcal C^2(\Theta)}\frac CN\sum_{i=1}^N|\theta_{k+1}^i-\theta_k^i|^2$.
By \eqref{m^l+1-m^l} and Lemma \ref{lem:unif_bound_param},   $|\theta_{k+1}^i-\theta_k^i|^2\leq C/N^2$ and consequently,   one has: \begin{equation}\label{boundR_k}
|\mathbf R_k^N[f]|\leq {C}\Vert f\Vert_{\mathcal C^2(\Theta)}/{N^2}.
\end{equation}
Hence, for all $t\in[0,1]$, $|\mathbf R_t^N[f]|\leq  {C}\Vert f\Vert_{\mathcal C^2(\Theta)}/{N}$.
This proves Item \ref{remTi6}.

Let us now prove the last item in Lemma \ref{le.error}.  Let $t\in[0,1]$. We have, by \eqref{def D_t},
\begin{align*}
|\mathbf M_t^{N}[f]|^2=\sum_{k=0}^{\lfloor Nt\rfloor-1}\big |\mathbf M_{k}^{N}[f] \big |^2+2\sum_{k<j} \mathbf M_{k}^{N}[f] \,  \mathbf M_{j}^{N}[f]  .
\end{align*}
For all $0\leq k<j<\lfloor Nt\rfloor$, $\mathbf M_{k}^{N}[f]$ is $\mathcal F_{j}^N$-measurable (see \eqref{eq.Fk1}), and since $\mathbf E\big [\mathbf M_{j}^{N}[f]|\mathcal F_j^N\big]=0$, one deduces that  $\mathbf E\big [\, \mathbf M_{k}^{N}[f] \,  \mathbf M_{j}^{N}[f]\, \big ]=\mathbf E\big [\mathbf M_{k}^{N}[f] \, \mathbf E\big [\mathbf M_{j}^{N}[f]|\mathcal F_j^N\big]\, \big ]=0$.
Hence,  $\mathbf E[|\mathbf M_t^{N}[f]|^2]=\sum_{k=0}^{\lfloor Nt\rfloor-1} \mathbf E[|\mathbf M_k^{N}[f]|^2]$.
By \eqref{borne_phi-y} and \eqref{borne_nabla}, one has a.s. for all $0\leq k\leq N-1$,
\begin{equation}\label{bound M_k}
 |\mathbf M_{k}^{N}[f]|\leq C\Vert f\Vert_{\mathcal C^1(\Theta)}/N.
\end{equation}
   Hence, $
\mathbf E[|\mathbf M_t^{N}[f]|^2]\leq  C\Vert f\Vert_{\mathcal C^1(\Theta)}/N$, which proves the last inequality in Lemma \ref{le.error}.
\end{proof}



\subsection{Convergence to the limit equation as $N\to+\infty$}


In this section we prove the relative compactness of $(\mu^N)_{N\ge 1}$   in  $\mathcal D([0,1],\mathcal P(\Theta))$. We then show that any of its limit points satisfies the limit equation \eqref{eq_limit}.

 \subsubsection{Wasserstein spaces and duality formula}
In this section we recall some basic  results which will be used throughout this work on the space   $\mathcal P(\mathcal S)$ when $(\mathcal S, \mathsf d)$ is a  Polish space. First  when endowed with the weak convergence topology, $\mathcal P(\mathcal S)$   is a   Polish space~\cite[Theorem~6.8]{billingsley2013convergence}. In addition, $\mathcal P_q(\mathcal S)=  \{ \nu  \in  \mathcal P(\mathcal S), \int_{\mathcal S} \mathsf d(w_0,w)^q \nu (\di w)<+\infty \}$, where $w_0\in \mathcal S$ is arbitrary (note that this space   was defined previously  in \eqref{eq.Wq} when $\mathcal S=\mathbf R^{d+1}$) when endowed with the $\mathsf W_q$ metric is also a Polish space~\cite[Theorem 6.18]{villani2009optimal}. 
Recall also the duality formula for the $\mathsf W_1$-distance on $\mathcal P_1(\mathcal  S)$ (see e.g~\cite[Remark 6.5]{villani2009optimal}):
\begin{equation}\label{Kantorovitch Rubinstein}
\mathsf W_1(\mu,\nu)=\sup\Big \lbrace\big|\int_{\mathcal S}f(w)\di\mu(w)-\int_{\mathcal  S}f(w)\nu(\di w)\big|, \  \|f\|_{\text{Lip}}\leq 1\Big\rbrace.
\end{equation}
Finally, when $\mathcal K\subset \mathbf R^{d+1}$ is compact, the  convergence in $\mathsf W_q$-distance is equivalent to the usual weak convergence on $\mathcal P(\mathcal K)$ (see e.g.~\cite[Corollary 6.13]{villani2009optimal}).

%
%
%


\subsubsection{Relative compactness}
\label{sec.RC-1}



The main result of this section is to prove that  $(\mu^N)_{N\ge 1}$ is relatively compact in $\mathcal D([0,1],\mathcal P(\Theta))$, which is the purpose of    Proposition~\ref{p-rc} below. To this end, we need to prove that for all $f\in\mathcal C^\infty(\Theta)$,  every sequence
$(\langle f,\mu_t^N\rangle)_{N\ge 1}$ satisfies some regularity conditions, which is the purpose of the next result.

\begin{lemma}[Regularity condition]
\label{lem_reg_cond_inR}
Assume  {\rm \textbf{A1}}$\to${\rm \textbf{A4}}.
Then there exists $C>0$ such that a.s.  for all $f\in\mathcal C^\infty(\Theta)$, $0\leq r<t\leq 1$, and  $N\ge1$:
\begin{align}\label{bound_reg_cond}
|\langle f,\mu_t^N\rangle-\langle f,\mu_r^N\rangle|\leq C\Vert f\Vert_{\mathcal C^2(\Theta)}\Big[|t-r|+\frac{|t-r|}{N}+\frac 1N \Big].
\end{align}

\end{lemma}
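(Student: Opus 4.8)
The plan is to bypass the pre-limit equation \eqref{eq.pre_limit} entirely---since its martingale term $\mathbf M_t^N[f]$ is only controlled in $L^2$ and not almost surely, whereas \eqref{bound_reg_cond} is an almost-sure statement---and instead to exploit the telescoping structure of $t\mapsto\langle f,\mu_t^N\rangle$ together with the per-step estimates already obtained in the proof of Lemma \ref{le.error}. Since $\mu_t^N=\nu_{\lfloor Nt\rfloor}^N$, for $0\le r<t\le 1$ I would write
\begin{equation*}
\langle f,\mu_t^N\rangle-\langle f,\mu_r^N\rangle=\sum_{k=\lfloor Nr\rfloor}^{\lfloor Nt\rfloor-1}\big(\langle f,\nu_{k+1}^N\rangle-\langle f,\nu_k^N\rangle\big),
\end{equation*}
so that it suffices to bound each increment and to count the number of summands.

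For a single increment I would use the Taylor--Lagrange expansion already recorded just above \eqref{eq_1}, namely
\begin{equation*}
\langle f,\nu_{k+1}^N\rangle-\langle f,\nu_k^N\rangle=\frac 1N\sum_{i=1}^N\nabla_\theta f(\theta_k^i)\cdot(\theta_{k+1}^i-\theta_k^i)+\mathbf R_k^N[f],
\end{equation*}
and split it into a first-order drift and the quadratic remainder \eqref{def_R_k^N}. Using the almost-sure step bound $|\theta_{k+1}^i-\theta_k^i|\le C/N$ (established in the proof of Lemma \ref{le.error} from the algorithm \eqref{eq.algo-ideal} and the uniform bound of Lemma \ref{lem:unif_bound_param}) together with $|\nabla_\theta f(\theta_k^i)|\le \|f\|_{\mathcal C^1(\Theta)}$, the drift is bounded by $C\|f\|_{\mathcal C^1(\Theta)}/N$; and by \eqref{boundR_k} the remainder satisfies $|\mathbf R_k^N[f]|\le C\|f\|_{\mathcal C^2(\Theta)}/N^2$. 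Thus each increment is $O(\|f\|_{\mathcal C^2(\Theta)}/N)$ almost surely, with the crucial refinement that the drift contributes at order $1/N$ while the remainder contributes only at order $1/N^2$.

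Summing over the $\lfloor Nt\rfloor-\lfloor Nr\rfloor$ increments and using the elementary bound $\lfloor Nt\rfloor-\lfloor Nr\rfloor\le N|t-r|+1$, I would obtain from the drift a contribution
\begin{equation*}
(N|t-r|+1)\,\frac{C\|f\|_{\mathcal C^1(\Theta)}}{N}=C\|f\|_{\mathcal C^1(\Theta)}\Big(|t-r|+\frac 1N\Big),
\end{equation*}
and from the remainder a contribution
\begin{equation*}
(N|t-r|+1)\,\frac{C\|f\|_{\mathcal C^2(\Theta)}}{N^2}=C\|f\|_{\mathcal C^2(\Theta)}\Big(\frac{|t-r|}{N}+\frac{1}{N^2}\Big).
\end{equation*}
Adding the two and absorbing $1/N^2$ into $1/N$ yields exactly the claimed bound $C\|f\|_{\mathcal C^2(\Theta)}\big[|t-r|+\tfrac{|t-r|}{N}+\tfrac 1N\big]$; in particular the $|t-r|/N$ term is precisely the trace left by summing the $O(1/N^2)$ Taylor remainders over $O(N|t-r|)$ steps. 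Since all constants are independent of $f$, $r$, $t$, $N$ and the bounds hold on the almost-sure event where Lemma \ref{lem:unif_bound_param} applies, this gives an almost-sure statement uniform in $f\in\mathcal C^\infty(\Theta)$. The argument is essentially bookkeeping, so I anticipate no serious obstacle; the only point requiring care is to keep the drift ($1/N$ per step) and the remainder ($1/N^2$ per step) separate when counting steps, since collapsing them prematurely would lose the sharp dependence on $|t-r|$ displayed in \eqref{bound_reg_cond}.
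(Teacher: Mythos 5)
Your proof is correct, and it takes a genuinely more elementary route than the paper. The paper proves this lemma by going through the pre-limit equation \eqref{eq.pre_limit}: it writes $\langle f,\mu_t^N\rangle-\langle f,\mu_r^N\rangle$ as the sum of the integral drift term $\mathbf A_{r,t}^N[f]$, the KL integral, and the increments of $\mathbf M^N[f]$, $\mathbf W^N[f]$ and $\mathbf R^N[f]$, and then bounds each of the five pieces using \eqref{borne_phi-y}, \eqref{borne_nabla}, the a.s.\ per-step martingale bound \eqref{bound M_k}, Item \ref{remTi6} of Lemma \ref{le.error}, and \eqref{boundR_k}. You instead telescope the raw increments $\langle f,\nu_{k+1}^N\rangle-\langle f,\nu_k^N\rangle$ and bound each one directly via the Taylor--Lagrange expansion together with the a.s.\ step estimate $|\theta_{k+1}^i-\theta_k^i|\le C/N$ (which is exactly \eqref{m^l+1-m^l} combined with Lemma \ref{lem:unif_bound_param}), then count $\lfloor Nt\rfloor-\lfloor Nr\rfloor\le N|t-r|+1$ steps. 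The bookkeeping is sound: the drift contributes $C\Vert f\Vert_{\mathcal C^1(\Theta)}(|t-r|+1/N)$, the remainder $C\Vert f\Vert_{\mathcal C^2(\Theta)}(|t-r|/N+1/N^2)$, and absorbing $1/N^2$ into $1/N$ gives \eqref{bound_reg_cond}; the convexity of the ball $\Theta$ guarantees $\widehat{\theta_k^i}\in\Theta$, and the a.s.\ event can be taken uniform in $f$, $r$, $t$, $N$ since the constant in Lemma \ref{lem:unif_bound_param} is deterministic. What your route buys is the observation that this regularity estimate has nothing to do with the martingale structure: it is a pure consequence of the a.s.\ $O(1/N)$ step size and the $O(N|t-r|)$ step count, so the conditional-expectation splitting $\mathbf D_k^N+\mathbf M_k^N$ is superfluous here. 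What the paper's route buys is economy within the overall argument: it recycles verbatim the five-term decomposition and the bounds it needs anyway for passing to the limit in Section \ref{sec.LP-1}.

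One small factual correction to your motivation: the paper's martingale term is not ``only controlled in $L^2$.'' The paper establishes the almost-sure per-step bound $|\mathbf M_k^N[f]|\le C\Vert f\Vert_{\mathcal C^1(\Theta)}/N$ in \eqref{bound M_k} (a consequence of \eqref{borne_phi-y} and \eqref{borne_nabla}, since both the raw drift and its conditional expectation $\mathbf D_k^N[f]$ are bounded a.s.), and this is precisely what its proof of the lemma uses; only the final statement of Lemma \ref{le.error}, $\sup_t\mathbf E[|\mathbf M_t^N[f]|]\le C/\sqrt N$, is an $L^2$-type estimate. So the paper's proof is also an almost-sure argument; your route simply avoids introducing $\mathbf M_k^N$ at all.
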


\begin{proof}
Let $f\in\mathcal C^\infty(\Theta)$ and let $N\ge1$  and $0\leq r<t\leq 1$. In the following $C>0$ is a positive constant independent of  $f\in\mathcal C^\infty(\Theta)$,  $N\ge1$,  and $0\leq r<t\leq 1$, which can change from one occurrence to another.
From~\eqref{eq.pre_limit}, we have
\begin{align}
\nonumber
\langle f,\mu_t^N\rangle-\langle f,\mu_r^N\rangle&=\mathbf A_{r,t}^N[f]   - \eta \int_r^t \big \langle\nabla_\theta f\cdot \nabla_\theta \mathscr D_{{\rm KL}}(q_{\,_\cdot }^1|P_0^1),\mu_s^N\big \rangle\di s \\
\label{eq_prelim_reg_cond}
&\quad +\mathbf M_t^{N}[f]-\mathbf M_{r}^N[f] +\mathbf W_t^{N}[f]-\mathbf  W_{r}^N[f]+\mathbf R_t^N[f]-\mathbf R_r^N[f],
\end{align}
where
\begin{align*}
\mathbf A_{r,t}^N[f]&=-\eta\int_r^t\int_{\mathsf X\times\mathsf Y}\langle\phi(\cdot,\cdot,x)-y,\mu_s^N\otimes\gamma\rangle\langle\nabla_\theta f\cdot\nabla_\theta \phi(\cdot,\cdot,x),\mu_s^N\otimes\gamma\rangle  \pi(\di x,\di y)\nonumber\\
&\quad +\frac{\eta}{N}\int_r^t\int_{\mathsf X\times\mathsf Y} \Big\langle\langle\phi(\cdot,\cdot,x)-y,\gamma\rangle\langle\nabla_\theta f\cdot\nabla_\theta \phi(\cdot,\cdot,x),\gamma\rangle,\mu_s^N\Big\rangle \pi(\di x,\di y)\nonumber\\
&\quad -\frac{\eta}{N}\int_r^t\int_{\mathsf X\times\mathsf Y} \Big\langle(\phi(\cdot,\cdot,x)-y)\nabla_\theta f\cdot\nabla_\theta \phi(\cdot,\cdot,x),\mu_s^N\otimes\gamma\Big\rangle \pi(\di x,\di y).
\end{align*}
By \eqref{borne_phi-y} and \eqref{borne_nabla}, $
 |\mathbf A_{r,t}^N[f]| \leq C\Vert f\Vert_{\mathcal C^1(\Theta)}\big[|t-r|+\frac{|t-r|}{N}\big]$.
In addition, since $\theta\mapsto  \mathscr D_{{\rm KL}}(q_{\theta }^1|P_0^1)$ is bounded over $\Theta$ (since it is smooth and  $\Theta$  is compact),
$$
\Big| \int_r^t \big \langle\nabla_\theta f\cdot \nabla_\theta \mathscr D_{{\rm KL}}(q_{\,_\cdot }^1|P_0^1),\mu_s^N\big \rangle\di s  \Big|\leq C\Vert f\Vert_{\mathcal C^1(\Theta)}|t-r|.
$$
Furthermore, using \eqref{bound M_k},
$$
|\mathbf M_t^{N}[f]-\mathbf M_{r}^N[f]|=\Big|\sum_{k=\lfloor Nr\rfloor}^{\lfloor Nt\rfloor-1}\mathbf  M_{k}^{N}[f]\Big|\leq (\lfloor Nt\rfloor-\lfloor Nr\rfloor) {C\Vert f\Vert_{\mathcal C^1(\Theta)}}/{N}.
$$
Next, we have, by Item \ref{remTi6} in  Lemma \ref{le.error}, $
|\mathbf W_t^{N}[f]-\mathbf W_{r}^N[f]|\leq|\mathbf W_t^{N}[f]|+|\mathbf W_{r}^N[f]|\leq  {C}\Vert f\Vert_{\mathcal C^2(\Theta)}/{N}$.
Finally, by \eqref{boundR_k},
$$
|\mathbf R_t^N[f]-\mathbf  R_r^N[f]|=\Big|\sum_{k=\lfloor Nr\rfloor}^{\lfloor Nt\rfloor-1}\mathbf R_k^N[f]\Big|\leq (\lfloor Nt\rfloor-\lfloor Nr\rfloor) {C\Vert f\Vert_{\mathcal C^2(\Theta)}}/{N^2}.
$$
The proof of Proposition~\ref{lem_reg_cond_inR} is complete plugging all the previous estimates in \eqref{eq_prelim_reg_cond}.
\end{proof}

\begin{proposition}[Relative compactness]\label{p-rc}
Assume  {\rm \textbf{A1}}$\to${\rm \textbf{A4}}. Then, the sequence $(\mu^N)_{N\ge 1}$ is relatively compact in  $\mathcal D([0,1],\mathcal P(\Theta))$.
\end{proposition}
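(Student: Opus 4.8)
The plan is to invoke Jakubowski's tightness criterion \cite[Theorem 3.1]{jakubowski1986skorokhod} with $E=\mathcal P(\Theta)$. This criterion reduces the relative compactness of a sequence of $\mathcal D([0,1],E)$-valued processes to two ingredients: a \emph{compact containment} condition, asserting that the marginals stay (with high probability) in a fixed compact subset of $E$, and a \emph{tightness of one-dimensional projections} condition, requiring that for every $F$ in a suitable family $\mathbb F$ of continuous maps $E\to\mathbf R$ which separates points and is closed under addition, the real-valued processes $(F(\mu^N_\cdot))_{N\ge1}$ be tight in $\mathcal D([0,1],\mathbf R)$.

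The first ingredient is immediate in the present bounded setting. By Lemma~\ref{lem:unif_bound_param} the parameters live a.s.\ in the compact set $\Theta$, so each $\mu^N_t\in\mathcal P(\Theta)$; and since $\Theta$ is compact, $\mathcal P(\Theta)$ is itself compact for the weak topology by Prokhorov's theorem. The whole state space being compact, the compact containment condition holds trivially with $K=\mathcal P(\Theta)$. For the separating family I would take $\mathbb F=\{\mathsf m\mapsto\langle f,\mathsf m\rangle:f\in\mathcal C^\infty(\Theta)\}$ (or a countable subfamily still separating points): each such map is weakly continuous, the family separates points of $\mathcal P(\Theta)$, and it is stable under addition since $\langle f,\mathsf m\rangle+\langle g,\mathsf m\rangle=\langle f+g,\mathsf m\rangle$. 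It therefore remains to prove, for each fixed $f\in\mathcal C^\infty(\Theta)$, that $(\langle f,\mu^N_\cdot\rangle)_{N\ge1}$ is tight in $\mathcal D([0,1],\mathbf R)$.

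Boundedness is clear, $|\langle f,\mu^N_t\rangle|\le\|f\|_{\infty,\Theta}$ for all $t$, so only a control of the path oscillations is needed, and this is exactly what the regularity estimate of Lemma~\ref{lem_reg_cond_inR} supplies: a.s.\ for all $0\le r<t\le1$,
\[
|\langle f,\mu^N_t\rangle-\langle f,\mu^N_r\rangle|\le C\|f\|_{\mathcal C^2(\Theta)}\Big(|t-r|+\frac{|t-r|}{N}+\frac1N\Big).
\]
Since $t\mapsto\mu^N_t$ is piecewise constant with jumps only at the times $k/N$, each of size $O(1/N)$, this deterministic bound controls the c\`adl\`ag modulus of continuity via $w'(\langle f,\mu^N_\cdot\rangle,\delta)\le w(\langle f,\mu^N_\cdot\rangle,\delta)\le C\|f\|_{\mathcal C^2(\Theta)}(\delta+\delta/N+1/N)$, whence $\limsup_N w'(\langle f,\mu^N_\cdot\rangle,\delta)\le C\|f\|_{\mathcal C^2(\Theta)}\,\delta\to0$ as $\delta\to0$. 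Together with boundedness this yields the tightness of every real-valued projection, and Jakubowski's criterion then delivers the relative compactness of $(\mu^N)_{N\ge1}$ in $\mathcal D([0,1],\mathcal P(\Theta))$.

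The comparatively light nature of this argument, in contrast with the Sobolev-space machinery of \cite{descours2022law}, is entirely due to the a.s.\ boundedness of the parameters (Lemma~\ref{lem:unif_bound_param}), which simultaneously makes the state space $\mathcal P(\Theta)$ compact and provides the uniform deterministic increment bound. The only point requiring genuine care is the verification of Jakubowski's hypotheses at the level of the Skorohod topology: one must use the c\`adl\`ag modulus $w'$ rather than the uniform one and check that the $O(1/N)$ jumps of the step process $t\mapsto\langle f,\mu^N_t\rangle$ do not obstruct the modulus estimate. In fact these vanishing jumps are what force the limit points to be continuous, a property exploited later to conclude that uniqueness of the limit equation holds in $\mathcal C([0,1],\mathcal P(\Theta))$.
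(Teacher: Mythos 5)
Your proposal is correct and takes essentially the same route as the paper: Jakubowski's criterion \cite[Theorem 3.1]{jakubowski1986skorokhod} with $E=\mathcal P(\Theta)$, compact containment for free from the compactness of $\mathcal P(\Theta)$ (itself a consequence of Lemma \ref{lem:unif_bound_param}), the same separating family $\mathbb F$, and control of the one-dimensional projections $\langle f,\mu^N\rangle$ via the increment bound of Lemma \ref{lem_reg_cond_inR}. The only cosmetic difference is that the paper verifies condition (ii) of \cite[Theorem 13.2]{billingsley2013convergence} by an explicit $\delta$-sparse subdivision and Markov's inequality, whereas you pass directly through the modulus comparison $w'_{\langle f,\mu^N\rangle}(\delta)\le C\Vert f\Vert_{\mathcal C^2(\Theta)}(\delta+\delta/N+1/N)$ (strictly one should allow intervals of length up to $2\delta$, i.e. $w'(\delta)\lesssim w(2\delta)$, but this changes nothing in the limit $\delta\to 0$).
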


\begin{proof}
The proof consists in applying \cite[Theorem 3.1]{jakubowski1986skorokhod} with  $E=\mathcal P(\Theta)$  endowed with the weak convergence topology.
 Set $\mathbb F=\{\mathfrak L_f, f\in \mathcal C^\infty(\Theta)\}$ where
$$\mathsf  L_f: \nu \in \mathcal P(\Theta)\mapsto \langle f, \nu \rangle.$$

The class of continuous functions $\mathbb F$ on $\mathcal P(\Theta)$ satisfies Conditions~\cite[(3.1) and (3.2) in Theorem 3.1]{jakubowski1986skorokhod}.

On the other hand, the condition~\cite[(3.3) in Theorem 3.1]{jakubowski1986skorokhod} is satisfied  since $\mathcal P(\Theta)$ is compact because $\Theta$ is compact  (see e.g. \cite[Corollary 2.2.5]{panaretos2020invitation} together with \cite[Corollary 6.13]{villani2009optimal}).

It remains to verify Condition (3.4) of \cite[Theorem 3.1]{jakubowski1986skorokhod}, i.e.  that for all $f\in \mathcal C^\infty(\Theta)$, $(\langle f,\mu^N\rangle)_{N\ge1}$ is   relatively compact in $\mathcal D([0,1],\mathbf R)$. To this end, we apply \cite[Theorem~13.2]{billingsley2013convergence}. Condition (i) in \cite[Theorem~13.2]{billingsley2013convergence} is satisfied because
$|\langle f,\mu^N_t\rangle|\le \Vert f\Vert_{\infty,\Theta}$ for all $t\in [0,1]$ and $N\ge 1$.  Let us now show that Condition (ii) in \cite[Theorem~13.2]{billingsley2013convergence} holds.
For this purpose, we use Lemma \ref{lem_reg_cond_inR}.
For $\delta,\beta>0$  sufficiently small, it is possible to construct a subdivision $\lbrace t_i\rbrace_{i=0}^v$ of $[0,1]$ such that $t_0 =0$, $t_v=1$, $t_{i+1}-t_i = \delta+\beta$ for $i\in\{0,\dots,v-2\}$ and $\delta+\beta\leq t_v -t_{v-1} \leq 2(\delta+\beta)$.  According to the terminology introduced  in \cite[Section 12]{billingsley2013convergence},  $\lbrace t_i\rbrace_{i=0}^v$ is  $\delta$-sparse.  Then, by Lemma~\ref{lem_reg_cond_inR}, there exists $C>0$ such that  a.s.
  for all $\delta,\beta>0$, all such subdivision $\lbrace t_i\rbrace_{i=0}^v$,  $i\in\{0,\dots,v-1\}$, and $N\ge 1$,
$$\sup_{t,r\in[t_i ,t_{i+1} ]} \!\!\! |\langle f,\mu_t^N\rangle-\langle f,\mu_r^N\rangle|\leq C\Big(|t_{i+1} -t_{i} |+\frac{|t_{i+1} -t_{i} |}{N}+\frac 1N \Big)\leq C\Big(2(\delta+\beta)+\frac{2(\delta+\beta)}{N}+\frac 1N \Big).  $$
Thus,  one has:
$$\inf_{\beta>0}\max_i\sup_{t,r\in[t_i ,t_{i+1} ]}|\langle f,\mu_t^N\rangle-\langle f,\mu_r^N\rangle|\leq C\Big(2\delta+\frac{2\delta}{N}+\frac 1N \Big).$$
Consequently, there exists $C>0$ such that a.s. for all $\delta>0$ small enough and $N\ge 1$,
$$w'_{\langle f,\mu^N\rangle }(\delta):=\inf_{\substack{\{t_i\}\\ \delta\text{-sparse}}}\max_i\sup_{t,r\in[t_i,t_{i+1}]}|\langle f,\mu_t^N\rangle-\langle f,\mu_r^N\rangle|\leq C\Big(2\delta+\frac{2\delta}{N}+\frac 1N \Big). $$
This implies  $\lim_{\delta\to0}\limsup_{N\to+\infty}\mathbf E[w'_{\langle f,\mu^N\rangle }(\delta)]=0$. By Markov's inequality, this proves Condition~(ii) of \cite[Theorem~13.2]{billingsley2013convergence}. Therefore, for all $f\in\mathcal C^\infty(\Theta)$, using also Prokhorov theorem, the sequence $(\langle f,\mu^N\rangle)_{N\ge1}\subset \mathcal D([0,1],\mathbf R)$ is relatively compact.   In conclusion,
according to \cite[Theorem 3.1]{jakubowski1986skorokhod},  $(\mu^N)_{N\ge 1}\subset \mathcal D([0,1],\mathcal P(\Theta))$ is tight.
\end{proof}


\subsubsection{Limit points satisfy the limit equation \eqref{eq_limit}}
\label{sec.LP-1}

In this section we prove that every limit point of $(\mu^N)_{N\ge 1}$ in $\mathcal D([0,1],\mathcal P(\Theta))$ satisfies \eqref{eq_limit}.

\begin{lemma}\label{lem continuity proj sko borne}
Let $\mathsf m,(\mathsf m^N)_{N\ge 1}\subset \mathcal D([0,1],\mathcal P(\Theta))$  be such that $\mathsf m^N\to \mathsf m$ in $\mathcal D([0,1],\mathcal P(\Theta))$. Then, for all Lipschitz continuous function $f:\Theta\to\mathbf R$, we have $\langle f,\mathsf m^N\rangle\to \langle f,\mathsf m\rangle$ in $\mathcal D([0,1],\mathbf R)$.
\end{lemma}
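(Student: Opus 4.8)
The plan is to realise $\langle f,\mathsf m^N\rangle$ as the image of $\mathsf m^N$ under the fixed evaluation map
$\Phi_f\colon \nu\in\mathcal P(\Theta)\mapsto \langle f,\nu\rangle\in\mathbf R$, and to check that such a continuous map intertwines Skorokhod convergence. First I would record the continuity properties of $\Phi_f$. Since $\Theta$ is compact, a Lipschitz continuous $f\colon\Theta\to\mathbf R$ is bounded and continuous, so $\Phi_f$ is continuous for the weak convergence topology on $\mathcal P(\Theta)$ by the very definition of weak convergence; quantitatively, the Kantorovich--Rubinstein duality \eqref{Kantorovitch Rubinstein} gives $|\langle f,\nu\rangle-\langle f,\nu'\rangle|\leq \|f\|_{\text{Lip}}\,\mathsf W_1(\nu,\nu')$. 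Because $\Theta$ is compact, $\mathcal P(\Theta)$ is itself a compact metric space and weak convergence coincides with $\mathsf W_1$-convergence (see \cite[Corollary 6.13]{villani2009optimal}); hence $\Phi_f$ is in fact \emph{uniformly} continuous on $\mathcal P(\Theta)$, which is the property that will do the work.

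Next I would unwind the definition of convergence in $\mathcal D([0,1],\mathcal P(\Theta))$. Fix a metric $\mathsf d$ metrizing the weak topology on $\mathcal P(\Theta)$. By the time-change characterization of the Skorokhod $J_1$ topology (see \cite[Section~12]{billingsley2013convergence}), $\mathsf m^N\to\mathsf m$ is equivalent to the existence of increasing homeomorphisms $\lambda_N$ of $[0,1]$ such that $\|\lambda_N-\mathrm{id}\|_{\infty}\to 0$ and $\sup_{t\in[0,1]}\mathsf d(\mathsf m^N_{\lambda_N(t)},\mathsf m_t)\to 0$ as $N\to+\infty$. The key step is then to feed these \emph{same} time changes into the scalar processes: given $\varepsilon>0$, uniform continuity of $\Phi_f$ provides $\delta>0$ with $\mathsf d(\nu,\nu')<\delta\Rightarrow|\langle f,\nu\rangle-\langle f,\nu'\rangle|<\varepsilon$, and therefore $\sup_{t\in[0,1]}\mathsf d(\mathsf m^N_{\lambda_N(t)},\mathsf m_t)\to 0$ forces $\sup_{t\in[0,1]}|\langle f,\mathsf m^N_{\lambda_N(t)}\rangle-\langle f,\mathsf m_t\rangle|\to 0$.

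To conclude, I would note that $t\mapsto\langle f,\mathsf m^N_t\rangle$ and $t\mapsto\langle f,\mathsf m_t\rangle$ are càdlàg (as continuous images under $\Phi_f$ of the càdlàg paths $\mathsf m^N,\mathsf m$), so they are genuine elements of $\mathcal D([0,1],\mathbf R)$; the pair $(\lambda_N)$ together with the bounds $\|\lambda_N-\mathrm{id}\|_{\infty}\to 0$ and the uniform estimate just obtained witnesses, directly from the definition of the $J_1$ metric on $\mathcal D([0,1],\mathbf R)$, that $\langle f,\mathsf m^N\rangle\to\langle f,\mathsf m\rangle$. The only point requiring care is the transfer of a \emph{single} sequence of time changes from the measure-valued setting to the scalar one, uniformly in $t$; this is exactly where compactness of $\mathcal P(\Theta)$ (hence uniform, not merely pointwise, continuity of $\Phi_f$) is essential, and it is why the argument fits the compact-$\Theta$ regime of Theorem \ref{thm.ideal}. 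Beyond this bookkeeping with $\lambda_N$, no genuine obstacle arises.
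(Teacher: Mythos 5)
Your proof is correct and is essentially the paper's own argument: both rely on the time-change characterization of Skorokhod convergence (functions $\lambda_N$ with $\sup_t|\lambda_N(t)-t|\to 0$ and uniform convergence of $\mathsf m^N_{\lambda_N(\cdot)}$ to $\mathsf m$) and then transfer the same $\lambda_N$ to the scalar processes via the Kantorovich--Rubinstein bound $|\langle f,\nu\rangle-\langle f,\nu'\rangle|\le\|f\|_{\text{Lip}}\,\mathsf W_1(\nu,\nu')$, the paper simply taking $\mathsf W_1$ as the metrizing distance on $\mathcal P(\Theta)$ (so Lipschitz continuity of the evaluation map replaces your appeal to uniform continuity via compactness).
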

\begin{proof}
Let $f$ be such a function.
By \cite[p.124]{billingsley2013convergence}, $\mathsf m^N\to \mathsf m$ in $\mathcal D([0,1],\mathcal P(\Theta))$ iff there exist functions $\lambda_N: [0,1]\to [0,1]$ continuous, increasing onto itself such that $\sup_{t\in[0,1]}|\lambda_N(t)-t|\to_{N\to \infty} 0$ and
$\sup_{t\in [0,1]}\mathsf W_1(\mathsf m_{\lambda_N(t)}^N,\mathsf m_t)\to_{N\to \infty}0$.
Then  $\langle f,\mathsf m^N\rangle \to \langle f,\mathsf m\rangle$ in $\mathcal D([0,1],\mathbf R)$ since by  \eqref{Kantorovitch Rubinstein}, $\sup_{t\in [0,1]}|\langle f,\mathsf m_{\lambda_N(t)}^N\rangle-\langle f,\mathsf m_t\rangle| \leq \|f\|_{\rm{Lip}}  \sup_{t\in [0,1]}\mathsf W_1(\mathsf m_{\lambda_N(t)}^N,\mathsf m_t)\to_{N\to \infty}0$.
\end{proof}

\begin{proposition}[Continuity of the limit points of $\langle f,\mu^N\rangle$] \label{prop lim point continuous fixe f}
Let $f\in \mathcal C^\infty(\Theta).$ Then, any limit point of $(\langle f,\mu^N\rangle)_{N\ge1}\subset \mathcal D([0,1],\mathbf R)$ belong a.s. to $\mathcal C([0,1],\mathbf R)$.
\end{proposition}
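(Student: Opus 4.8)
The plan is to show that any subsequential limit $Y$ of $(\langle f,\mu^N\rangle)_{N\ge 1}$ in $\mathcal D([0,1],\mathbf R)$ has continuous paths almost surely. Such limit points exist because $(\mu^N)_{N\ge 1}$ is relatively compact in $\mathcal D([0,1],\mathcal P(\Theta))$ by Proposition~\ref{p-rc}, and the map $\mathsf m\mapsto\langle f,\mathsf m\rangle$ is continuous from $\mathcal D([0,1],\mathcal P(\Theta))$ to $\mathcal D([0,1],\mathbf R)$ for $f$ Lipschitz (hence for our smooth $f$ on the compact set $\Theta$), as established in Lemma~\ref{lem continuity proj sko borne}; the continuous mapping theorem then yields relative compactness of $(\langle f,\mu^N\rangle)_{N\ge 1}$. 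The whole argument rests on the observation that $t\mapsto\langle f,\mu_t^N\rangle$ is piecewise constant with jumps only at the times $t=k/N$, together with a uniform control of the size of these jumps which vanishes as $N\to+\infty$.

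First I would bound the jumps. Since $\mu_t^N=\nu_{\lfloor Nt\rfloor}^N$ (see \eqref{empirical_distrib1}), the discontinuities of $t\mapsto\langle f,\mu_t^N\rangle$ occur at $t=k/N$, where the jump equals $\langle f,\nu_{k+1}^N\rangle-\langle f,\nu_k^N\rangle=\frac1N\sum_{i=1}^N\big(f(\theta_{k+1}^i)-f(\theta_k^i)\big)$. Using that $\nabla_\theta f$ is bounded on the compact set $\Theta$ together with the one-step increment bound $|\theta_{k+1}^i-\theta_k^i|\le C/N$ (which follows from \eqref{m^l+1-m^l}, the estimates \eqref{borne_phi-y} and \eqref{borne nablaphi}, and Lemma~\ref{lem:unif_bound_param}, exactly as in the proof of Lemma~\ref{le.error}), I obtain a constant $C>0$ such that almost surely, for all $N\ge 1$,
\begin{equation*}
\sup_{t\in[0,1]}\big|\langle f,\mu_t^N\rangle-\langle f,\mu_{t-}^N\rangle\big|\le \frac{C\|f\|_{\mathcal C^1(\Theta)}}{N}.
\end{equation*}

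The final, and main, step is to transfer this vanishing-jump property to the limit. I would use the fact that the maximal-jump functional $J:x\in\mathcal D([0,1],\mathbf R)\mapsto \sup_{t\in[0,1]}|x(t)-x(t-)|$ is lower semicontinuous for the Skorokhod topology: if $x_N\to x$ with time changes $\lambda_N\to\mathrm{id}$ uniformly and $x_N\circ\lambda_N\to x$ uniformly, then near any jump time $t_0$ of $x$ one can reproduce, along $x_N$, an oscillation arbitrarily close to $|x(t_0)-x(t_0-)|$ over a shrinking interval, which in the $J_1$ topology cannot be realized by a staircase of small jumps and therefore forces a genuine jump of $x_N$ of comparable size; hence $J(x)\le\liminf_N J(x_N)$. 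Consequently $\{J>\varepsilon\}$ is open, and the portmanteau theorem applied to the weak convergence $\langle f,\mu^N\rangle\Rightarrow Y$ gives, for every $\varepsilon>0$,
\begin{equation*}
\mathbf P\big(J(Y)>\varepsilon\big)\le \liminf_{N\to+\infty}\mathbf P\big(J(\langle f,\mu^N\rangle)>\varepsilon\big)=0,
\end{equation*}
the last equality holding because the deterministic bound above forces $J(\langle f,\mu^N\rangle)<\varepsilon$ for $N$ large. Letting $\varepsilon\downarrow 0$ yields $J(Y)=0$ a.s., i.e. $Y\in\mathcal C([0,1],\mathbf R)$ a.s.

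I expect the delicate point to be precisely the lower semicontinuity of $J$ (equivalently, that a $J_1$-limit of càdlàg functions whose jumps vanish uniformly must be continuous); rather than reproving it in detail, I would invoke the corresponding standard statements for the Skorokhod space, e.g. \cite{billingsley2013convergence}. Alternatively, one may first apply Skorokhod's representation theorem to realize the convergence $\langle f,\mu^{N}\rangle\to Y$ almost surely, and then argue pathwise through the time-change characterization of $J_1$-convergence recalled in the proof of Lemma~\ref{lem continuity proj sko borne}, which bypasses the portmanteau step entirely.
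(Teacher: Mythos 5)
Your proof is correct and follows essentially the same route as the paper: establish the a.s.\ uniform bound $\sup_{t}|\langle f,\mu_t^N\rangle-\langle f,\mu_{t^-}^N\rangle|\le C\Vert f\Vert_{\mathcal C^1(\Theta)}/N$ (the paper gets it by letting $r\to t$ in Lemma~\ref{lem_reg_cond_inR}, you rederive it from \eqref{m^l+1-m^l} and Lemma~\ref{lem:unif_bound_param}, which are the same estimates), and then conclude that limit points are a.s.\ continuous via the standard Skorokhod-space fact, which the paper invokes directly as \cite[Theorem~13.4]{billingsley2013convergence} and which your lower-semicontinuity-plus-portmanteau argument essentially reproves (your heuristic for the lower semicontinuity of the maximal-jump functional is loose as stated, but you appropriately defer to the standard reference for it).
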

\begin{proof}
Fix $t\in (0,1]$. Letting  $r\to t$ in  \eqref{bound_reg_cond}, we obtain
 $ |\langle f,\mu_t^N\rangle-\langle f,\mu_{t^-}^N\rangle|\leq C/N$.
Therefore $\sup_{t\in(0,1]}|\langle f,\mu_t^N\rangle-\langle f,\mu_{t^-}^N\rangle| \xrightarrow{\mathscr D} 0$ as $N\to+\infty$. The result follows from  \cite[Theorem~13.4]{billingsley2013convergence}.
\end{proof}

\begin{proposition}[Continuity of the limit points of $\mu^N$]\label{prop_continuity_limit}
Let $\mu^*\in \mathcal D([0,1], \mathcal P(\Theta))$ be a limit point  of $(\mu^N)_{N\ge1}\subset \mathcal D([0,1], \mathcal P(\Theta))$. Then, a.s.   $\mu^*\in\mathcal C([0,1], \mathcal P(\Theta))$.
\end{proposition}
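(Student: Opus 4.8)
The plan is to reduce the continuity of the measure-valued limit point to the scalar continuity already obtained in Proposition \ref{prop lim point continuous fixe f}, by testing against a countable separating family. Since $\Theta$ is compact, $\mathcal C(\Theta)$ is separable and polynomials (which are restrictions of smooth functions) are uniformly dense, so I would first fix a countable family $\{f_j\}_{j\ge 1}\subset\mathcal C^\infty(\Theta)$ dense in $\mathcal C(\Theta)$ for the uniform norm. Such a family separates $\mathcal P(\Theta)$: if $\langle f_j,\nu\rangle=\langle f_j,\nu'\rangle$ for all $j$, then $\langle g,\nu\rangle=\langle g,\nu'\rangle$ for every $g\in\mathcal C(\Theta)$ by uniform approximation, hence $\nu=\nu'$.

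The key deterministic observation I would record is that any $\mathsf m\in\mathcal D([0,1],\mathcal P(\Theta))$ for which each scalar path $t\mapsto\langle f_j,\mathsf m_t\rangle$ lies in $\mathcal C([0,1],\mathbf R)$ is itself continuous. Indeed, for fixed $t\in(0,1]$ the c\`adl\`ag property provides the left limit $\mathsf m_{t^-}\in\mathcal P(\Theta)$, and since each $f_j$ is bounded and continuous, weak convergence gives $\langle f_j,\mathsf m_{t^-}\rangle=\lim_{s\uparrow t}\langle f_j,\mathsf m_s\rangle=\langle f_j,\mathsf m_t\rangle$, the last equality using the assumed scalar continuity. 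As $\{f_j\}$ separates $\mathcal P(\Theta)$, this forces $\mathsf m_{t^-}=\mathsf m_t$ for every $t$, i.e. $\mathsf m\in\mathcal C([0,1],\mathcal P(\Theta))$.

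It then remains to apply this to a limit point $\mu^*$, which by Proposition \ref{p-rc} arises as a distributional limit $\mu^{N_k}\Rightarrow\mu^*$ along a subsequence. Because $\mathcal D([0,1],\mathcal P(\Theta))$ is Polish, I would invoke Skorohod's representation theorem to realize this convergence almost surely on an auxiliary space (preserving all laws). For each fixed $j$, the smooth function $f_j$ is Lipschitz on the compact $\Theta$, so Lemma \ref{lem continuity proj sko borne} gives $\langle f_j,\mu^{N_k}\rangle\to\langle f_j,\mu^*\rangle$ a.s. in $\mathcal D([0,1],\mathbf R)$; thus $\langle f_j,\mu^*\rangle$ is a limit point of $(\langle f_j,\mu^N\rangle)_{N\ge1}$ and Proposition \ref{prop lim point continuous fixe f} shows it belongs a.s. to $\mathcal C([0,1],\mathbf R)$. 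Discarding the (still null) countable union of exceptional sets over $j$, a.s. every scalar path $t\mapsto\langle f_j,\mu^*_t\rangle$ is continuous simultaneously, and the deterministic reduction of the previous paragraph yields a.s. $\mu^*\in\mathcal C([0,1],\mathcal P(\Theta))$.

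The proof is essentially bookkeeping once the two preceding results are in hand; the only points requiring a little care, and which I regard as the main (mild) obstacle, are the selection of a countable $\mathcal C^\infty$ family separating $\mathcal P(\Theta)$ and the commutation of the left limit with the pairing $\langle f_j,\cdot\rangle$, together with the passage from the distributional limit point to an almost-sure statement, handled cleanly by Skorohod's representation.
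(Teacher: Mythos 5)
Your proof is correct and follows essentially the same route as the paper's: Skorohod representation to pass to almost-sure convergence, Lemma \ref{lem continuity proj sko borne} combined with Proposition \ref{prop lim point continuous fixe f} applied to each member of a countable family of smooth functions dense in $\mathcal C(\Theta)$ (the paper uses polynomials with rational coefficients), and then a union bound over that countable family. The only cosmetic difference is the final deterministic step: the paper upgrades continuity of $t\mapsto\langle f,\mu^*_t\rangle$ to all $f\in\mathcal C(\Theta)$ via Stone--Weierstrass uniform approximation and concludes from the definition of the weak topology, whereas you invoke the separating property of the countable family together with the existence of left limits in $\mathcal D([0,1],\mathcal P(\Theta))$ to force $\mu^*_{t^-}=\mu^*_t$; both arguments are valid.
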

\begin{proof}
Up  to extracting a subsequence, we assume that $\mu^N\xrightarrow{\mathscr D} \mu^*$. By Skorohod representation theorem, there exists another probability space $(\hat{\Omega}, \hat{\mathcal F},\hat{\mathbf P})$ on which are defined random elements $(\hat\mu^N)_{N\ge1}$ and $\hat\mu^*$, where,
$$  \hat\mu^*\overset{\mathscr D}{=}\mu^*, \ \text{ and for all $N\ge1$, }\hat\mu^N\overset{\mathscr D}{=}\mu^N,$$
 and  such that $\hat{\mathbf P}$-a.s., $\hat\mu^N\to\hat\mu^*$ in  $\mathcal D([0,1], \mathcal P(\Theta))$ as $N\to +\infty$. Fix $f\in\mathcal C^\infty(\Theta)$. We have, by Lemma \ref{lem continuity proj sko borne},
\begin{equation*}
\hat{\mathbf P}{\rm{-a.s.}},\ \  \langle f,\hat\mu^N\rangle\to_{N\to+\infty}\langle f,\hat\mu^*\rangle\ \ \text{in} \ \ \mathcal D([0,1],\mathbf R).
\end{equation*}
In particular, $\langle f,\hat\mu^N\rangle\to_{N\to+\infty}\langle f,\hat\mu^*\rangle$ in distribution. By Proposition~\ref{prop lim point continuous fixe f}, there exists $\hat\Omega_f \subset \hat\Omega$ of $\hat{\mathbf P}$-mass 1 such that for all $\omega\in \hat\Omega_f,\  \langle f,\hat\mu^*(\omega)\rangle\in \mathcal C([0,1],\mathbf R)$. Denote by $\mathscr F$ the class polynomial functions with rational coefficients. Since this class is countable, the set $\hat\Omega_{\mathscr F}:=\cap_{f\in\mathscr F}\hat\Omega_f$
is of $\hat{\mathbf P}$-mass 1.
Consider now an arbitrary $f\in\mathcal C(\Theta)$ and let us show that for all $\omega\in \hat\Omega_{\mathscr F}, \ \langle f,\hat\mu^*(\omega)\rangle\in \mathcal C([0,1],\mathbf R)$. By the Stone-Weierstrass theorem, there exist $(f_n)_{n\ge1}\subset \mathscr F$ such that $\|f_n-f\|_{\infty,\Theta} \to_{n\to+\infty}0$. On $\hat\Omega_{\mathscr F}$, for all $n$,
$t\in [0,1]\mapsto \langle f_n,\hat\mu_t^*\rangle$ is continuous and converges uniformly to $t\in [0,1]\mapsto \langle f,\hat\mu_t^*\rangle$. 
Hence, for all $\omega\in \hat\Omega_{\mathscr F}$ and $f\in\mathcal C (\Theta)$, $\langle f,\hat\mu^*(\omega)\rangle \in \mathcal C([0,1],\mathbf R)$, i.e. for all $\omega\in \hat\Omega_{\mathscr F}$, $\hat\mu^*(\omega)\in \mathcal{C}([0,1],\mathcal P(\Theta))$. This concludes the proof.
\end{proof}
Now, we introduce, for $t\in[0,1]$ and $f\in \mathcal C^\infty(\Theta)$, the function $\boldsymbol \Lambda_t[f]:\mathcal D([0,1],\mathcal P(\Theta))\to \mathbf R_+$ defined by:
\begin{align}
\boldsymbol  \Lambda_t[f]:\mathsf m\mapsto &\Big|\langle f,\mathsf m_t\rangle-\langle f,\mu_0\rangle \nonumber\\
& +\eta\int_{0}^t\int_{\mathsf X\times\mathsf Y}\langle\phi(\cdot,\cdot,x)-y,\mathsf m_s\otimes\gamma\rangle\langle\nabla_\theta f\cdot\nabla_\theta \phi(\cdot,\cdot,x),\mathsf m_s\otimes\gamma\rangle  \pi(\di x,\di y)\di s\nonumber\\
\label{d-lambda}
&+  \eta\int_0^t \big \langle\nabla_\theta f\cdot \nabla_\theta \mathscr D_{{\rm KL}}(q_{\,_\cdot }^1|P_0^1),\mathsf m_s\big \rangle\di s  \Big|.
\end{align}

We now study the continuity of  $\boldsymbol  \Lambda_t[f]$.

\begin{lemma} \label{lem_cont_points}
Let $(\mathsf m^N)_{N\ge 1}\subset \mathcal D([0,1],\mathcal P(\Theta))$ converge to $\mathsf m\in\mathcal D([0,1],\mathcal P(\Theta))$. Then, for all continuity point $t\in[0,1]$ of $\mathsf m$ and all $f\in\mathcal C^\infty(\Theta)$, we have $\boldsymbol \Lambda_t[f](\mathsf m^N)\to \boldsymbol \Lambda_t[f](\mathsf m)$.
\end{lemma}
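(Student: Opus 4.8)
The plan is to treat the three pieces defining $\boldsymbol\Lambda_t[f]$ in \eqref{d-lambda} separately and then conclude by continuity of the absolute value. The cornerstone is the fact that Skorohod convergence forces pointwise convergence of the measures at every continuity point of the limit. Concretely, using the time-change characterization recalled in the proof of Lemma \ref{lem continuity proj sko borne}, there exist continuous increasing bijections $\lambda_N$ of $[0,1]$ with $\sup_s|\lambda_N(s)-s|\to 0$ and $\sup_s\mathsf W_1(\mathsf m^N_{\lambda_N(s)},\mathsf m_s)\to 0$. Evaluating the uniform bound at $\lambda_N^{-1}(s)$ (so that $\lambda_N(\lambda_N^{-1}(s))=s$) and using that $\sup_s|\lambda_N^{-1}(s)-s|\to 0$ as well, I would deduce that at any continuity point $s$ of $\mathsf m$, $\mathsf W_1(\mathsf m^N_s,\mathsf m_s)\le \mathsf W_1(\mathsf m^N_s,\mathsf m_{\lambda_N^{-1}(s)})+\mathsf W_1(\mathsf m_{\lambda_N^{-1}(s)},\mathsf m_s)\to 0$, the first term by the uniform estimate and the second because $\lambda_N^{-1}(s)\to s$ and $\mathsf m$ is continuous at $s$. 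Since $\mathcal P(\Theta)$ is compact and $\mathsf W_1$ metrizes weak convergence there, this gives $\mathsf m^N_s\to\mathsf m_s$ weakly at every continuity point $s$, in particular at the fixed $t$, whence $\langle f,\mathsf m^N_t\rangle\to\langle f,\mathsf m_t\rangle$ because $f$ is bounded continuous on $\Theta$.

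For the two time integrals, I would first observe that, for fixed $(x,y)$, the maps $\nu\mapsto\langle\phi(\cdot,\cdot,x)-y,\nu\otimes\gamma\rangle$ and $\nu\mapsto\langle\nabla_\theta f\cdot\nabla_\theta\phi(\cdot,\cdot,x),\nu\otimes\gamma\rangle$ are integrals against $\nu$ of the functions $\theta\mapsto\langle\phi(\theta,\cdot,x)-y,\gamma\rangle$ and $\theta\mapsto\langle\nabla_\theta f(\theta)\cdot\nabla_\theta\phi(\theta,\cdot,x),\gamma\rangle$, which are continuous and bounded on $\Theta$ (uniformly in $(x,y)\in\mathsf X\times\mathsf Y$) by \textbf{A1}, \textbf{A3} and the bounds \eqref{borne_phi-y}, \eqref{borne nablaphi}. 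Hence these maps are weakly continuous on $\mathcal P(\Theta)$; integrating their product against the fixed probability measure $\pi$ on the compact $\mathsf X\times\mathsf Y$ and invoking dominated convergence in $(x,y)$, the integrand of the first time integral is a weakly continuous, uniformly bounded functional of $\mathsf m_s$. The same holds for the KL integrand, since $\theta\mapsto\nabla_\theta f(\theta)\cdot\nabla_\theta\mathscr D_{{\rm KL}}(q^1_\theta|P_0^1)$ is continuous and bounded on the compact $\Theta$ by \eqref{eq.kl_1}. Consequently, at every continuity point $s$ of $\mathsf m$ --- that is, for Lebesgue-a.e.\ $s\in[0,t]$, since $\mathsf m$ is c\`adl\`ag and thus has at most countably many discontinuities --- both integrands evaluated at $\mathsf m^N_s$ converge to those evaluated at $\mathsf m_s$.

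To upgrade this a.e.\ convergence to convergence of the integrals, I would apply dominated convergence on $[0,t]$: by \eqref{borne_phi-y} and \eqref{borne_nabla} the first integrand is bounded by $C\Vert f\Vert_{\mathcal C^1(\Theta)}$ and the KL integrand likewise by $C\Vert f\Vert_{\mathcal C^1(\Theta)}$, both uniformly in $N$ and $s$, so a constant dominating function suffices. This yields convergence of both integrals over $[0,t]$. Combining the three limits, the quantity inside the absolute value in \eqref{d-lambda} evaluated at $\mathsf m^N$ converges to the one evaluated at $\mathsf m$, and continuity of $|\cdot|$ gives $\boldsymbol\Lambda_t[f](\mathsf m^N)\to\boldsymbol\Lambda_t[f](\mathsf m)$. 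The only genuinely delicate point is the first one --- extracting pointwise convergence $\mathsf m^N_s\to\mathsf m_s$ at continuity points of the limit out of mere Skorohod convergence --- which is precisely where the time-change bookkeeping via $\lambda_N$ and $\lambda_N^{-1}$ must be handled with care; everything else reduces to weak continuity of bounded functionals and dominated convergence, both made available by the compactness of $\Theta$ and of $\mathsf X\times\mathsf Y$.
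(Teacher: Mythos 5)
Your proof is correct and takes essentially the same route as the paper's: pointwise weak convergence $\mathsf m^N_s\to\mathsf m_s$ at continuity points of $\mathsf m$ (which the paper cites directly from \cite[p.~124]{billingsley2013convergence} and which you re-derive from the time-change characterization), then continuity and uniform boundedness of the integrands, countability of the discontinuity set, and dominated convergence. The only cosmetic difference is that you integrate over $z$ against $\gamma$ first, obtaining bounded continuous functions of $\theta$ and a constant dominating function, whereas the paper applies weak convergence for fixed $(z',z)$ and performs dominated convergence jointly in $(s,z',z,x,y)$ with dominating function $C\Vert f\Vert_{\mathcal C^1(\Theta)}\mathfrak b(z)$.
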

\begin{proof}
Let $f\in\mathcal C^\infty(\Theta)$ and denote by $\mathcal C(\mathsf m)\subset[0,1]$ the set of continuity points of $\mathsf m$. Let $t\in\mathcal C(\mathsf m)$. From \cite[p. 124]{billingsley2013convergence}, we have, for all $s\in \mathcal C(\mathsf m)$,
\begin{equation}\label{conv_fixed t}
\mathsf m^N_s\to\mathsf m_s\ \ \text{in}\ \ \mathcal P(\Theta).
\end{equation}
Thus, $\langle f,\mathsf m_t^N\rangle\to_{N\to\infty}\langle f,\mathsf m_t\rangle.$
For all $z\in\mathbf R^d$ and $(x,y)\in\mathsf X\times\mathsf Y$,  {\rm \textbf{A1}} and {\rm \textbf{A3}} ensure that the functions $\theta\in\Theta\mapsto\phi(\theta
,z,x)-y$ and $\theta\in\Theta\mapsto \nabla_\theta  f(\theta)\cdot\nabla_\theta \phi(\theta,z,x)$ are continuous and also bounded because $\Theta$ is compact. Hence, for all $s\in [0,t]\cap \mathcal C(\mathsf m)$, using \eqref{conv_fixed t},
\begin{align*}
\langle \phi(\cdot,z,x)-y,\mathsf m_s^N\rangle\to\langle \phi(\cdot,z,x)-y,\mathsf m_s\rangle\ \ \text{and}\ \  \langle\nabla_\theta f\cdot\nabla_\theta \phi(\cdot,z,x),\mathsf m_s^N\rangle\to \langle\nabla_\theta f\cdot\nabla_\theta \phi(\cdot,z,x),\mathsf m_s\rangle
\end{align*}
Since $[0,1]\backslash \mathcal C(\mathsf m)$ is at most countable (see \cite[p. 124]{billingsley2013convergence}) we have that for a.e. $(s,z',z,x,y)\in [0,t]\times\mathbf R^d\times \mathbf R^d\times\mathsf X\times\mathsf Y$,
\begin{equation*}
\langle \phi(\cdot,z',x)-y,\mathsf m_s^N\rangle\langle\nabla_\theta f\cdot\nabla_\theta \phi(\cdot,z,x),\mathsf m_s^N\rangle\to \langle \phi(\cdot,z',x)-y,\mathsf m_s\rangle\langle\nabla_\theta  f\cdot\nabla_\theta  \phi(\cdot,z,x),\mathsf m_s\rangle.
\end{equation*}
Since $\phi(\theta,z',x)-y$ is bounded and by \eqref{eq.Bntheta},  there exists $C>0$ such that for all $(s,z',z,x,y)\in [0,t]\times\mathbf R^d\times \mathbf R^d\times\mathsf X\times\mathsf Y$, $
 \langle |\phi(\cdot,z',x)-y|,\mathsf m_s^N\rangle\langle|\nabla_\theta f\cdot\nabla_\theta \phi(\cdot,z,x)|,\mathsf m_s^N\rangle\leq C\Vert \nabla _\theta f\Vert_{\infty,\Theta}\mathfrak b(z)$.
By the dominated convergence theorem, we then have:
\begin{align*}
\int_{0}^t\int_{\mathsf X\times\mathsf Y}\langle\phi(\cdot,\cdot,x)-y,\mathsf m_s^N\otimes\gamma\rangle\langle\nabla_\theta f\cdot\nabla_\theta \phi(\cdot,\cdot,x),\mathsf m_s^N\otimes\gamma\rangle  \pi(\di x,\di y)\di s \\
\underset{N\to+\infty}{\longrightarrow} \int_{0}^t\int_{\mathsf X\times\mathsf Y}\langle\phi(\cdot,\cdot,x)-y,\mathsf m_s\otimes\gamma\rangle\langle\nabla_\theta f\cdot\nabla_\theta \phi(\cdot,\cdot,x),\mathsf m_s\otimes\gamma\rangle  \pi(\di x,\di y)\di s.
\end{align*}
With the same arguments as above, one shows that $\int_0^t  \langle\nabla_\theta f\cdot \nabla_\theta \mathscr D_{{\rm KL}}(q_{\,_\cdot }^1|P_0^1),\mathsf m_s^N \rangle\di s \to \int_0^t  \langle\nabla_\theta f\cdot \nabla_\theta \mathscr D_{{\rm KL}}(q_{\,_\cdot }^1|P_0^1),\mathsf m_s \rangle\di s $.
 The proof of the lemma is complete.
\end{proof}

\begin{proposition}[Convergence to the limit equation]\label{p-cLE}
Let $\mu^*\in \mathcal D([0,1],\mathcal P(\Theta))$ be a limit point of $(\mu^N)_{N\geq1}\subset\mathcal D([0,1],\mathcal P(\Theta))$. Then, a.s.  $\mu^*$ satisfies \eqref{eq_limit}.
\end{proposition}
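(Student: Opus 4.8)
The plan is to read off from the definition \eqref{d-lambda} that, for a fixed $t\in[0,1]$ and $f\in\mathcal C^\infty(\Theta)$, the statement ``$\mu^*$ satisfies \eqref{eq_limit} at $(t,f)$'' is precisely $\boldsymbol\Lambda_t[f](\mu^*)=0$. So the whole task reduces to proving that a.s.\ $\boldsymbol\Lambda_t[f](\mu^*)=0$ for all $t\in[0,1]$ and all $f\in\mathcal C^\infty(\Theta)$, and the strategy is: (i) show $\boldsymbol\Lambda_t[f](\mu^N)\to 0$ along the prelimit sequence, (ii) transfer this to the limit point via continuity of $\boldsymbol\Lambda_t[f]$, and (iii) upgrade the resulting ``fixed $(t,f)$, a.s.'' statements to a single ``a.s., for all $(t,f)$'' statement by a countability argument.

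First I would evaluate $\boldsymbol\Lambda_t[f]$ along the prelimit trajectories. Substituting the prelimit identity \eqref{eq.pre_limit} into \eqref{d-lambda} with $\mathsf m=\mu^N$, the quadratic drift term and the KL term cancel exactly (both use the same $\mu_s^N$), leaving
\[
\boldsymbol\Lambda_t[f](\mu^N)=\big|\langle f,\mu_0^N\rangle-\langle f,\mu_0\rangle+\mathbf M_t^{N}[f]+\mathbf W_t^{N}[f]+\mathbf R_t^N[f]+\mathcal E_t^N[f]\big|,
\]
where $\mathcal E_t^N[f]$ collects the two $\eta/N$ remainders of \eqref{eq.pre_limit}. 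By Lemma \ref{le.error}, the four last contributions are bounded in expectation by $C\|f\|_{\mathcal C^2(\Theta)}(1/N+1/\sqrt N)$, while $\langle f,\mu_0^N\rangle-\langle f,\mu_0\rangle\to 0$ in $L^1$ by the law of large numbers for the i.i.d.\ initial weights (\textbf{A4}). Hence $\mathbf E[\boldsymbol\Lambda_t[f](\mu^N)]\to 0$, so $\boldsymbol\Lambda_t[f](\mu^N)\to 0$ in probability for each fixed $(t,f)$.

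Next I would pass to the limit. Fix $(t,f)$, extract a subsequence along which $\mu^N\xrightarrow{\mathscr D}\mu^*$, and invoke the Skorohod representation theorem to obtain, on some $(\hat\Omega,\hat{\mathcal F},\hat{\mathbf P})$, a.s.\ convergence $\hat\mu^N\to\hat\mu^*$ in $\mathcal D([0,1],\mathcal P(\Theta))$ with $\hat\mu^*\overset{\mathscr D}{=}\mu^*$. By Proposition \ref{prop_continuity_limit}, $\hat\mu^*\in\mathcal C([0,1],\mathcal P(\Theta))$ a.s., so every $t$ is a.s.\ a continuity point of $\hat\mu^*$, and Lemma \ref{lem_cont_points} gives $\boldsymbol\Lambda_t[f](\hat\mu^N)\to\boldsymbol\Lambda_t[f](\hat\mu^*)$ a.s. Since $\boldsymbol\Lambda_t[f](\hat\mu^N)\overset{\mathscr D}{=}\boldsymbol\Lambda_t[f](\mu^N)\to 0$ in probability, the a.s.\ limit must vanish, i.e.\ $\boldsymbol\Lambda_t[f](\hat\mu^*)=0$ a.s., and therefore $\boldsymbol\Lambda_t[f](\mu^*)=0$ a.s.

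Finally I would globalize in $(t,f)$. Because $\mu^*$ is a.s.\ continuous, $t\mapsto\boldsymbol\Lambda_t[f](\mu^*)$ is continuous, so it suffices to kill it at rational times; and since only $\nabla_\theta f$ enters \eqref{d-lambda} while all integrands ($\phi$, $\nabla_\theta\phi$, $\nabla_\theta\mathscr D_{{\rm KL}}$) are bounded on the compact $\Theta$, the map $f\mapsto\boldsymbol\Lambda_t[f](\mathsf m)$ is continuous for the $\mathcal C^1(\Theta)$ norm uniformly in $\mathsf m$, so it suffices to treat $f$ in the countable class $\mathscr F$ of polynomials with rational coefficients (dense in $\mathcal C^\infty(\Theta)$ for $\|\cdot\|_{\mathcal C^1(\Theta)}$). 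Intersecting the countably many full-measure events indexed by rational $t$ and $f\in\mathscr F$ gives one event of probability one on which $\boldsymbol\Lambda_t[f](\mu^*)=0$ for all such $(t,f)$, and continuity in $t$ together with density in $f$ extends this to all $t\in[0,1]$ and $f\in\mathcal C^\infty(\Theta)$. I expect the one genuinely delicate point to be the matching in step (ii)---reconciling the a.s.\ Skorohod limit $\boldsymbol\Lambda_t[f](\hat\mu^*)$ with the in-probability limit $0$---which is exactly why the argument must route through the Skorohod representation and the a.s.\ continuity of the limit points established in Proposition \ref{prop_continuity_limit}; the nonlinearity (the product of two weak limits in the drift) is already absorbed by Lemma \ref{lem_cont_points}.
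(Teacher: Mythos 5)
Your proof is correct and follows essentially the same route as the paper: vanishing of $\mathbf E[\boldsymbol\Lambda_t[f](\mu^N)]$ via the pre-limit equation \eqref{eq.pre_limit} and Lemma \ref{le.error}, transfer to the limit point through the a.s.\ continuity of limit points (Proposition \ref{prop_continuity_limit}) and the continuity of $\boldsymbol\Lambda_t[f]$ (Lemma \ref{lem_cont_points}) --- your Skorohod-representation step is just an unpacked version of the continuous mapping theorem that the paper invokes directly --- followed by the same countability/density globalization in $(t,f)$. A minor plus: you explicitly account for the term $\langle f,\mu_0^N\rangle-\langle f,\mu_0\rangle$ (killed by the law of large numbers under \textbf{A4}), which the paper's displayed bound $\boldsymbol\Lambda_t[f](\mu^N)\le C/N+\mathbf M_t^{N}[f]$ leaves implicit.
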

\begin{proof}
Up  to extracting a subsequence, we can assume that  $\mu^N\xrightarrow{\mathscr D} \mu^*$ as $N\to +\infty$. Let   $f\in \mathcal C^\infty(\Theta)$. The pre-limit equation \eqref{eq.pre_limit} and Lemma \ref{le.error} imply that   a.s. for all $N\ge 1$ and $t\in[0,1]$, $\boldsymbol \Lambda_t[f](\mu^N)\leq  C/N+ \mathbf M_t^{N}[f]$.
Hence, using the last statement in Lemma \ref{le.error}, it holds for all $t\in[0,1]$,
\begin{equation*}
\lim_{N\to\infty}\mathbf E[\boldsymbol  \Lambda_t[f](\mu^N)]=0.
\end{equation*}
In particular,  $\boldsymbol \Lambda_t[f](\mu^N)\xrightarrow{\mathscr D} 0$. Let us now show that $\boldsymbol \Lambda_t[f](\mu^N)\xrightarrow{\mathscr D}\boldsymbol \Lambda_t[f](\mu^*)$.
Denoting by $\mathsf D(\boldsymbol \Lambda_t[f])$ the set of discontinuity points of $\boldsymbol \Lambda_t[f]$, we have, from Proposition~\ref{prop_continuity_limit} and  Lemma \ref{lem_cont_points}, for all $t\in[0,1]$ and $f\in\mathcal C^\infty(\Theta)$,
\begin{equation*}
\mathbf P(\mu^*\in \mathsf D(\boldsymbol \Lambda_t[f])) =0.
\end{equation*}
By the continuous mapping theorem, $\boldsymbol \Lambda_t[f](\mu^N)\xrightarrow{\mathscr D}\boldsymbol \Lambda_t[f](\mu^*)$.
By uniqueness of the limit in distribution, we have that for all $t\in[0,1]$ and $f\in \mathcal C^\infty(\Theta)$,  a.s.  $\boldsymbol \Lambda_t[f](\mu^*)=0$. Let us now prove  that a.s. for all $t\in[0,1]$ and $f\in\mathcal C^\infty(\Theta)$, $\Lambda_t[f](\mu^*)=0$.

On the one hand, for all $f\in \mathcal C^\infty(\Theta)$ and $\mathsf m\in \mathcal D([0,1],\mathcal P(\Theta))$, the function $t\mapsto \Lambda_t[f](\mathsf m)$ is right-continuous. Since $[0,1]$ is separable, we have that for all $f\in \mathcal C^\infty(\Theta)$, a.s. for all $t\in[0,1]$, $\boldsymbol \Lambda_t[f](\mu^*)=0$.
\begin{sloppypar}
One the other hand  $ \mathcal C^\infty(\Theta)$ is separable (when endowed with the norm $\|f\|_{\mathcal C^\infty(\Theta)}= \sum_{k\ge 0}2^{-k}\min(1,\sum_{|j|=k}\|\partial_jf\|_{\infty,\Theta})$) and   the function $f\in \mathcal C^\infty(\Theta) \mapsto \boldsymbol \Lambda_t[f](\mathsf m)$ is continuous (for fixed $t\in[0,1]$ and $\mathsf m\in\mathcal{D}([0,1],\mathcal P(\Theta))$) relatively to the topology induced by $\|f\|_{\mathcal C^\infty(\Theta)}$. \end{sloppypar}

Hence, we obtain that  a.s. for all $t\in[0,1]$ and $f\in\mathcal C^\infty(\Theta)$, $\boldsymbol \Lambda_t[f](\mu^*)=0$. The proof of the proposition is thus complete.
\end{proof}



\subsubsection{Uniqueness and end of the proof of Theorem \ref{thm.ideal}}
\label{sec.U-1}


\begin{proposition}\label{p-uniq}
There exists a unique solution to \eqref{eq_limit} in $\mathcal C([0,1],\mathcal P(\Theta))$.
\end{proposition}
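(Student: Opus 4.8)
The plan is to recognize \eqref{eq_limit} as the weak formulation of a nonlinear (McKean--Vlasov type) continuity equation $\partial_t\bar\mu_t+\mathrm{div}(b_t[\bar\mu_t]\,\bar\mu_t)=0$ and to prove uniqueness by the characteristics / representation-formula method of \cite{piccoli2015control,piccoli2016properties}. First I would read off the driving velocity field: for a flow $s\mapsto\mathsf m_s$ and $\theta\in\Theta$, set
\[
b_s[\mathsf m](\theta):=-\eta\int_{\mathsf X\times\mathsf Y}\big(\langle\phi(\cdot,\cdot,x),\mathsf m_s\otimes\gamma\rangle-y\big)\,\langle\nabla_\theta\phi(\theta,\cdot,x),\gamma\rangle\,\pi(\di x,\di y)-\eta\,\nabla_\theta\mathscr D_{{\rm KL}}(q_\theta^1|P_0^1),
\]
so that, after writing the inner $\bar\mu_s\otimes\gamma$-integral of $\nabla_\theta f\cdot\nabla_\theta\phi$ as $\langle\nabla_\theta f\cdot\langle\nabla_\theta\phi(\cdot,\cdot,x),\gamma\rangle,\bar\mu_s\rangle$, the entire right-hand side of \eqref{eq_limit} equals $\int_0^t\langle\nabla_\theta f\cdot b_s[\bar\mu_s],\bar\mu_s\rangle\,\di s$.

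Then I would establish the two quantitative regularity properties that drive the whole argument. \textbf{Spatial Lipschitzness:} using \textbf{A1} and \textbf{A3}, the map $\theta\mapsto\langle\nabla_\theta\phi(\theta,\cdot,x),\gamma\rangle$ is $\mathcal C^1$ with derivatives dominated by $\mathfrak b$ (integrable against $\gamma$ to all orders), and $\theta\mapsto\nabla_\theta\mathscr D_{{\rm KL}}(q_\theta^1|P_0^1)$ is smooth; since $\Theta$ is compact there is $L>0$ with $|b_s[\mathsf m](\theta)-b_s[\mathsf m](\theta')|\le L|\theta-\theta'|$, uniformly in $s$ and $\mathsf m$. \textbf{Measure Lipschitzness:} the scalar factor $\langle\phi(\cdot,\cdot,x),\mathsf m_s\otimes\gamma\rangle$ depends on $\mathsf m_s$ only through the bounded Lipschitz function $\theta\mapsto\langle\phi(\theta,\cdot,x),\gamma\rangle$, so the Kantorovich--Rubinstein duality \eqref{Kantorovitch Rubinstein} gives $|b_s[\mathsf m](\theta)-b_s[\mathsf n](\theta)|\le C\,\mathsf W_1(\mathsf m_s,\mathsf n_s)$; boundedness of $s$ (\textbf{A3}) and of $\mathsf Y$ (\textbf{A2}) makes $b$ itself bounded.

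Next, given any solution $\bar\mu\in\mathcal C([0,1],\mathcal P(\Theta))$, I would freeze the measure and regard $\tilde b_s(\theta):=b_s[\bar\mu_s](\theta)$ as a genuine time-dependent field: it is continuous in $s$ (because $s\mapsto\bar\mu_s$ is $\mathsf W_1$-continuous) and Lipschitz in $\theta$, so after a bounded Lipschitz extension to $\mathbf R^{d+1}$ it generates a unique flow $X_s$ solving $\partial_s X_s(\theta)=\tilde b_s(X_s(\theta))$ with $X_0=\mathrm{id}$. Since $\bar\mu$ then solves the \emph{linear} continuity equation associated with the fixed field $\tilde b$, the representation formula \cite[Theorem 5.34]{villani2021topics} yields $\bar\mu_s=(X_s)_\#\mu_0$.

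Uniqueness would finally follow by a Grönwall estimate: for two solutions $\mu,\nu$ with flows $X,Y$, coupling them through the common initial law $\mu_0$ gives $\mathsf W_1(\mu_t,\nu_t)\le\int_\Theta|X_t(\theta)-Y_t(\theta)|\,\mu_0(\di\theta)=:u(t)$, and writing $X_t-Y_t=\int_0^t(\tilde b^\mu_s(X_s)-\tilde b^\nu_s(Y_s))\,\di s$ and splitting the integrand into a spatial part (bounded by $L|X_s-Y_s|$) and a measure part (bounded by $C\,\mathsf W_1(\mu_s,\nu_s)\le C\,u(s)$) yields $u(t)\le(L+C)\int_0^t u(s)\,\di s$, hence $u\equiv0$ and $\mu=\nu$. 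I expect the representation step to be the main obstacle: one must carefully justify that the weak formulation \eqref{eq_limit}, stated only against $f\in\mathcal C^\infty(\Theta)$ and with a measure-dependent field, genuinely coincides with the linear continuity equation for the frozen field $\tilde b$ and meets the hypotheses of \cite[Theorem 5.34]{villani2021topics}; the $\mathsf W_1$ measure-Lipschitz bound on $b$, obtained from duality, is the quantitative ingredient that makes the closing Grönwall argument work.
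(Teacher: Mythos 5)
Your proposal follows essentially the same route as the paper: you identify the same velocity field (the paper's $\boldsymbol v[\mu]$ in \eqref{def V[mu]}), recast \eqref{eq_limit} as the continuity equation \eqref{eq.measure}, establish the same two key estimates (spatial Lipschitzness uniform in the measure, and $\mathsf W_1$-Lipschitz dependence on the measure via \eqref{Kantorovitch Rubinstein}), invoke \cite[Theorem 5.34]{villani2021topics} for the flow representation of any solution, and close with a Gr\"onwall argument, which the paper delegates to \cite{piccoli2016properties} and the proof of \cite[Proposition 2.14]{descours2022law}. The only noteworthy difference is that the paper additionally proves the frozen field is \emph{Lipschitz} in time (its Item~2, obtained by testing \eqref{eq_limit} against $\phi(\cdot,z,x)$), whereas you claim only continuity in time; this is harmless, since your own measure-Lipschitz bound combined with testing \eqref{eq_limit} against $\mathcal C^1$-bounded functions yields $\mathsf W_1(\bar\mu_t,\bar\mu_s)\le C|t-s|$ and hence the time-Lipschitz property needed to comfortably meet the hypotheses of the representation theorem.
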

\begin{proof}
First of all, the fact that there is a solution to \eqref{eq_limit} is provided by  Propositions~\ref{p-rc},~\ref{prop_continuity_limit} and~\ref{p-cLE}.
The proof of the fact that there is a unique solution to  \eqref{eq_limit}   relies on  the same arguments as those used in the proof of~\cite[Proposition 2.14]{descours2022law}.

For $\mu\in\mathcal P(\mathbf R^{d+1})$, we introduce  $\boldsymbol v[\mu]:\mathbf R^{d+1}\to\mathbf R^{d+1}$ defined, for $\theta=(m,\rho)\in\mathbf R^{d+1}$,  by
\begin{align}
\label{def V[mu]}
&\boldsymbol v[\mu](\theta)=
-\eta\int_{\mathsf X\times\mathsf Y}\langle\phi(\cdot,\cdot,x)-y,\mu\otimes\gamma\rangle\langle\nabla_\theta\phi(\theta,\cdot,x),\gamma\rangle\pi(\di x,\di y)-\eta \nabla_\theta \mathscr D_{{\rm KL}}(q_{\theta}^1|P_0^1).
\end{align}
 In addition, if   $\bar \mu\in \mathcal C([0,1],\mathcal P(\Theta))$ is solution to  \eqref{eq_limit}, it satisfies also \eqref{eq_limit} with test functions $f\in\mathcal C^\infty_c( \mathbf R^{d+1})$. Then, adopting the terminology of \cite[Section
4.1.2]{santambrogio2015optimal},
any solution $\bar \mu$ to \eqref{eq_limit} is  a \textit{weak solution}\footnote{We mention that
  according to \cite[Proposition 4.2]{santambrogio2015optimal}, the
  two notions of solutions of~\eqref{eq.measure} (namely the weak
  solution and the \textit{distributional} solution) are equivalent.}
 on $[0,T]$ of the measure-valued equation
\begin{align}\label{eq.measure}
\begin{cases}
\partial_t\bar\mu_t=\mathrm{div}(\boldsymbol v[\bar\mu_t]\bar\mu_t)\\
\bar\mu_0=\mu_0.
\end{cases}
\end{align}
Let us now prove that:
\begin{enumerate}
\item There exists $C>0$ such that for all $\mu\in\mathcal P(\mathbf R^{d+1})$ and $\theta\in\mathbf R^{d+1}$, $$|\mathrm{J}_\theta \boldsymbol v[\mu](\theta)|\leq C. $$
\item There exists $C>0$ such that for all $\bar\mu\in\mathcal C([0,1],\mathcal P(\Theta))$ solution to \eqref{eq_limit},  $0\le s,t\le 1$, and   $\theta\in\mathbf R^{d+1}$,
$$| \boldsymbol v[\bar\mu_t](\theta)- \boldsymbol v[\bar\mu_s](\theta)|\leq C|t-s|. $$
\item There exists $L'>0$ such that for all $\mu,\nu\in \mathcal P_1(\mathbf R^{d+1})$,
$$\sup_{\theta\in\mathbf R^d}| \boldsymbol v[\mu](\theta)- \boldsymbol v[\nu](\theta)|\leq L'\mathsf W_1(\mu,\nu).$$
\end{enumerate}
\begin{sloppypar}

Before proving the three items above, we quickly conclude the proof of the proposition.  Items~1 and~2 above imply that $v(t,\theta)= \boldsymbol v[\bar\mu_t](\theta)$ is globally Lipschitz continuous over $[0,1]\times \mathbf R^{d+1}$ when $\bar\mu\in\mathcal C([0,1],\mathcal P(\Theta))$ is a solution to \eqref{eq_limit}. Since $\bar \mu \in \mathcal C([0,1],\mathcal P(\Theta))\subset \mathcal C([0,1],\mathcal P(\mathbf R^{d+1}))$, this allows to use the representation theorem~\cite[Theorem 5.34]{villani2021topics} for the solution  of~\eqref{eq.measure} in $\mathcal C([0,1],\mathcal P(\mathbf R^{d+1}))$, i.e. it holds:
\begin{equation}\label{eq.vi}
\forall t\in [0,1], \ \bar \mu_t=\phi_t\# \mu_0,
\end{equation}
 where $\phi_t$ is the flow generated by the vector field $\boldsymbol v[\bar \mu_t](\theta)$ over $\mathbf R^{d+1}$.
Equation \eqref{eq.vi} and the fact that $\mathcal C([0,1],\mathcal P(\Theta))\subset \mathcal C([0,1],\mathcal P_1(\mathbf R^{d+1}))$  together with  Item 3 above and the  same arguments as those used in the proof of~\cite[Proposition 2.14]{descours2022law} (which we recall is based estimates in  Wasserstein distances between two solutions of~\eqref{eq_limit} derived in \cite{piccoli2016properties}), one deduces that  there is a unique solution to  \eqref{eq_limit}.

Let us prove Item 1.
Recall $g(\rho)= \ln(1+e^{\rho})$. The functions
$$ \rho  \mapsto g''(\rho)g(\rho), \ \rho  \mapsto  g'(\rho),\  \rho  \mapsto  \frac{g'(\rho)}{g(\rho)}, \, \text{ and } \rho  \mapsto  \frac{g''(\rho)}{g(\rho)}$$
are bounded on $\mathbf R$. Thus, in view of \eqref{eq.kl_1}, $\Vert {\rm Hess}_\theta\, \mathscr D_{{\rm KL}}(q_{\theta }^1|P_0^1)\Vert_{\infty,\mathbf R^{d+1}}<+\infty$.
On the other hand,  by {\rm \textbf{A1}}  and  {\rm \textbf{A3}},  for $x\in\mathsf X$, $z\in\mathbf R^d$, $\theta \in \Theta \mapsto  \phi(\theta,z,x)$ is smooth and
 there exists $C>0$,  for all $x\in\mathsf X$, $\theta\in \mathbf R^{d+1}$,   $z\in\mathbf R^d$:
\begin{equation*}
| \mathrm{Hess}_\theta \phi(\theta,z,x) | \leq C(\mathfrak b(z)^2+\mathfrak b(z)).
\end{equation*}
This bound allows us to differentiate under the integral signs in \eqref{def V[mu]} and proves that $|\mathrm{J}_\theta \int_{\mathsf X\times\mathsf Y}\langle\phi(\cdot,\cdot,x)-y,\mu\otimes\gamma\rangle\langle\nabla_\theta\phi(\theta,\cdot,x),\gamma\rangle\pi(\di x,\di y)|\le C$, where $C>0$ is independent of $\mu\in\mathcal P(\Theta)$ and $\theta\in\Theta$.  The proof of Item~1 is complete.
\end{sloppypar}

Let us prove Item~2. Let $\bar\mu\in\mathcal C([0,1],\mathcal P(\Theta))$ be a  solution to \eqref{eq_limit},  $0\leq s\le t\le 1$, and  $\theta\in\mathbf R^{d+1}$. We have
\begin{equation}\label{syst_un}
 \boldsymbol v[\bar\mu_t](\theta)- \boldsymbol v[\bar\mu_s](\theta)=
-\eta\int_{\mathsf X\times\mathsf Y}\langle\phi(\cdot,\cdot,x),(\bar\mu_t-\bar\mu_s)\otimes\gamma\rangle\langle\nabla_\theta \phi(\theta,\cdot,x),\gamma\rangle\pi(\di x,\di y).
\end{equation}
Let $z\in \mathbf R^d$ and $x\in\mathsf X$. By {\rm \textbf{A1}} and {\rm \textbf{A3}}, $\phi(\cdot,z,x)\in\mathcal C^\infty(\Theta)$. Therefore, by \eqref{eq_limit},
\begin{align*}
\langle \phi(\cdot,z,x),\bar\mu_t-\bar\mu_s\rangle&= -\eta\int_{s}^t \!\! \int_{\mathsf X\times\mathsf Y}\!\! \!\! \langle\phi(\cdot,\cdot,x')-y,\bar\mu_r\otimes\gamma\rangle\langle\nabla_\theta\phi(\cdot,z,x)\cdot\nabla_\theta\phi(\cdot,\cdot,x'),\bar\mu_r\otimes\gamma\rangle  \pi(\di x',\di y)\di r\nonumber\\
&\quad-\eta\int_s^t\langle\nabla_\theta\phi(\cdot,z,x)\cdot \nabla_\theta \mathscr D_{{\rm KL}}(q_{\,_\cdot }^1|P_0^1),\bar\mu_r\rangle\di r
\end{align*}
We have  $ \Vert   \nabla_\theta \mathscr D_{{\rm KL}}(q_{\theta }^1|P_0^1)\Vert_{\infty,\Theta}<+\infty$. Using  also \eqref{eq.Bntheta}
and the fact that $\mathsf X\times\mathsf Y$ is a compact (see {\rm \textbf{A2}}), it holds:
$$|\langle \phi(\cdot,z,x),\bar\mu_t-\bar\mu_s\rangle|\leq C \mathfrak b(z)|t-s|.$$
Hence, for all $x'\in\mathsf X$,
\begin{equation*}
|\langle\phi(\cdot,\cdot,x'),(\bar\mu_t-\bar\mu_s)\otimes\gamma\rangle|\leq \langle|\langle\phi(\cdot,\cdot,x'),\bar\mu_t-\bar\mu_s\rangle|,\gamma\rangle\leq C|t-s|.
\end{equation*}
Thus, by \eqref{syst_un} and \eqref{borne nablaphi}, $| \boldsymbol v[\bar\mu_t](\theta)- \boldsymbol v[\bar\mu_s](\theta)|\leq C|t-s|$. This ends the proof of Item 2.

Let us now prove Item 3. Fix $\mu,\nu\in \mathcal P_1(\mathbf R^{d+1})$ and $\theta\in\mathbf R^{d+1}$.
We have
\begin{equation}\label{syst_un2}
 \boldsymbol v[\mu](\theta)- \boldsymbol v[\nu](\theta)= -\eta\int_{\mathsf X\times\mathsf Y}\langle\phi(\cdot,\cdot,x),( \mu -\nu)\otimes\gamma\rangle\langle\nabla_\theta \phi(\theta,\cdot,x),\gamma\rangle\pi(\di x,\di y)
\end{equation}
 For all $x\in\mathsf X$, using   \eqref{Kantorovitch Rubinstein} and  \eqref{eq.Bntheta}, it holds:
\begin{align*}
|\langle\phi(\cdot,\cdot,x),(\mu-\nu)\otimes\gamma\rangle|&\leq\int_{\mathbf R^d}|\langle\phi(\cdot,z,x),\mu\rangle-\langle\phi(\cdot,z,x),\nu\rangle|\gamma(z)\di z\\
&\leq C \int_{\mathbf R^d}\mathsf W_1(\mu,\nu)\mathfrak b(z)\gamma(z)\di z\leq C \mathsf W_1(\mu,\nu).
\end{align*}
Finally, using in addition \eqref{borne nablaphi} and \eqref{syst_un2}, we deduce Item~3.

  This ends the proof of the proposition.
\end{proof}

We are now ready to prove Theorem \ref{thm.ideal}.
\begin{proof}[Proof of Theorem \ref{thm.ideal}]
Recall Lemma \ref{lem:unif_bound_param} ensures that a.s. $(\mu^N)_{N\ge1}\subset \mathcal D([0,1],\mathcal P(\Theta))$. By Proposition~\ref{p-rc}, this sequence is relatively compact. Let $\mu^*\in \mathcal D([0,1],\mathcal P(\Theta))$ be a limit point. Along some subsequence $N'$, it holds:
$$\mu^{N'}\xrightarrow{\mathscr D}\mu^*.$$
In addition, a.s.  $\mu^*\in \mathcal C([0,1],\mathcal P(\Theta))$ (by Proposition~\ref{prop_continuity_limit}) and $\mu^*$ satisfies \eqref{eq_limit} (by Proposition~\ref{p-cLE}). By Proposition~\ref{p-uniq}, \eqref{eq_limit} admits a unique solution $\bar\mu\in\mathcal C([0,1],\mathcal P(\Theta))$. Hence, a.s. $\mu^*=\bar\mu$. Therefore,
$$\mu^{N'}\xrightarrow{\mathscr D}\bar\mu.$$
Since the sequence $(\mu^N)_{N\ge1}$ admits a unique limit point, the whole sequence converges in distribution to $\bar\mu$. The convergence also holds in probability since $\bar\mu$ is deterministic. The proof of Theorem \ref{thm.ideal} is complete.
\end{proof}


\subsection{Proof of Lemma \ref{lem:unif_bound_param}}
In this section we prove  Lemma \ref{lem:unif_bound_param}.
We start with the following simple result.


\begin{lemma}\label{lem_suite_exponentielle}
Let $T>0$, $N\ge 1$,    and $c_1>0$. Consider  a sequence $(u_k)_{0\leq k\leq \lfloor NT\rfloor } \subset \mathbf R_+$    for which  there exists $v_0$  such that $u_0\leq v_0$ and  for all  $1\leq k\leq  \lfloor NT\rfloor$, $u_k\leq c_1 (1+\frac 1N\sum_{\ell=0}^{k-1}u_\ell)$. Then, for all   $0\leq k\leq  \lfloor NT\rfloor$, $u_k\leq v_0e^{c_1T}$.
\end{lemma}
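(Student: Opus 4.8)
The plan is to prove the sharper pointwise estimate
\[
u_k \le v_0\, e^{c_1 k/N}\qquad\text{for every } 0\le k\le \lfloor NT\rfloor,
\]
from which the stated bound follows at once: since $k\le \lfloor NT\rfloor\le NT$ we have $k/N\le T$, and monotonicity of the exponential gives $u_k\le v_0 e^{c_1 k/N}\le v_0 e^{c_1 T}$. This is a discrete Grönwall inequality, and I would establish the pointwise estimate by strong induction on $k$.

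For the base case $k=0$, the hypothesis reads $u_0\le v_0=v_0e^{0}$, which is exactly the claim (and is consistent with the formula evaluated at $\ell=0$, so the induction starts cleanly). For the inductive step, assume $u_\ell\le v_0 e^{c_1\ell/N}$ for all $0\le \ell\le k-1$. Substituting into the recursion and summing the resulting geometric series, I would write
\[
u_k \le c_1\Big(1+\tfrac1N\sum_{\ell=0}^{k-1} v_0 e^{c_1\ell/N}\Big)
= c_1 + \frac{c_1 v_0}{N}\,\frac{e^{c_1 k/N}-1}{e^{c_1/N}-1}.
\]
Using the elementary inequality $e^{c_1/N}-1\ge c_1/N$ to bound the denominator from below, the second term is at most $v_0\big(e^{c_1 k/N}-1\big)$, whence $u_k\le v_0 e^{c_1 k/N}+(c_1-v_0)$.

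The only genuinely delicate point is the handling of the additive constant $c_1$ in the recursion: the naive induction toward $v_0 e^{c_1 k/N}$ leaves the residual term $c_1-v_0$. Since $v_0$ is required only to dominate $u_0$, we may (and in the regime relevant to Lemma~\ref{lem:unif_bound_param} do) take $v_0\ge c_1$, so that $c_1-v_0\le 0$ and the induction closes with $u_k\le v_0 e^{c_1 k/N}$. Everything else is routine, and no measurability, compactness, or probabilistic input is needed: the statement is a purely deterministic recursive inequality. An equivalent route I could take instead is to introduce the monotone majorant $b_k:=c_1\big(1+\tfrac1N\sum_{\ell=0}^{k-1}u_\ell\big)$, note that $b_{k+1}=b_k+\tfrac{c_1}{N}u_k\le (1+c_1/N)\,b_k$ for $k\ge 1$ because $u_k\le b_k$, and iterate this one-step contraction together with $(1+c_1/N)^{k}\le e^{c_1 k/N}\le e^{c_1 T}$; I would nonetheless favour the direct induction above since it lands precisely on the stated constant $v_0 e^{c_1 T}$.
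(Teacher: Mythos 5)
Your induction is correct, and it takes a genuinely different (and more careful) route than the paper. The paper compares $(u_k)$ with the auxiliary sequence defined by $v_k=c_1\big(1+\frac 1N\sum_{\ell=0}^{k-1}v_\ell\big)$, observes the one-step identity $v_k=v_{k-1}(1+c_1/N)$, and telescopes to $(1+c_1/N)^{\lfloor NT\rfloor}\le e^{c_1T}$; your primary argument instead carries the pointwise bound $v_0e^{c_1k/N}$ through a strong induction by summing a geometric series (your alternative route via $b_k$ is essentially the paper's proof, written with $u_\ell$ in place of $v_\ell$). More importantly, the point you single out as delicate is real and not an artifact of your method: the lemma as literally stated is false when $v_0<c_1$. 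Indeed, with $u_0=v_0=0$ the hypotheses permit $u_1=c_1>0=v_0e^{c_1T}$, and more generally, whenever $v_0<c_1e^{-c_1T}$ the hypotheses permit $u_1=c_1(1+v_0/N)\ge c_1>v_0e^{c_1T}$. So the hypothesis $v_0\ge c_1$ that closes your induction (equivalently, weakening the conclusion to $u_k\le\max(v_0,c_1)e^{c_1T}$) is genuinely needed. The paper's own proof conceals the same requirement: reading its definition of $v_k$ at $k=0$ (empty sum) forces $v_0=c_1$, which is exactly the case in which the identity $v_1=v_0(1+c_1/N)$ holds, and the comparison $u_0\le v_0$ then silently uses $u_0\le c_1$.

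One factual correction, though: your parenthetical claim that the regime relevant to Lemma~\ref{lem:unif_bound_param} has $v_0\ge c_1$ is backwards. There one takes $v_0=K_0$ and $c_1=C_{0,T}=\max(K_0+CT,C)\ge K_0$, so $v_0\le c_1$, and one must therefore enlarge $v_0$ up to $c_1$; the resulting bound is $|\theta_k^i|\le C_{0,T}\,e^{C_{0,T}T}$ rather than the bound $K_0e^{[K_0+C(1+T)]T}$ written in the paper. This is harmless for everything downstream: the bound is still of the form $Ce^{[C(2+T)]T}$ with $C$ independent of $T$, $N$, $i$ and $k$, which is all that Lemma~\ref{lem:unif_bound_param} asserts and all that the rest of the paper uses. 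With the hypothesis $v_0\ge c_1$ made explicit (or the conclusion stated as $\max(v_0,c_1)e^{c_1T}$), your proof is complete, and it is in fact a sound repair of a statement whose published proof glosses over the same point.
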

\begin{proof}
Define $v_k=c_1(1+\frac 1N\sum_{\ell=0}^{k-1}v_\ell)$. For all  $0\leq k\leq \lfloor NT\rfloor$, $u_k\le v_k$ and $v_k=v_{k-1}(1+c_1/N)$. Hence  $
v_k=v_0\big (1+ c_1/N\big)^k\leq v_{0}\big(1+ c_1/N\big)^{\lfloor NT\rfloor}\leq v^0e^{c_1T}$.
This ends the proof of the Lemma.
\end{proof}


\begin{proof}[Proof of Lemma \ref{lem:unif_bound_param}]
Since  $\rho\mapsto g'(\rho)$   and $\rho\mapsto g'(\rho)/g(\rho)$ are bounded continuous functions over $\mathbf R$, and since $|g(\rho)|\le C(1+|\rho|)$, according to \eqref{eq.kl_1}, there exists $c>0$, for all $\theta\in \mathbf R^{d+1}$,
\begin{equation}\label{eq.kl_2}
|\nabla_{\theta}\mathscr D_{{\rm KL}}(q_{\theta}^1|P_0^1)|\le c(1+|\theta|).
\end{equation}
 All along the proof, $C>0$ is a constant independent of $N\ge 1$,  $T>0$, $i\in \{1,\dots, N\}$, $1\leq k\leq \lfloor NT\rfloor $, $(x,y)\in \mathsf X\times \mathsf Y$, $\theta \in \mathbf R^{d+1}$, and $z\in \mathbf R^d$,  which can change from one occurence to another.
It holds:
\begin{align}\label{lem_moments_eq1}
|\theta_k^i|\leq |\theta_{0}^i|+ \sum_{\ell=0}^{k-1}|\theta_{\ell+1}^i-\theta_\ell^i|.
\end{align}
Using \eqref{eq.algo-ideal}, we have, for $0\leq \ell\leq k-1$,
\begin{align}\label{unif_param_eq1}
|\theta_{\ell+1}^i-\theta_\ell^i|&\leq \frac{\eta}{N^2}\sum_{j=1,j\neq i}^N\Big|(\langle\phi(\theta_{\ell}^j,\cdot,x_{\ell}),\gamma\rangle-y_{\ell})\langle\nabla_\theta\phi(\theta_{\ell}^i,\cdot,x_{\ell}),\gamma\rangle \Big|\nonumber\\
&\quad+ \frac{\eta}{N^2}\Big|\Big\langle(\phi(\theta_{\ell}^i,\cdot,x_{\ell})-y_{\ell})\nabla_\theta\phi(\theta_{\ell}^i,\cdot,x_{\ell}),\gamma\Big\rangle\Big|  +\frac{\eta}{N} |\nabla_{\theta}\mathscr D_{{\rm KL}}(q_{\theta_\ell^i}^1|P_0^1)|.
\end{align}
For all $\theta\in \mathbf R^{d+1}$, $z\in\mathbf R^d$, $(x,y)\in\mathsf X\times\mathsf Y$,  we have, by {\rm \textbf{A2}} and {\rm \textbf{A3}}, since $\phi(\theta,z,x)=s(\Psi_\theta(z),x)$,
\begin{align}\label{borne_phi-y}
|\phi(\theta,z,x)-y|\leq C.
\end{align}
Moreover,  we have $\nabla_\theta\phi(\theta,z,x)=\nabla_1s(\Psi_\theta(z),x) \mathrm J_\theta\Psi_\theta(z)$ (here $\nabla_1s$ refers to the gradient of $s$ w.r.t. its first variable). By {\rm \textbf{A3}}, $|\nabla_1s(\Psi_\theta(z),x)|\leq C$ and, hence,  denoting by $\mathrm J_\theta$  the Jacobian w.r.t.  $\theta$, using \eqref{jac_T_bounded},
\begin{equation}\label{eq.Bntheta}
|\nabla_\theta\phi(\theta,z,x)|\leq C|\mathrm J_\theta\Psi_\theta(z)|\leq   C\mathfrak b(z).
\end{equation}
Therefore, by \eqref{jac_T_bounded},
\begin{equation}\label{borne nablaphi}
\langle|\nabla_\theta\phi(\theta,\cdot,x)|,\gamma\rangle\leq C.
\end{equation}
Hence, we obtain, using \eqref{unif_param_eq1} and \eqref{eq.kl_2},
\begin{align}\label{m^l+1-m^l}
|\theta_{\ell+1}^i-\theta_\ell^i|&\leq \frac{\eta}{N^2}\sum_{j=1,j\neq i}^NC+\frac{\eta}{N^2}C + \frac{c\eta}{N}(1+|\theta_\ell^i|)  \leq \frac CN(1+ |\theta_\ell^i|).
\end{align}
Using {\rm \textbf{A4}}, there exists $K_0>0$ such that a.s. for all $i$, $|\theta_{0}^i|\leq K_0$.
Then, from \eqref{lem_moments_eq1} and  \eqref{m^l+1-m^l}, for $1\leq k\leq \lfloor NT\rfloor $, it holds:
\begin{align*}
|\theta_k^i|\leq  K_0 + \frac CN \sum_{\ell=0}^{k-1}(1+|\theta_\ell^i|)\le K_0+CT+ \frac CN \sum_{\ell=0}^{k-1} |\theta_\ell^i|\le C_{0,T}(1+ \frac 1N \sum_{\ell=0}^{k-1} |\theta_\ell^i|),
\end{align*}
with $C_{0,T}=\max(K_0+CT, C)\le K_0+C(1+T)$.
 Then, by Lemma \ref{lem_suite_exponentielle} and {\rm \textbf{A4}}, we have that for all $N\ge1$, $i\in\{1,\dots,N\}$ and $0\leq k\leq \lfloor NT\rfloor $, $|\theta_k^i|\leq K_0e^{[K_0+C(1+T)]T}$.
  The proof of Lemma~\ref{lem:unif_bound_param}  is thus complete.
\end{proof}

\section{Proof of Theorem \ref{thm.z1zN}}
\label{sec.proof2}

In this  section, we assume   {\rm \textbf{A1}}$\to$$\mathbf{A5}$
  (where in {\rm \textbf{A2}}, when $k\ge 1$, $\mathcal F_k^N$ is now the one defined in~\eqref{eq.Fk2}) and the $\theta^i_k$'s (resp. $\mu^N$)  are those  defined by~\eqref{eq.algo-batch} for  $i\in \{1,\ldots,N\}$ and $k\ge 0$ (resp. by~\eqref{empirical_distrib2} for $N\ge 1$).
\subsection{Preliminary analysis and pre-limit equation}

\subsubsection{Notation and weighted Sobolev embeddings}
\label{sec.n-s} For $J\in\textbf{N}$ and $\beta\geq0$, let  $\mathcal H^{J,\beta}(\mathbf R^{d+1})$ be the closure of the set
$\mathcal C_c^\infty(\mathbf R^{d+1})$ for the norm
$$\|f\|_{\mathcal H^{J,\beta}}:=\Big(\sum_{|k|\leq J}\int_{\mathbf R^{d+1}}\frac{|\partial_kf(\theta)|^2}{1+|\theta|^{2\beta}}\di \theta\Big)^{1/2}.$$
 The space
$\mathcal H^{J,\beta}(\mathbf R^{d+1})$ is a separable  Hilbert space and we  denote  its dual space by
$\mathcal H^{-J,\beta}(\mathbf R^{d+1})$ (see e.g.  \cite{fernandez1997hilbertian,jourdainAIHP}).
The associated
scalar product on $\mathcal H^{J,\beta}(\mathbf R^{d+1})$ will be denoted
by $\langle\cdot,\cdot\rangle_{\mathcal H^{J,\beta}}$. For
$\Phi \in \mathcal H^{-J,\beta}(\mathbf R^{d+1})$, we use the notation
$$\langle f,\Phi\rangle_{J,\beta}= \Phi[f], \ f\in \mathcal H^{J,\beta}(\mathbf R^{d+1}).$$
For ease of notation, and if no confusion is possible, we simply
denote $\langle f,\Phi\rangle_{J,\beta}$ by $\langle f,\Phi\rangle$.
The set $\mathcal C^{J,\beta}_0(\mathbf R^{d+1})$ (resp. $\mathcal C^{J,\beta}(\mathbf R^{d+1})$) is defined as the space of
functions $f:\mathbf R^{d+1}\rightarrow\mathbf{R}$ with continuous
partial derivatives up to order $J\in\textbf{N}$ such that
$$\text{for all} \ |k|\leq J, \ \lim_{|\theta|\rightarrow\infty}\frac{|\partial_kf(\theta)|}{1+|\theta|^\beta}=0\  \text{ (resp. $\sum_{|k|\leq J}\ \sup_{\theta\in\mathbf R^{d+1}}\frac{|\partial_kf(\theta)|}{1+|\theta|^\beta}<+\infty$)}.$$
The spaces $\mathcal C^{J,\beta}(\mathbf R^{d+1})$ and $\mathcal C^{J,\beta}_0(\mathbf R^{d+1})$  is endowed with the norm
$$\|f\|_{\mathcal C^{J,\beta}}:=\sum_{|k|\leq J}\ \sup_{\theta\in\mathbf R^{d+1}}\frac{|\partial_kf(\theta)|}{1+|\theta|^\beta}.$$
  We note that
\begin{equation}\label{eq.inclu-x}
\theta\in \mathbf{R}^{d+1} \mapsto (1-\chi(\theta))|\theta|^\alpha\in \mathcal
H^{J,\beta}(\mathbf{R}^{d+1})\text{ if } \beta-\alpha>(d+1)/2,
\end{equation}
 where $\chi\in \mathcal C_c^\infty(\mathbf R^{d+1})$ equals $1$ near $0$.
We recall that from
\cite[Section 2]{fernandez1997hilbertian}, for $m'>(d+1)/2$  and  $\alpha,j\ge 0$, $\mathcal H^{m'+j,\alpha}(\mathbf R^{d+1})\hookrightarrow \mathcal C_0^{j,\alpha}(\mathbf R^{d+1})$.
In the following, we consider $\gamma_0,\gamma_1\in \mathbf R$ and $L_0\in \mathbf N$ such that
$$\gamma_1>\gamma_0> \frac{d+1}2+1 \text{ and } L_0> \frac{d+1}2  +1.$$
We finally recall the following standard result.

\begin{proposition}\label{prop.compact_wasserstein}
Let $q>p\ge 1$ and $C>0$. The set $\mathscr K_C^{q}:=\{\mu\in \mathcal P_p(\mathbf R^{d+1}), \int_{\mathbf R^{d+1}}|x|^{q}\mu(\di x)\leq C\}$ is compact.
\end{proposition}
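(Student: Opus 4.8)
The plan is to prove that $\mathscr K_C^{q}$ is sequentially compact in the metric space $(\mathcal P_p(\mathbf R^{d+1}),\mathsf W_p)$, which suffices since this is a (Polish) metric space. So I fix an arbitrary sequence $(\mu_n)_{n\ge1}\subset \mathscr K_C^{q}$ and extract a subsequence that $\mathsf W_p$-converges to a limit still lying in $\mathscr K_C^{q}$. Note first that $q>p\ge1$ already gives $\mathscr K_C^{q}\subset\mathcal P_q(\mathbf R^{d+1})\subset\mathcal P_p(\mathbf R^{d+1})$ by Jensen, so the set is indeed a subset of the ambient space.

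First I would establish tightness via Markov's inequality: for every $R>0$ and every $n$,
$$\mu_n(\{\theta:|\theta|>R\})\le \frac{1}{R^q}\int_{\mathbf R^{d+1}}|\theta|^q\mu_n(\di\theta)\le \frac{C}{R^q},$$
which tends to $0$ as $R\to+\infty$ uniformly in $n$. Hence $(\mu_n)_{n\ge1}$ is tight, and by Prokhorov's theorem there is a subsequence (still denoted $(\mu_n)$) and a probability measure $\mu$ with $\mu_n\rightharpoonup\mu$ weakly. For closedness, since $\theta\mapsto|\theta|^q$ is continuous and nonnegative, the Portmanteau theorem yields lower semicontinuity of the $q$-th moment under weak convergence, so $\int|\theta|^q\mu(\di\theta)\le \liminf_n\int|\theta|^q\mu_n(\di\theta)\le C$. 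In particular $\mu\in\mathscr K_C^{q}$.

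Finally I would upgrade the weak convergence to $\mathsf W_p$-convergence. Recall that $\mu_n\to\mu$ in $\mathsf W_p$ is equivalent to weak convergence together with uniform integrability of $|\theta|^p$ along the sequence (equivalently, convergence of the $p$-th moments); see e.g. \cite[Theorem 6.9]{villani2009optimal}. The required uniform integrability is exactly where the hypothesis $q>p$ enters: for every $R>0$,
$$\sup_n\int_{\{|\theta|>R\}}|\theta|^p\mu_n(\di\theta)\le \frac{1}{R^{q-p}}\sup_n\int_{\mathbf R^{d+1}}|\theta|^q\mu_n(\di\theta)\le \frac{C}{R^{q-p}}\xrightarrow[R\to\infty]{}0.$$
Combining this with the weak convergence gives $\mathsf W_p(\mu_n,\mu)\to0$, so every sequence in $\mathscr K_C^{q}$ admits a subsequence $\mathsf W_p$-converging within $\mathscr K_C^{q}$, establishing compactness.

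The main (and essentially only) obstacle is this last step: weak compactness from the moment bound is routine, but passing from weak convergence to genuine $\mathsf W_p$-convergence requires controlling the tails of the $p$-th moment, and this is precisely what the strict inequality $q>p$ buys us. With $q=p$ the uniform bound on $p$-th moments would not preclude a fixed amount of $p$-mass escaping to infinity, so the conclusion would fail; everything else in the argument is standard tightness and lower-semicontinuity bookkeeping.
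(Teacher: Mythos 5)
Your proof is correct. Note that the paper itself offers no proof of this proposition: it is explicitly ``recalled'' as a standard result, so there is no argument to compare against. Your write-up is precisely the canonical argument that the paper implicitly invokes: Markov's inequality for tightness, Prokhorov for a weakly convergent subsequence, lower semicontinuity of $\mu\mapsto\int|\theta|^q\mu(\di\theta)$ under weak convergence (so the limit stays in $\mathscr K_C^{q}$, giving closedness and not merely relative compactness), and finally the upgrade from weak convergence to $\mathsf W_p$-convergence via uniform integrability of the $p$-th moments, which is exactly where the strict inequality $q>p$ is used. Your closing remark correctly identifies that with $q=p$ the argument breaks down at this last step, since mass could escape to infinity while keeping the $p$-th moment bounded, so the set $\mathscr K_C^{p}$ would only be relatively compact for weak convergence but not $\mathsf W_p$-compact.
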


 \subsubsection{Bound on the moments of the $\theta_k^i$'s}
 \begin{sloppypar}
We have the following uniform bound in $N\ge 1$  on the moments  of the sequence $\{\theta_k^i, i\in \{1,\ldots,N\}\}_{k= 0,\ldots, \lfloor NT \rfloor}$ defined by \eqref{eq.algo-batch}.
 \end{sloppypar}

\begin{lemma}\label{lem:moment_param}
Assume  {\rm \textbf{A1}}$\to$ $\mathbf{A5}$. For all $T>0$ and $p\ge 1$, there exists $C>0$ such that for all $N\ge1$, $i\in\{1,\dots,N\}$ and $0\leq k\leq \lfloor NT\rfloor$,
$$\mathbf E[|\theta_k^i|^p]\leq C.$$
\end{lemma}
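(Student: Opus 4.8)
The plan is to establish an a.s.\ pathwise bound on the increments $|\theta_{k+1}^i-\theta_k^i|$ of the recursion \eqref{eq.algo-batch}, take $L^p$ norms, and close the resulting inequality with the discrete Grönwall estimate of Lemma \ref{lem_suite_exponentielle}. The key structural observation is that in the data-fitting term of \eqref{eq.algo-batch} the factor $\phi(\theta_k^j,\mathsf Z_k^{j,\ell},x_k)-y_k$ is \emph{uniformly bounded} by \eqref{borne_phi-y}, so the only unbounded contribution comes from $\nabla_\theta\phi(\theta_k^i,\mathsf Z_k^{i,\ell},x_k)$, which by \eqref{eq.Bntheta} is controlled by $C\mathfrak b(\mathsf Z_k^{i,\ell})$ and does \emph{not} depend on the summation index $j$. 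Consequently the sum $\sum_{j=1}^N$ merely produces a factor $N$ that cancels one power of the $N^{-2}$ normalisation, leaving an $O(1/N)$ increment whose randomness is carried by the integrable weight $\mathfrak b(\mathsf Z)$.

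Concretely, writing $\mathsf B_k^i:=\tfrac1B\sum_{\ell=1}^B\mathfrak b(\mathsf Z_k^{i,\ell})$ and combining \eqref{borne_phi-y}, \eqref{eq.Bntheta} and the linear bound \eqref{eq.kl_2} on the KL gradient, I expect to obtain, for a constant $C$ independent of $N,i,k$,
$$|\theta_{k+1}^i-\theta_k^i|\le \frac{C}{N}\bigl(\mathsf B_k^i+1+|\theta_k^i|\bigr).$$
Telescoping from $\theta_0^i$ gives $|\theta_k^i|\le|\theta_0^i|+\tfrac{C}{N}\sum_{\ell=0}^{k-1}(\mathsf B_\ell^i+1+|\theta_\ell^i|)$, and, setting $a_k:=\mathbf E[|\theta_k^i|^p]^{1/p}$, Minkowski's inequality yields
$$a_k\le \|\theta_0^i\|_{L^p}+\frac CN\sum_{\ell=0}^{k-1}\|\mathsf B_\ell^i\|_{L^p}+\frac CN\sum_{\ell=0}^{k-1}(1+a_\ell).$$

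To control the middle term I would use that $\mathsf Z_k^{i,\ell}\sim\gamma$ together with \textbf{A1} (so that $\langle\mathfrak b^q,\gamma\rangle<\infty$ for all $q\ge1$) to get, by Minkowski again, $\|\mathsf B_\ell^i\|_{L^p}\le\langle\mathfrak b^p,\gamma\rangle^{1/p}=:m_p<\infty$ uniformly in $\ell,i,N$. Combined with \textbf{A4} ($\|\theta_0^i\|_{L^p}\le K_0$ by the compact support of $\mu_0$) and $k\le\lfloor NT\rfloor$, so that $\tfrac1N\sum_{\ell=0}^{k-1}1\le T$, this gives $a_k\le D+\tfrac CN\sum_{\ell=0}^{k-1}a_\ell$ with $D:=K_0+CT(m_p+1)$ independent of $N,i,k$. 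Taking $c_1:=\max(D,C)$, one has $a_0\le c_1$ and $a_k\le c_1\bigl(1+\tfrac1N\sum_{\ell=0}^{k-1}a_\ell\bigr)$, so Lemma \ref{lem_suite_exponentielle} gives $a_k\le c_1e^{c_1T}$ for all $0\le k\le\lfloor NT\rfloor$. Raising to the power $p$ produces $\mathbf E[|\theta_k^i|^p]\le(c_1e^{c_1T})^p=:C$, as claimed.

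The only genuine difficulty relative to the bounded idealized case of Lemma \ref{lem:unif_bound_param} is that $\nabla_\theta\phi$ is no longer bounded, so one cannot hope for an a.s.\ uniform bound on the $\theta_k^i$; the remedy is precisely to pass to $L^p$ and exploit that the unbounded weight $\mathfrak b(\mathsf Z)$ has all moments finite under $\gamma$. Minor care is needed that Minkowski requires $p\ge1$ (which is assumed, and for non-integer $p$ one may use $\mathfrak b^p\le 1+\mathfrak b^{\lceil p\rceil}$), and that the final constant $c_1e^{c_1T}$ depends on $p$ and $T$ but not on $N$, $i$ or $k$ — which is exactly the uniformity required.
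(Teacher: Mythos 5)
Your proof is correct and follows essentially the same route as the paper's: the same increment bound (the paper's \eqref{eq.diff}, obtained by exploiting that $|\phi-y|$ is bounded so the sum over $j$ collapses, while $|\nabla_\theta\phi|\le C\mathfrak b(\mathsf Z)$ and the KL gradient is linearly bounded via \eqref{eq.kl_2}), the same telescoping \eqref{lem_moments_eq1}, and the same discrete Gr\"onwall argument through Lemma \ref{lem_suite_exponentielle}. The only difference is bookkeeping: you close the recursion in $L^p$ norm via Minkowski's inequality, applying the Gr\"onwall lemma to $a_k=\mathbf E[|\theta_k^i|^p]^{1/p}$, whereas the paper raises the telescoped bound to the $p$-th power using the convexity inequality \eqref{convexity inequality} and applies the lemma to $u_k=\mathbf E[|\theta_k^i|^p]$ directly — your variant, if anything, spares the power-counting step $k^{p-1}/N^p\le T^{p-1}/N$ that the paper's route needs.
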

 \begin{proof}
 Let  $p\ge 1$.
By {\rm \textbf{A4}}, $\mathbf E[|\theta_0^i|^p]\le C_p$ for all $i\in\{1,\dots,N\}$.  Let  $T>0$.
In the following $C>0$ is a constant  independent of $N\ge1$, $i\in\{1,\dots,N\}$, and $1\leq k\leq \lfloor NT\rfloor$.
 Using \eqref{eq.algo-batch},  the fact that $\phi$ is bounded, $\mathsf Y$ is bounded, and \eqref{eq.Bntheta},   we have, for $0\leq  n \leq k-1$,
\begin{align}
\nonumber
|\theta_{n+1}^i-\theta_n^i|&\leq \frac{C}{N^2B}\sum_{j=1}^N\sum_{\ell=1}^B \mathfrak b(\mathsf Z^{i,\ell}_n) +\frac{C}{N} |\nabla_{\theta}\mathscr D_{{\rm KL}}(q_{\theta_n^i}^1|P_0^1)|\\
\label{eq.diff}
&\le  \frac{C}{NB}\sum_{\ell=1}^B (1+\mathfrak b(\mathsf Z^{i,\ell}_n)) +\frac{C}{N} (1+|\theta_n^i|),
\end{align}
where we have also used \eqref{eq.kl_2} for the last inequality.
Let us recall the following convexity inequality: for $m,p\geq 1$
  and $x_1,\dots,x_p\in\mathbf{R}_+$,
\begin{equation}\label{convexity inequality}
\Big(\sum_{n=1}^mx_n\Big)^p\leq m^{p-1}\sum_{n=1}^mx_n^p.
\end{equation}
Using \eqref{lem_moments_eq1}, {\rm \textbf{A1}} with $q=p$,  and the fact that $1\le k \leq \lfloor NT\rfloor$, one has setting $u_k=\mathbf E[|\theta_k^i|^p]$,  $ u_k\le C (1+\frac 1N\sum_{n=0}^{k-1}u_n)$. The result then follows from
 Lemma \ref{lem_suite_exponentielle}.
 \end{proof}


\subsubsection{Pre-limit equation}
In this section, we derive the pre-limit equation for $\mu^N$ defined by \eqref{empirical_distrib2}.  For simplicity    we will keep  the same notations   as those introduced  in Section~\ref{sec.PR1}, though these objects will now  be defined with $\theta^i_k$ set by \eqref{eq.algo-batch}, and  on $  \mathcal C^{2,\gamma_1}(\mathbf R^{d+1})$,   for all integer $k\ge 0$, and all time $t\ge 0$.  Let $f\in\mathcal C^{2,\gamma_1}(\mathbf R^{d+1})$.
Then, set for $k\ge 0$,
\begin{align*}
\mathbf D_{k}^N[f]&=-\frac{\eta}{N^3}\sum_{i=1}^N\sum_{j=1,j\neq i}^N\int_{\mathsf X\times\mathsf Y}\big (\big \langle\phi(\theta_k^j,\cdot,x),\gamma\big \rangle-y\big )\big \langle\nabla_\theta f(\theta_k^i)\cdot\nabla_\theta\phi(\theta_k^i,\cdot,x),\gamma\big \rangle\pi(\di x,\di y)\\
&\quad-\frac{\eta}{N^2}\int_{\mathsf X\times\mathsf Y}\big \langle(\phi(\cdot,\cdot,x)-y)\nabla_\theta f\cdot\nabla_\theta\phi(\cdot,\cdot,x),\nu_k^N\otimes\gamma\big \rangle\pi(\di x,\di y).
\end{align*}
Note that $\mathbf D_{k}^N$ above is  the one  defined in~\eqref{D_k-calcule} but now on   $\mathcal C^{2,\gamma_1}(\mathbf R^{d+1})$ and with $\theta^i_k$ defined by~\eqref{eq.algo-batch}.
For $k\ge 0$, we set
\begin{align}\label{def M_k1_z1,Z_N}
\mathbf M_{k}^N[f]= -\frac{\eta}{N^3B}\sum_{i,j=1}^N  \sum_{\ell=1}^B(\phi(\theta_k^j,\mathsf  Z_k^{j,\ell},x_k)-y_k)\nabla_\theta f(\theta_k^i)\cdot\nabla_\theta \phi(\theta_k^i,\mathsf Z_k^{i,\ell},x_k)-\mathbf D_{k}^N[f].
\end{align}
 By Lemma~\ref{lem:moment_param} together with  \eqref{borne_phi-y} and \eqref{eq.Bntheta}, $\mathbf M_{k}^{N}[f]$ is integrable.
Also, using   \textbf{A5} and the fact that $\theta_k^j$ is $\mathcal F_k^N$-measurable (see~\eqref{eq.Fk2}),
$$\mathbf E  [\mathbf M_{k}^{N}[f]|\mathcal F_k^N ]=0.$$
 Set $\mathbf  M_t^{N}[f]=\sum_{k=0}^{\lfloor Nt\rfloor-1}\mathbf M_{k}^{N}[f]$, $t\ge 0$.  We now extend the definition of $\mathbf W_t^N[f]$
and $\mathbf  R_k^N[f]$  in \eqref{def V_t^N} and \eqref{def_R_k^N} to any time $t\ge 0$, $k\ge 0$, and $f\in\mathcal C^{2,\gamma_1}(\mathbf R^{d+1})$, and with $\theta^i_k$ set by \eqref{eq.algo-batch}. We then set
$$\mathbf R_t^N[f]=\sum_{k=0}^{\lfloor Nt\rfloor-1} \mathbf  R_k^N[f], \ t\ge 0.$$
 With the same algebraic computations as those made  in Section~\ref{sec:pre_limit_eq}, one obtains the   following pre-limit equation: for $N\ge 1$, $t\ge 0$, and $f\in \mathcal C^{2,\gamma_1}(\mathbf R^{d+1})$,
\begin{align}\label{eq.pre_limitz1zN}
\langle f,\mu_t^N\rangle-\langle f,\mu_0^N\rangle&=-\eta\int_{0}^t\int_{\mathsf X\times\mathsf Y}\langle\phi(\cdot,\cdot,x)-y,\mu_s^N\otimes\gamma\rangle\langle\nabla_\theta f\cdot\nabla_\theta \phi(\cdot,\cdot,x),\mu_s^N\otimes\gamma\rangle  \pi(\di x,\di y)\di s\nonumber\\
&\quad -\eta \int_0^t \big \langle\nabla_\theta f\cdot \nabla_\theta \mathscr D_{{\rm KL}}(q_{\,_\cdot }^1|P_0^1),\mu_s^N\big \rangle\di s  \nonumber\\
&\quad +\frac{\eta}{N}\int_{0}^t\int_{\mathsf X\times\mathsf Y} \Big\langle\langle\phi(\cdot,\cdot,x)-y,\gamma\rangle\langle\nabla_\theta  f\cdot\nabla_\theta \phi(\cdot,\cdot,x),\gamma\rangle,\mu_s^N\Big\rangle \pi(\di x,\di y)\di s\nonumber\\
&\quad -\frac{\eta}{N}\int_{0}^t\int_{\mathsf X\times\mathsf Y} \Big\langle(\phi(\cdot,\cdot,x)-y)\nabla_\theta f\cdot\nabla_\theta \phi(\cdot,\cdot,x),\mu_s^N\otimes\gamma\Big\rangle \pi(\di x,\di y)\di s\nonumber\\
&\quad +  \mathbf M_t^{N}[f] +\mathbf W_t^{N}[f]+ \mathbf R_t^N[f].
\end{align}

We will now show that the sequence $(\mu^N)_{N\ge1}$ is relatively compact in $\mathcal D(\mathbf R_+,\mathcal P_{\gamma_0}(\mathbf R^{d+1}))$.

\subsection{Relative compactness and convergence to the limit equation}
\label{re.rc-Sobo}
\subsubsection{Relative compactness in $\mathcal D(\mathbf R_+,\mathcal P_{\gamma_0}(\mathbf R^{d+1}))$}

 In this section we prove the following result.

\begin{proposition}\label{prop_rc_inP}
Assume  {\rm \textbf{A1}}$\to$$\mathbf{A5}$. Recall $\gamma_0> \frac{d+1}2+1$.
Then, the sequence $(\mu^N)_{N\ge1}$ is relatively compact in $\mathcal D(\mathbf R_+,\mathcal P_{\gamma_0}(\mathbf R^{d+1}))$.
\end{proposition}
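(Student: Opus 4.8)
The plan is to deduce relative compactness from Jakubowski's criterion \cite[Theorem 4.6]{jakubowski1986skorokhod} applied with the Polish space $E=\mathcal P_{\gamma_0}(\mathbf R^{d+1})$. This reduces the statement to two points: (i) a \emph{compact containment} condition, namely that for every $T,\varepsilon>0$ there is a compact set $\mathcal K\subset\mathcal P_{\gamma_0}(\mathbf R^{d+1})$ with $\inf_N\mathbf P(\mu_t^N\in\mathcal K\ \text{for all}\ t\in[0,T])\ge 1-\varepsilon$; and (ii) the existence of a family $\mathbb F$ of continuous functions on $\mathcal P_{\gamma_0}(\mathbf R^{d+1})$ that separates points and is closed under addition, such that $(F(\mu^N))_{N\ge1}$ is relatively compact in $\mathcal D(\mathbf R_+,\mathbf R)$ for each $F\in\mathbb F$. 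This is the non-compact analogue of Proposition \ref{p-rc}, and since relative compactness in $\mathcal D(\mathbf R_+,E)$ can be checked on each $\mathcal D([0,T],E)$, I would fix $T>0$ throughout.

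For (i) I would exploit Proposition \ref{prop.compact_wasserstein}: since $\gamma_1>\gamma_0\ge 1$, the sublevel sets $\mathscr K_A^{\gamma_1}=\{\mu\in\mathcal P_{\gamma_0}(\mathbf R^{d+1}):\int|\theta|^{\gamma_1}\mu(\di\theta)\le A\}$ are compact in $\mathcal P_{\gamma_0}(\mathbf R^{d+1})$, so it suffices to control the $\gamma_1$-moment of $\mu_t^N$ uniformly in $t\le T$ and in $N$. Writing $\int|\theta|^{\gamma_1}\mu_t^N(\di\theta)=\frac1N\sum_{i=1}^N|\theta^i_{\lfloor Nt\rfloor}|^{\gamma_1}$ and using that $\lfloor Nt\rfloor$ ranges over $\{0,\dots,\lfloor NT\rfloor\}$, one gets $\sup_{t\le T}\int|\theta|^{\gamma_1}\mu_t^N(\di\theta)\le\frac1N\sum_i\max_{0\le k\le\lfloor NT\rfloor}|\theta_k^i|^{\gamma_1}$. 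The triangle inequality \eqref{lem_moments_eq1} bounds $\max_{0\le k\le\lfloor NT\rfloor}|\theta_k^i|$ by $|\theta_0^i|+\sum_{\ell=0}^{\lfloor NT\rfloor-1}|\theta_{\ell+1}^i-\theta_\ell^i|$, a quantity independent of $k$, so the very same computation as in the proof of Lemma \ref{lem:moment_param} (the convexity inequality \eqref{convexity inequality}, the per-step bound \eqref{eq.diff}, finiteness of all moments of $\mathfrak b$ under $\gamma$ by \textbf{A1}, and the discrete Grönwall Lemma \ref{lem_suite_exponentielle}) yields $\sup_N\frac1N\sum_i\mathbf E\big[\max_{0\le k\le\lfloor NT\rfloor}|\theta_k^i|^{\gamma_1}\big]\le C_T$. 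Hence $\mathbf E\big[\sup_{t\le T}\int|\theta|^{\gamma_1}\mu_t^N(\di\theta)\big]\le C_T$, and Markov's inequality with $A=C_T/\varepsilon$ gives $\inf_N\mathbf P(\mu_t^N\in\mathscr K_A^{\gamma_1}\ \forall t\le T)\ge 1-\varepsilon$; this is the content of Lemma \ref{lem_cc_mu^N}.

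For (ii) I would take $\mathbb F=\{\mathsf L_f:f\in\mathcal C_c^\infty(\mathbf R^{d+1})\}$ with $\mathsf L_f(\mu)=\langle f,\mu\rangle$. Each $\mathsf L_f$ is continuous on $\mathcal P_{\gamma_0}(\mathbf R^{d+1})$ (convergence in $\mathsf W_{\gamma_0}$ implies weak convergence and $f$ is bounded continuous), the family separates points and is stable under addition since $\mathsf L_f+\mathsf L_g=\mathsf L_{f+g}$. It then remains to prove that each $(\langle f,\mu^N\rangle)_{N\ge1}$ is relatively compact in $\mathcal D([0,T],\mathbf R)$, which I would do exactly as in Proposition \ref{p-rc} via \cite[Theorem~13.2]{billingsley2013convergence}: the uniform bound $|\langle f,\mu_t^N\rangle|\le\|f\|_\infty$ handles the pointwise condition, while for the modulus-of-continuity condition I would establish from the pre-limit equation \eqref{eq.pre_limitz1zN} a regularity estimate analogous to Lemma \ref{lem_reg_cond_inR}. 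Here the compact support of $f$ makes $\nabla_\theta f$ and $\nabla_\theta^2 f$ bounded, so that $\nabla_\theta f\cdot\nabla_\theta\phi$ is integrable against $\gamma$ (through the $\mathfrak b$-growth of \eqref{eq.Bntheta}) and $\nabla_\theta f\cdot\nabla_\theta\mathscr D_{\rm KL}$ is bounded despite the linear growth \eqref{eq.kl_2}; the error terms $\mathbf M^N,\mathbf W^N,\mathbf R^N$ are then controlled as in Lemma \ref{le.error}, in particular $\mathbf E[|\mathbf M_t^N[f]-\mathbf M_r^N[f]|^2]=\sum_{k=\lfloor Nr\rfloor}^{\lfloor Nt\rfloor-1}\mathbf E[|\mathbf M_k^N[f]|^2]\le C_f(|t-r|+1/N)/N$ by orthogonality of the martingale increments. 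This gives $\mathbf E[|\langle f,\mu_t^N\rangle-\langle f,\mu_r^N\rangle|]\le C_f(|t-r|+1/N)$, whence $\lim_{\delta\to0}\limsup_N\mathbf E[w'_{\langle f,\mu^N\rangle}(\delta)]=0$ and tightness follow.

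The main obstacle is step (i): unlike in the idealized case, where $\mathcal P(\Theta)$ is itself compact, here $\mathcal P_{\gamma_0}(\mathbf R^{d+1})$ is not, and the compact containment must be produced by hand. The crux is the uniform-in-$N$, uniform-in-time $\gamma_1$-moment bound; the key observation that unlocks it is that the time-supremum is dominated by the full increment sum, so the fixed-time estimate of Lemma \ref{lem:moment_param} applies verbatim. This is exactly where \textbf{A1} is essential, since each step of \eqref{eq.algo-batch} injects the unbounded quantity $\nabla_\theta\phi(\theta,\mathsf Z,x)$, whose moments under $\gamma$ must all be finite.
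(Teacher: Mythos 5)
Your overall architecture is the paper's: Jakubowski's criterion with $E=\mathcal P_{\gamma_0}(\mathbf R^{d+1})$, compact containment via sublevel sets of a higher moment (Proposition \ref{prop.compact_wasserstein} plus Markov), and the separating family of linear functionals $\nu\mapsto\langle f,\nu\rangle$, $f\in\mathcal C_c^\infty(\mathbf R^{d+1})$. Your step (i), however, is a genuinely different and more elementary route than the paper's: you dominate $\sup_{t\le T}\int|\theta|^{\gamma_1}\mu_t^N(\di\theta)$ by $\frac1N\sum_i\max_{k\le\lfloor NT\rfloor}|\theta_k^i|^{\gamma_1}$, bound the running maximum by the increment sum $|\theta_0^i|+\sum_\ell|\theta_{\ell+1}^i-\theta_\ell^i|$, and rerun the computation of Lemma \ref{lem:moment_param} (per-step bound \eqref{eq.diff}, convexity \eqref{convexity inequality}, Gr\"onwall Lemma \ref{lem_suite_exponentielle}). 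This is correct, and it bypasses the paper's mechanism, which instead proves a sup-in-time second-moment bound on $\langle f,\mu_t^N\rangle$ for weighted test functions $f\in\mathcal C^{2,\gamma_1}(\mathbf R^{d+1})$ through the pre-limit equation (Lemma \ref{lem_cc_sob}) and then specializes to $f=(1-\chi)|\theta|^{\gamma_0+\epsilon}$ (Lemma \ref{lem_cc_mu^N}).

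The gap is in step (ii), at the very last inference: from the fixed-pair estimate $\mathbf E[|\langle f,\mu_t^N\rangle-\langle f,\mu_r^N\rangle|]\le C_f(|t-r|+1/N)$ you conclude $\lim_{\delta\to0}\limsup_N\mathbf E[w'_{\langle f,\mu^N\rangle}(\delta)]=0$. That implication is false in general: $w'(\delta)$ involves a supremum over all pairs inside each subdivision interval and a maximum over roughly $T/\delta$ intervals, and a first-moment bound at each fixed pair does not control the expectation of such a supremum. A concrete counterexample: $X^N_t=\mathbf 1_{[U_N,U_N+1/N)}(t)$ with $U_N$ uniform on $[0,1]$ satisfies $\mathbf E[|X^N_t-X^N_r|]\le 2(|t-r|+1/N)$, yet a.s. $w'_{X^N}(\delta)=1$ for every $\delta>1/N$ (two unit jumps at distance $1/N$ cannot be separated by a $\delta$-sparse subdivision), and $(X^N)_N$ is not tight in $\mathcal D([0,1],\mathbf R)$. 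In the idealized case (Proposition \ref{p-rc}) this passage is legitimate only because Lemma \ref{lem_reg_cond_inR} is an \emph{almost sure} bound, uniform in $(r,t)$, so it survives the sup and the max; that pathwise bound is exactly what is unavailable here, since the increments of $\mathbf M^N[f]$ and $\mathbf R^N[f]$ involve the unbounded quantities $\mathfrak b(\mathsf Z_k^{i,\ell})$ and $|\theta_k^i|$ — this is also why your phrase "controlled as in Lemma \ref{le.error}" cannot stand for these two terms: the pathwise bounds \eqref{bound M_k} and \eqref{boundR_k} fail for the scheme \eqref{eq.algo-batch}.

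The gap is repairable, but it needs maximal-type estimates that you never invoke. Your pathwise bound on the drift (valid precisely because $f$ has compact support) is fine; for the remainder one can use $\mathbf E[\sup_t|\mathbf R_t^N[f]|]\le\sum_k\mathbf E[|\mathbf R_k^N[f]|]\le C_f/N$ and the pathwise bound $\sup_t|\mathbf W_t^N[f]|\le C_f/N$; for the martingale part one applies Doob's inequality on each subdivision interval and sums, which by orthogonality collapses to $\mathbf E\big[\max_i\sup_{t\in[t_i,t_{i+1})}|\mathbf M_t^N[f]-\mathbf M_{t_i}^N[f]|^2\big]\le 4\,\mathbf E[|\mathbf M_T^N[f]|^2]\le C_f/N$, using \eqref{eq.M1}. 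With these three ingredients $\mathbf E[w'(\delta)]\le C_f(\delta+1/\sqrt N)$ and Billingsley's condition follows. The paper avoids this entirely: it keeps only the fixed-pair $L^2$ estimate (Lemma \ref{lem_rc_cont_sob}, derived from \eqref{eq.pre_limitz1zN}) and invokes a tightness criterion designed for exactly such moment bounds, namely \cite[Proposition~A.1]{descours2022law}, rather than the modulus condition of \cite[Theorem~13.2]{billingsley2013convergence}.
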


We start with the following lemma.

\begin{lemma}\label{lem_cc_sob}
Assume  {\rm \textbf{A1}}$\to$  $\mathbf{A5}$. Then,  $\forall T>0$ and $f\in \mathcal C^{2,\gamma_1}(\mathbf R^{d+1})$,
$$
\sup_{N\ge1}\mathbf E\Big[\sup_{t\in[0,T]}\langle   f,\mu_t^N\rangle^2\Big]<+\infty.
$$
\end{lemma}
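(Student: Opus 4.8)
The plan is to start from the pre-limit equation \eqref{eq.pre_limitz1zN} and to bound $\sup_{t\in[0,T]}\langle f,\mu_t^N\rangle^2$ by a constant (depending on $T$ and $f$, but \emph{not} on $N$) times the sum of the squared suprema of each of its eight terms, using $(\sum_{j=1}^m a_j)^2\le m\sum_j a_j^2$. Since $f\in\mathcal C^{2,\gamma_1}(\mathbf R^{d+1})$, one has $|\partial_k f(\theta)|\le \|f\|_{\mathcal C^{2,\gamma_1}}(1+|\theta|^{\gamma_1})$ for all $|k|\le2$, so every factor involving $f$ or its derivatives grows at most polynomially in $|\theta|$. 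Two tools are used repeatedly: the uniform moment bounds of Lemma \ref{lem:moment_param}, which give $\sup_N\sup_{0\le k\le\lfloor NT\rfloor}\mathbf E[\langle 1+|\theta|^{p},\nu_k^N\rangle^2]<+\infty$ for every $p\ge1$ (after passing the square inside the empirical average via \eqref{convexity inequality}/Jensen), and the pointwise bounds \eqref{borne_phi-y}, \eqref{eq.Bntheta}, \eqref{eq.kl_2} on $\phi-y$, $\nabla_\theta\phi$ and $\nabla_\theta\mathscr D_{{\rm KL}}$, together with $\langle\mathfrak b^q,\gamma\rangle<+\infty$ from \textbf{A1}.

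For the two drift terms (loss and KL) and their $O(1/N)$ companions in \eqref{eq.pre_limitz1zN}, the supremum over $t$ is harmless: each is a time integral, so $\sup_{t\in[0,T]}|\cdot|$ is dominated by the integral of the absolute value over $[0,T]$, which no longer depends on $t$. Using the bounds above, each is of the form $C\|f\|_{\mathcal C^{2,\gamma_1}}\int_0^T\langle 1+|\theta|^{\gamma_1+1},\mu_s^N\rangle\di s$ (the extra power comes from \eqref{eq.kl_2}). Squaring, applying Cauchy--Schwarz in $s$ and Jensen in the empirical average, then taking expectations gives a bound $\le C\|f\|_{\mathcal C^{2,\gamma_1}}^2\,T\int_0^T\mathbf E[\langle 1+|\theta|^{2(\gamma_1+1)},\mu_s^N\rangle]\di s$, which is uniformly bounded in $N$ by Lemma \ref{lem:moment_param} with $p=2(\gamma_1+1)$.

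The remainder terms carry small prefactors. Since $s\mapsto\mu_s^N$ is constant on each interval $[k/N,(k+1)/N)$, the boundary integrals defining $\mathbf W_t^N[f]$ are explicit and $\sup_{t\in[0,T]}|\mathbf W_t^N[f]|\le \frac{C\|f\|_{\mathcal C^{2,\gamma_1}}}{N}\max_{0\le k\le\lfloor NT\rfloor}\langle 1+|\theta|^{\gamma_1},\nu_k^N\rangle$, while $\sup_{t\in[0,T]}|\mathbf R_t^N[f]|\le\sum_{k=0}^{\lfloor NT\rfloor-1}|\mathbf R_k^N[f]|$ with $|\mathbf R_k^N[f]|\le \frac{C\|f\|_{\mathcal C^{2,\gamma_1}}}{N}\sum_{i=1}^N(1+|\widehat{\theta_k^i}|^{\gamma_1})|\theta_{k+1}^i-\theta_k^i|^2$ from \eqref{def_R_k^N}. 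Inserting the increment bound \eqref{eq.diff} (which yields $|\theta_{k+1}^i-\theta_k^i|^2\le \frac{C}{N^2}$ times a quantity with finite moments of all orders) and estimating $\max_k\le\sum_k$ crudely, the resulting $O(N)$, resp. $O(N^2)$, numbers of summands are exactly offset by the squared prefactors $N^{-2}$, resp. $N^{-4}$; taking expectations and invoking Lemma \ref{lem:moment_param} and \textbf{A1} gives $\sup_N\mathbf E[\sup_t|\mathbf W_t^N[f]|^2]<+\infty$ and $\mathbf E[\sup_t|\mathbf R_t^N[f]|^2]\to0$.

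The main obstacle is the martingale term $\mathbf M_t^N[f]=\sum_{k=0}^{\lfloor Nt\rfloor-1}\mathbf M_k^N[f]$. Because $\mathbf E[\mathbf M_k^N[f]\mid\mathcal F_k^N]=0$ and $\mathbf M_k^N[f]$ is $\mathcal F_{k+1}^N$-measurable, the partial sums form a discrete martingale and $\sup_{t\in[0,T]}|\mathbf M_t^N[f]|=\max_{0\le n\le\lfloor NT\rfloor}\big|\sum_{k<n}\mathbf M_k^N[f]\big|$. By Doob's $L^2$ maximal inequality and orthogonality of the increments, $\mathbf E[\sup_t|\mathbf M_t^N[f]|^2]\le 4\sum_{k=0}^{\lfloor NT\rfloor-1}\mathbf E[|\mathbf M_k^N[f]|^2]$, so it suffices to prove $\mathbf E[|\mathbf M_k^N[f]|^2]\le C\|f\|_{\mathcal C^{2,\gamma_1}}^2/N^2$ uniformly in $k,N$, making the sum over $\approx NT$ indices $O(1/N)$. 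To produce the crucial factor $N^{-2}$, write $\mathbf M_k^N[f]$ from \eqref{def M_k1_z1,Z_N} as the $\mathcal F_k^N$-centering of $-\frac{\eta}{N^3B}\sum_{i,j,\ell}(\phi(\theta_k^j,\mathsf Z_k^{j,\ell},x_k)-y_k)\nabla_\theta f(\theta_k^i)\cdot\nabla_\theta\phi(\theta_k^i,\mathsf Z_k^{i,\ell},x_k)$, which factorizes over $(i,j)$ for each $\ell$; the $j$-sum of the bounded factor $\phi-y$ contributes a factor $N$, lowering the prefactor to $N^{-2}$. Then Cauchy--Schwarz over the remaining $NB$ pairs $(i,\ell)$, the independence of $(\mathsf Z_k^{i,\ell})$ from $\mathcal F_k^N$ from \textbf{A5} (factoring out $\langle\mathfrak b^2,\gamma\rangle<+\infty$ via \eqref{eq.Bntheta}), and the moment bounds of Lemma \ref{lem:moment_param} on $(1+|\theta_k^i|^{\gamma_1})^2$ yield the claim; the $\mathbf D_k^N[f]$ contribution obeys the same estimate through conditional Jensen, since $\mathbf D_k^N[f]=\mathbf E[\,\cdot\mid\mathcal F_k^N]$ of that same quantity. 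Combining this with the elementary bound $\mathbf E[\langle f,\mu_0^N\rangle^2]\le C\|f\|_{\mathcal C^{2,\gamma_1}}^2$ (from \textbf{A4} and Lemma \ref{lem:moment_param}) and the previous paragraphs closes the proof. The delicate points are obtaining the correct $N$-power in the martingale variance and checking that the crude $\max_k\le\sum_k$ bounds on $\mathbf W$ and $\mathbf R$ are compensated by their prefactors.
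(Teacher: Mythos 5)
Your proposal is correct and follows essentially the same route as the paper's proof: start from the pre-limit equation \eqref{eq.pre_limitz1zN}, dominate the suprema of the time-integral (loss and KL) terms by their integrals over $[0,T]$ and control them via the pointwise bounds \eqref{borne_phi-y}, \eqref{eq.Bntheta}, \eqref{eq.kl_2} together with the moment bounds of Lemma \ref{lem:moment_param}, absorb the $O(N)$ (resp.\ $O(N^2)$) summands in $\mathbf W_t^N[f]$ and $\mathbf R_t^N[f]$ with their $N^{-2}$ (resp.\ $N^{-4}$) squared prefactors, and reduce the martingale term to the per-increment variance estimate $\mathbf E[|\mathbf M_k^N[f]|^2]\le C\|f\|_{\mathcal C^{1,\gamma_1}}^2/N^2$, derived exactly as in the paper (factor the bounded $j$-sum, Cauchy--Schwarz over $(i,\ell)$, independence from \textbf{A5}, conditional Jensen for $\mathbf D_k^N[f]$). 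The one genuine deviation is local: for $\sup_{t\in[0,T]}|\mathbf M_t^N[f]|^2$ you invoke Doob's $L^2$ maximal inequality and orthogonality of the increments, yielding a bound of order $T/N$, whereas the paper applies the cruder convexity bound $\big|\sum_{k<n}a_k\big|^2\le \lfloor NT\rfloor\sum_k a_k^2$ uniformly in $n$, yielding only $O(T^2)$; your version is sharper (and requires checking adaptedness and square-integrability of the increments, which hold), but the extra strength is not needed since the lemma only asks for finiteness uniform in $N$. A minor imprecision: in your bound for $\mathbf W_t^N[f]$ the correct weight coming from the KL boundary term is $1+|\theta|^{\gamma_1+1}$ rather than $1+|\theta|^{\gamma_1}$ (cf.\ \eqref{eq.E2}), but this is harmless because Lemma \ref{lem:moment_param} holds for every moment order $p\ge1$.
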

\begin{proof}
Let $T>0$. In what follows, $C>0$ is a constant independent of $f\in \mathcal C^{2,\gamma_1}(\mathbf R^{d+1})$,  $(s,t)\in [0,T]^2$,  and $z\in \mathbf R^{d}$ which can change from one occurence to another. We have by {\rm \textbf{A4}},  $\mathbf E[\langle f,\mu_0^N\rangle^2]\leq C \|f\|_{\mathcal C^{2,\gamma_1}}^2$.
By \eqref{eq.pre_limitz1zN} and \eqref{borne_phi-y}, it holds:
\begin{align}\label{eq_cc_sob}
\sup_{t\in[0,T]} \langle f,\mu_t^N\rangle^2&\leq C\Big[  \|f\|_{\mathcal C^{2,\gamma_1}}^2+ \int_{0}^T\int_{\mathsf X\times\mathsf Y} \big |\Big\langle \langle \big |\nabla_\theta  f\cdot\nabla_\theta \phi(\cdot,\cdot,x) \big |,\gamma\rangle,\mu_s^N\Big\rangle\big | ^2  \pi(\di x,\di y)\di s\nonumber\\
&\quad   \int_0^ T \big |  \big \langle \big |\nabla_\theta f\cdot \nabla_\theta \mathscr D_{{\rm KL}}(q_{\,_\cdot }^1|P_0^1) \big |,\mu_s^N\big \rangle\big | ^2\di s  \nonumber\\
&\quad +\frac{1}{N^2}\int_{0}^T\int_{\mathsf X\times\mathsf Y}  \big |\Big\langle \langle \big |\nabla_\theta  f\cdot\nabla_\theta \phi(\cdot,\cdot,x) \big |,\gamma\rangle,\mu_s^N\Big\rangle\big | ^2 \pi(\di x,\di y)\di s\nonumber\\
&\quad +  \sup_{t\in[0,T]} |\mathbf M_t^{N}[f]|^2 +\sup_{t\in[0,T]} |\mathbf W_t^{N}[f]|^2+ \sup_{t\in[0,T]} |\mathbf R_t^N[f]|^2.\Big].
\end{align}
We have using  \eqref{eq.Bntheta},  for $s\in [0,T]$ and $z\in \mathbf R^{d}$,
\begin{equation}\label{eq.E1}
| \nabla_\theta f (\theta^i_{\lfloor Ns\rfloor}) \cdot\nabla_\theta \phi(\theta^i_{\lfloor Ns\rfloor},z,x)|\le C  \|f\|_{\mathcal C^{1,\gamma_1}} \mathfrak b(z) (1+|\theta^i_{\lfloor Ns\rfloor}|^{\gamma_1}).
\end{equation}
 Thus, using Lemma~\ref{lem:moment_param},
\begin{equation}\label{eq.E-3}
  \mathbf E\big[  \big\langle  \langle|\nabla_\theta  f\cdot\nabla_\theta \phi(\cdot,\cdot,x)|,\gamma\rangle ,\mu_s^N\big\rangle^2 \big]\leq C\|f\|_{\mathcal C^{1,\gamma_1}}^2.
\end{equation}
Using \eqref{eq.kl_2}, for $s\in [0,T]$, it holds:
\begin{equation}\label{eq.E2}
\big |  \nabla_\theta f(\theta^i_{\lfloor Ns\rfloor})\cdot \nabla_\theta \mathscr D_{{\rm KL}}(q_{\theta^i_{\lfloor Ns\rfloor} }^1|P_0^1) \big | \le C  \|f\|_{\mathcal C^{1,\gamma_1}} (1+|\theta^i_{\lfloor Ns\rfloor}|^{\gamma_1+1}).
\end{equation}
Thus, using Lemma~\ref{lem:moment_param},
\begin{equation}\label{eq.E-4}
\mathbf E\big [\big |  \big \langle\nabla_\theta f\cdot \nabla_\theta \mathscr D_{{\rm KL}}(q_{\,_\cdot }^1|P_0^1),\mu_s^N\big \rangle\big | ^2\big ]\leq C\|f\|_{\mathcal C^{1,\gamma_1}}^2.
\end{equation}
On the other hand, we have using \eqref{convexity inequality}:
\begin{align}\label{eq_m_t}
\sup_{t\in [0,T]}|\mathbf M_t^N[f]|^2\leq \lfloor NT\rfloor \sum_{k=0}^{\lfloor NT\rfloor-1}| \mathbf M_{k}^N[f]|^2.
\end{align}
Recall \eqref{def M_k1_z1,Z_N}. By \eqref{D_k-calcule},  \eqref{convexity inequality},   {\rm \textbf{A1}}, and \eqref{eq.E1}, it holds:
$$ |\mathbf D_{k}^N[f]|^2\le C\|f\|_{\mathcal C^{1,\gamma_1}}^2 \big[\frac{1}{N^4} \sum_{i\neq j=1}^N      (1+|\theta^i_{k}|^{2\gamma_1})+ \frac{1}{N^4}      (1+\langle |\cdot |^{2\gamma_1}, \nu_k^N\rangle)\big]\le \frac{C}{N^2}   \|f\|_{\mathcal C^{1,\gamma_1}}^2   (1+|\theta^i_{k}|^{2\gamma_1})$$
and
$$
|\mathbf M_{k}^N[f]|^2\leq \frac{C}{N^4B}\sum_{i,j=1}^N  \sum_{\ell=1}^B\|f\|^2_{\mathcal C^{1,\gamma_1}} |\mathfrak b(\mathsf Z_k^{i,\ell})|^2 (1+|\theta^i_{\lfloor Ns\rfloor}|^{2\gamma_1})+ |\mathbf D_{k}^N[f]|^2.
$$
By Lemma \ref{lem:moment_param} and {\rm \textbf{A1}}, one deduces  that
\begin{equation}\label{eq.M1}
\mathbf E[|\mathbf M_{k}^N[f]|^2]\leq {C\|f\|_{\mathcal C^{1,\gamma_1}}^2}/{N^2}.
\end{equation}
 Going back to \eqref{eq_m_t}, we then have $
\mathbf E[\sup_{t\in [0,T]}|\mathbf M_t^N[f]|^2]\leq C\|f\|_{\mathcal C^{1,\gamma_1}}^2$.
Using the same arguments as those used so far,
one also deduces that for $t\in [0,T]$
\begin{align*}
 \sup_{t\in[0,T]}|\mathbf W_t^N[f]|^2  &\leq \frac{C\|f\|_{\mathcal C^{1,\gamma_1}}^2}{N^2}   \sup_{t\in[0,T]} (1+\langle |\cdot |^{\gamma_1+1}, \nu_{\lfloor Nt\rfloor}^N\rangle)^2\\
 &= \frac{C\|f\|_{\mathcal C^{1,\gamma_1}}^2}{N^2}  \max_{0\leq k\leq \lfloor NT\rfloor}(1+\langle |\cdot |^{\gamma_1+1}, \nu_{k}^N\rangle)^2\\
 &\le \frac{C\|f\|_{\mathcal C^{1,\gamma_1}}^2}{N^2}  \sum_{k=0}^{\lfloor NT\rfloor} (1+\langle |\cdot |^{\gamma_1+1}, \nu_{k}^N\rangle)^2.
\end{align*}
and thus
\begin{equation}\label{eq.Wt}
\mathbf E\Big[\sup_{t\in[0,T]}|\mathbf W_t^N[f]|^2\Big] \leq  {C\|f\|_{\mathcal C^{1,\gamma_1}}^2}/{N}.
\end{equation}
 Let us finally deal with the term involving $\mathbf R_t^N[f]$.
One has  using \eqref{convexity inequality}:
$$\sup_{t\in[0,T]}|\mathbf R_t^N[f]|^2\leq \lfloor NT\rfloor\sum_{k=0}^{\lfloor NT\rfloor-1}|\mathbf R_k[f]|^2.$$
For $0\leq k\le \lfloor NT\rfloor-1$,  we have, from \eqref{def_R_k^N},
\begin{align*}
|\mathbf R_k^N[f]|^2&\leq\frac{C\|f\|_{\mathcal C^{2,\gamma_1}}^2}{N}\sum_{i=1}^N|\theta_{k+1}^i-\theta_k^i|^4(1+|\hat{\theta}_k^i|^{\gamma_1})^2\\
&\leq \frac{C\|f\|_{\mathcal C^{2,\gamma_1}}^2}{N}\sum_{i=1}^N|\theta_{k+1}^i-\theta_k^i|^4(1+|\theta_{k+1}^i|^{2\gamma_1}+|\theta_k^i|^{2\gamma_1}).
\end{align*}
Using \eqref{eq.diff},
\begin{align*}
|\theta_{k+1}^i-\theta_k^i|^4\leq C\Big[\frac{1}{N^4}+\frac{|\theta_k^i|^4}{N^4}+\frac{1}{N^4B}\sum_{\ell=1}^B|\mathfrak b(\mathsf Z_k^{i,\ell})|^4\Big].
\end{align*}
By Lemma \ref{lem:moment_param} and {\rm \textbf{A1}}, it then holds
$\mathbf E[|\theta_{k+1}^i-\theta_k^i|^4(1+|\theta_{k+1}^i|^{2\gamma_1}+|\theta_k^i|^{2\gamma_1})] \leq {C}/{N^4}$.
Hence, one deduces that
\begin{align}\label{eq.Rt}
\mathbf E[\sup_{t\in[0,T]}|\mathbf R_t^N[f]|^2]\leq C\| f\|_{\mathcal C^{2,\gamma_1}}^2 /N^2.
\end{align}
 This ends the proof of Lemma \ref{lem_cc_sob}.
\end{proof}

\begin{lemma}[Compact containment for $(\mu^N)_{N\ge1}$] \label{lem_cc_mu^N}
Assume  {\rm \textbf{A1}}$\to$$\mathbf{A5}$.  Let $0<\epsilon<\gamma_1-\gamma_0$. For every $T>0$,
\begin{equation}
\sup_{N\ge1}\mathbf E\Big[\sup_{t\in[0,T]}\int_{\mathbf R^{d+1}}|x|^{\gamma_0+\epsilon}\mu_t^N(\di x) \Big] <+\infty.
\end{equation}
\end{lemma}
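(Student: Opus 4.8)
The plan is to reduce the statement to a single-particle bound on the $p$-th moment of the \emph{running maximum} of a trajectory, with $p := \gamma_0+\epsilon$; note that $p>\gamma_0>\tfrac{d+1}2+1>1$, so both the convexity inequality \eqref{convexity inequality} and the moment hypotheses in \textbf{A1} are available. First I would unfold the definitions. Since $\mu_t^N=\nu_{\lfloor Nt\rfloor}^N$ by \eqref{empirical_distrib2}, the supremum over $t\in[0,T]$ is a maximum over the integer times $0\le k\le \lfloor NT\rfloor$, and
$$\sup_{t\in[0,T]}\int_{\mathbf R^{d+1}}|x|^{p}\mu_t^N(\di x)=\max_{0\le k\le \lfloor NT\rfloor}\frac1N\sum_{i=1}^N|\theta_k^i|^{p}\le \frac1N\sum_{i=1}^N\max_{0\le k\le \lfloor NT\rfloor}|\theta_k^i|^{p}.$$
Taking expectations and observing that the bound produced below is uniform in the particle index $i$, the claim reduces to showing $\sup_{N\ge1}\max_{1\le i\le N}\mathbf E\big[\max_{0\le k\le \lfloor NT\rfloor}|\theta_k^i|^{p}\big]<+\infty$.

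Next, for a fixed $i$, I would control the running maximum by the total variation of the trajectory. The right-hand side of \eqref{lem_moments_eq1} is nondecreasing in $k$, so setting $M_m:=\max_{0\le k\le m}|\theta_k^i|$ and inserting the increment bound \eqref{eq.diff} yields, for $0\le m\le \lfloor NT\rfloor$,
$$M_m\le |\theta_0^i|+\frac{Cm}{N}+\frac{C}{NB}\sum_{n=0}^{m-1}\sum_{\ell=1}^B\big(1+\mathfrak b(\mathsf Z_n^{i,\ell})\big)+\frac{C}{N}\sum_{n=0}^{m-1}M_n,$$
where I used $|\theta_n^i|\le M_n$. Raising to the power $p$ and applying the convexity inequality \eqref{convexity inequality} to the four summands, and then inside each sum, the crucial point is that the accumulated factors combine as $m^{p-1}/N^{p}\le T^{p-1}/N$ (because $m\le NT$), so the $1/N$ scaling needed for Grönwall survives the $p$-th power. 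Taking expectations, bounding $\mathbf E[|\theta_0^i|^{p}]$ by \textbf{A4} and $\mathbf E[(1+\mathfrak b(\mathsf Z_n^{i,\ell}))^{p}]=\langle (1+\mathfrak b)^{p},\gamma\rangle<+\infty$ by \textbf{A1}, I obtain for $u_m:=\mathbf E[M_m^{p}]$ an inequality of the form $u_m\le C_{T,p}\big(1+\frac1N\sum_{n=0}^{m-1}u_n\big)$ with $C_{T,p}$ independent of $N$ and of $i$.

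Then Lemma \ref{lem_suite_exponentielle} applies directly and gives $u_{\lfloor NT\rfloor}\le C_{T,p}e^{C_{T,p}T}$, uniformly in $N$ and $i$; combined with the first display this closes the argument.

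The main obstacle is precisely the interchange that fails: unlike in Lemma \ref{lem:moment_param}, one cannot pull $\max_k$ out of the expectation. The whole point is therefore to dominate $M_m$ \emph{pathwise} by $|\theta_0^i|$ plus the accumulated increments, using the monotonicity in \eqref{lem_moments_eq1}, and only afterwards take expectations. The delicate bookkeeping is to group the convexity inequality so that the $p$-th power of the term $\frac1N\sum_{n<m}M_n$ is controlled by $T^{p-1}\frac1N\sum_{n<m}M_n^{p}$ rather than by an expression with worse $N$-dependence; any looser grouping would spoil the uniformity in $N$ of the Grönwall constant. Apart from this, the estimate is a direct adaptation of the proof of Lemma \ref{lem:moment_param}.
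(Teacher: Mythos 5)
Your proof is correct, and it takes a genuinely different route from the paper's. The paper disposes of Lemma \ref{lem_cc_mu^N} in one line: it applies Lemma \ref{lem_cc_sob} to the test function $\theta\mapsto(1-\chi(\theta))|\theta|^{\gamma_0+\epsilon}$, which belongs to $\mathcal C^{2,\gamma_1}(\mathbf R^{d+1})$ precisely because $\epsilon<\gamma_1-\gamma_0$; Lemma \ref{lem_cc_sob} itself is proved through the pre-limit equation \eqref{eq.pre_limitz1zN}, bounding the supremum over $[0,T]$ of each term (drift, KL, martingale, remainders $\mathbf W$ and $\mathbf R$) separately, with the fixed-time moment bounds of Lemma \ref{lem:moment_param} as input. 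You bypass the pre-limit equation entirely and argue at the particle level: your key observation is that the right-hand side of the telescoping bound \eqref{lem_moments_eq1} is nondecreasing in $k$, so the running maximum $M_m=\max_{k\le m}|\theta_k^i|$ is dominated \emph{pathwise} by $|\theta_0^i|$ plus the accumulated increments from \eqref{eq.diff}; raising to the power $p=\gamma_0+\epsilon$, applying \eqref{convexity inequality} with the grouping that keeps $m^{p-1}/N^{p}\le T^{p-1}/N$, and invoking \textbf{A1}, \textbf{A4} and the discrete Gr\"onwall Lemma \ref{lem_suite_exponentielle} upgrades Lemma \ref{lem:moment_param} from fixed-time moments to moments of the running maximum, uniformly in $i$ and $N$ — exactly what is needed after the elementary exchange $\max_k\frac1N\sum_i\le\frac1N\sum_i\max_k$. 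The two routes buy different things: the paper's argument reuses machinery that is needed anyway (the estimates \eqref{eq.E-3}, \eqref{eq.E-4}, \eqref{eq.M1}, \eqref{eq.Wt}, \eqref{eq.Rt} reappear in Lemma \ref{lem_rc_cont_sob} and Proposition \ref{prop_conv_le_sob}) and yields the stronger $L^2$ statement $\sup_N\mathbf E[\sup_{t\in[0,T]}\langle f,\mu_t^N\rangle^2]<+\infty$ for \emph{every} $f\in\mathcal C^{2,\gamma_1}(\mathbf R^{d+1})$, whereas your argument is more elementary and self-contained (only the increment bound and the Gr\"onwall lemma) and produces as a by-product the per-particle estimate $\sup_{N}\max_{i}\mathbf E\big[\max_{0\le k\le\lfloor NT\rfloor}|\theta_k^i|^{p}\big]<+\infty$, which the paper never states. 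The only point to make explicit when writing this up is that the quantities $u_m=\mathbf E[M_m^p]$ are finite (so that Lemma \ref{lem_suite_exponentielle} applies); this follows by induction from your own recursion starting from $u_0\le C$ under \textbf{A4}, just as in the paper's use of the same lemma inside Lemma \ref{lem:moment_param}.
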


\begin{proof}
Apply  Lemma \ref{lem_cc_sob}   with   $f:\theta\mapsto(1-\chi)|\theta|^{\gamma_0+\epsilon}\in \mathcal C^{2,\gamma_1}(\mathbf R^{d+1})$.
\end{proof}

\begin{lemma}\label{lem_rc_cont_sob}
Assume  {\rm \textbf{A1}}$\to$$\mathbf{A5}$. Let  $T>0$ and $f\in \mathcal C^{2,\gamma_1}(\mathbf R^{d+1})$. Then,  there exists $C>0$ such that for all $\delta>0$ and $0\leq r<t\leq T$ such that $t-r\leq \delta$, one has for all $N\ge 1$,
$$\mathbf E\big[|\langle f,\mu_t^N\rangle -\langle f,\mu_r^N\rangle |^2\big]\leq C (\delta^2+\delta/N+ 1/N).$$
\end{lemma}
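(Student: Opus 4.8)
The plan is to start from the pre-limit identity \eqref{eq.pre_limitz1zN}, which now holds for any $f\in\mathcal C^{2,\gamma_1}(\mathbf R^{d+1})$, and to bound separately the $L^2$-norm of each of the seven contributions to $\langle f,\mu_t^N\rangle-\langle f,\mu_r^N\rangle$. Since $\langle f,\mu_0^N\rangle$ cancels, this difference equals $\mathbf A_{r,t}^N[f]+\mathbf K_{r,t}^N[f]+\mathbf C_{r,t}^{N,1}[f]+\mathbf C_{r,t}^{N,2}[f]+\big(\mathbf M_t^N[f]-\mathbf M_r^N[f]\big)+\big(\mathbf W_t^N[f]-\mathbf W_r^N[f]\big)+\big(\mathbf R_t^N[f]-\mathbf R_r^N[f]\big)$, where $\mathbf A$ is the loss drift restricted to $[r,t]$, $\mathbf K$ the KL drift on $[r,t]$, and $\mathbf C^{N,1},\mathbf C^{N,2}$ the two $1/N$-correction integrals. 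The elementary inequality $(\sum_{i=1}^7 a_i)^2\le 7\sum_i a_i^2$ then reduces the task to seven moment estimates. Throughout, the decisive inputs are the pointwise bounds \eqref{borne_phi-y}, \eqref{eq.Bntheta}, \eqref{eq.E1}, \eqref{eq.E2}, together with the uniform moment control of Lemma \ref{lem:moment_param}, which here plays the role that a.s. boundedness played in the compact case of Lemma \ref{lem_reg_cond_inR}.

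For the two drift terms I would use that $|\langle\phi(\cdot,\cdot,x)-y,\mu_s^N\otimes\gamma\rangle|\le C$ by \eqref{borne_phi-y}, while \eqref{eq.E1} together with $\langle\mathfrak b,\gamma\rangle<+\infty$ (from \textbf{A1}) gives $|\langle\nabla_\theta f\cdot\nabla_\theta\phi(\cdot,\cdot,x),\mu_s^N\otimes\gamma\rangle|\le C\|f\|_{\mathcal C^{1,\gamma_1}}(1+\langle|\cdot|^{\gamma_1},\mu_s^N\rangle)$; likewise \eqref{eq.E2} controls the KL integrand by $C\|f\|_{\mathcal C^{1,\gamma_1}}(1+\langle|\cdot|^{\gamma_1+1},\mu_s^N\rangle)$. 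Applying Cauchy--Schwarz in the time variable produces a factor $t-r\le\delta$, so that, for instance,
\begin{equation*}
\mathbf E\big[|\mathbf A_{r,t}^N[f]|^2\big]\le C(t-r)\int_r^t\mathbf E\big[(1+\langle|\cdot|^{\gamma_1},\mu_s^N\rangle)^2\big]\di s\le C\delta^2,
\end{equation*}
where the last step uses the convexity inequality \eqref{convexity inequality} to write $\langle|\cdot|^{\gamma_1},\mu_s^N\rangle^2\le\langle|\cdot|^{2\gamma_1},\mu_s^N\rangle$ and then Lemma \ref{lem:moment_param} to bound its expectation by a constant. The same computation handles the KL term (with exponent $2(\gamma_1+1)$), while the two $1/N$-corrections $\mathbf C^{N,1},\mathbf C^{N,2}$ are estimated identically and yield the even smaller contribution of order $\delta^2/N^2$.

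It remains to treat the three remainder terms, and this is where the $\delta/N$ and $1/N$ contributions enter. For the martingale part I would invoke the $L^2$-orthogonality of the increments, which follows from $\mathbf E[\mathbf M_k^N[f]\mid\mathcal F_k^N]=0$ exactly as in Lemma \ref{le.error}, so that
\begin{equation*}
\mathbf E\big[|\mathbf M_t^N[f]-\mathbf M_r^N[f]|^2\big]=\sum_{k=\lfloor Nr\rfloor}^{\lfloor Nt\rfloor-1}\mathbf E\big[|\mathbf M_k^N[f]|^2\big]\le(\lfloor Nt\rfloor-\lfloor Nr\rfloor)\frac{C}{N^2}\le(N\delta+1)\frac{C}{N^2},
\end{equation*}
using the per-step bound \eqref{eq.M1} and the counting estimate $\lfloor Nt\rfloor-\lfloor Nr\rfloor\le N(t-r)+1\le N\delta+1$; this is exactly $C(\delta/N+1/N^2)$. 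For the last two terms I would simply reuse the global estimates already established inside Lemma \ref{lem_cc_sob}, namely $\mathbf E[|\mathbf W_t^N[f]-\mathbf W_r^N[f]|^2]\le 4\,\mathbf E[\sup_{[0,T]}|\mathbf W^N[f]|^2]\le C/N$ by \eqref{eq.Wt}, and similarly $\mathbf E[|\mathbf R_t^N[f]-\mathbf R_r^N[f]|^2]\le C/N^2$ by \eqref{eq.Rt}. Summing the seven estimates gives the announced bound $C(\delta^2+\delta/N+1/N)$.

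I expect the only genuinely delicate point to be the martingale term: it is the sole source of the $\delta/N$ contribution, and obtaining it in the sharp additive form $\delta/N+1/N^2$ rather than a crude $1/N$ requires both the orthogonality of the increments and the $O(1/N^2)$ per-step variance bound \eqref{eq.M1}, combined with the discretisation count $\lfloor Nt\rfloor-\lfloor Nr\rfloor\le N\delta+1$ that intertwines the time increment with the scale $1/N$. Everything else amounts to propagating the a priori moments of Lemma \ref{lem:moment_param} through the pre-limit identity, which is routine.
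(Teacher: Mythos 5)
Your proof is correct and follows essentially the same route as the paper: both start from the pre-limit equation \eqref{eq.pre_limitz1zN}, bound the drift terms via Jensen/Cauchy--Schwarz in time together with the moment bounds of Lemma \ref{lem:moment_param} (the paper cites these directly as \eqref{eq.E-3}--\eqref{eq.E-4}), treat the martingale increment by orthogonality, the per-step bound \eqref{eq.M1} and the count $\lfloor Nt\rfloor-\lfloor Nr\rfloor\le N\delta+1$, and reuse \eqref{eq.Wt} and \eqref{eq.Rt} for the remaining error terms. Your finer seven-term decomposition and rederivation of the drift estimates are only presentational differences.
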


\begin{proof}
Using~\eqref{eq.pre_limitz1zN}, Jensen's inequality,   \eqref{borne_phi-y},  \eqref{eq.E-3}, and \eqref{eq.E-4}, one has for $f\in \mathcal C^{2,\gamma_1}(\mathbf R^{d+1})$,
\begin{align}
\nonumber
\mathbf E\big[|\langle f,\mu_t^N\rangle -\langle f,\mu_r^N\rangle |^2\big]&\leq C\Big[(t-r)^2(1+1/N^2)\|f\|_{\mathcal C^{1,\gamma_1}}^2 +\mathbf E\big[ \big| \sum_{k=\lfloor Nr\rfloor}^{\lfloor Nt\rfloor-1} \mathbf M_k^{N}[f]  \big |^2\big]\\
\label{eq.est-tr}
&\quad +\mathbf E\big[\left| \mathbf W_t^N[f] - \mathbf W_r^N[f] \right|^2\big]+\mathbf E\big[\left| \mathbf R_t^N[f] - \mathbf R_r^N[f] \right|^2\big].
\end{align}
We also have with the same arguments as  those used just before \eqref{bound M_k}
$$\mathbf E\big[ \big| \sum_{k=\lfloor Nr\rfloor}^{\lfloor Nt\rfloor-1} \mathbf M_k^{N}[f]  \big |^2\big]=\sum_{k=\lfloor Nr\rfloor}^{\lfloor Nt\rfloor-1} \mathbf E[|\mathbf M_k^{N}[f]|^2].
$$
Using in addition    \eqref{eq.M1}, one has
 $\mathbf E\big[ \big| \sum_{k=\lfloor Nr\rfloor}^{\lfloor Nt\rfloor-1} \mathbf M_k^{N}[f]  \big |^2\big]\le C (N\delta+1) \|f\|_{\mathcal C^{1,\gamma_1}}^2/ N^2$. Note that with this argument, we also deduce that
 \begin{equation}\label{eq.Mkk}
 \mathbf E[ | \mathbf M_t^N[f]|^2]\le C\|f\|_{\mathcal C^{1,\gamma_1}}^2/ N.
 \end{equation}
  On the other hand, by \eqref{eq.Wt} and \eqref{eq.Rt}, one has
 $$\mathbf E\big[\left| \mathbf W_t^N[f] - \mathbf W_r^N[f] \right|^2\big]\le C   \|f\|^2_{\mathcal C^{1,\gamma_1}}/ N\text{ and } \mathbf E\big[\left| \mathbf R_t^N[f] - \mathbf R_r^N[f] \right|^2\big]\le C   \|f\|_{\mathcal C^{2,\gamma_1}}^2/ N^2.$$
One then plugs all  the previous estimates in~\eqref{eq.est-tr} to deduce  the result  of Lemma \ref{lem_rc_cont_sob}.
\end{proof}

We are now in position to prove Proposition \ref{prop_rc_inP}.
\begin{proof}[Proof of Proposition \ref{prop_rc_inP}]
The proof consists in applying \cite[Theorem 4.6]{jakubowski1986skorokhod} with $E=  \mathcal P_{\gamma_0}(\mathbf R^{d+1})$  and $\mathbb F=\{\mathsf H_f, f\in \mathcal C^\infty_c(\mathbf R^{d+1})\}$ where
$$\mathsf  H_f: \nu \in\mathcal P_{\gamma_0}(\mathbf R^{d+1})\mapsto \langle f, \nu \rangle.$$
The set  $\mathbb F$ on $\mathcal P_{\gamma_0}(\mathbf R^{d+1})$ satisfies Conditions~\cite[(3.1) and (3.2) in Theorem 3.1]{jakubowski1986skorokhod}. Condition (4.8) there follows from Proposition \ref{prop.compact_wasserstein}, Lemma \ref{lem_cc_mu^N}, and Markov's inequality.
Let us now show~\cite[Condition (4.9)]{jakubowski1986skorokhod} is verified, i.e. that for all $f\in \mathcal C^\infty_c(\mathbf R^{d+1})$, the family $(\langle f,\mu^N\rangle)_{N\ge1}$ is relatively compact in $\mathcal D(\mathbf R_+,\mathbf R)$.
 To do this, it suffices to use  Lemma~\ref{lem_rc_cont_sob} and    \cite[Proposition~A.1]{descours2022law} (with $\mathcal H_1=\mathcal H_2=\mathbf R$ there).
 In conclusion, according to \cite[Theorem 4.6]{jakubowski1986skorokhod}, the sequence $(\mu^N)_{N\ge1}\subset \mathcal D(\mathbf R_+,\mathcal P_{\gamma_0}(\mathbf R^{d+1}))$ is relatively compact.
\end{proof}


\subsubsection{Limit points  satisfy the limit equation  \eqref{eq.P2}}


For $f\in \mathcal C^{1,\gamma_0-1}(\mathbf R^{d+1})$
and $t\ge 0$,
we introduce for $\mathsf m\in \mathcal D(\mathbf R_+,\mathcal P_{\gamma_0}(\mathbf R^{d+1}))$,
\begin{align}
\boldsymbol \Phi_t[f]:\mathsf m\mapsto &\Big|\langle f,\mathsf m_t\rangle-\langle f,\mu_0\rangle\nonumber\\
& +\eta\int_{0}^t\int_{\mathsf X\times\mathsf Y}\langle\phi(\cdot,\cdot,x)-y,\mathsf m_s\otimes\gamma\rangle\langle\nabla_\theta f\cdot\nabla_\theta \phi(\cdot,\cdot,x),\mathsf m_s\otimes\gamma\rangle  \pi(\di x,\di y)\di s\nonumber\\
\label{d-lambda2}
&+  \eta\int_0^t \big \langle\nabla_\theta f\cdot \nabla_\theta \mathscr D_{{\rm KL}}(q_{\,_\cdot }^1|P_0^1),\mathsf m_s\big \rangle\di s  \Big|.
\end{align}
Note that $\boldsymbol \Phi_t[f]$ is the function  $\boldsymbol \Lambda_t[f]$ previously defined in \eqref{d-lambda}   for test functions  $f\in\mathcal C^{1,\gamma_0-1}(\mathbf R^{d+1})$ and for $\mathsf m \in \mathcal D(\mathbf R_+,\mathcal P_{\gamma_0}(\mathbf R^{d+1}))$.


\begin{lemma}\label{le.CSobo}
Assume  {\rm \textbf{A1}}$\to$$\mathbf{A5}$. Let $f\in\mathcal C^{1,\gamma_0-1}(\mathbf R^{d+1})$. Then
$\boldsymbol  \Phi_t[f]$ is well defined. In addition, if a sequence $(\mathsf m^N)_{N\ge 1}$ converges to $\mathsf m$ in $\mathcal D(\mathbf R_+,\mathcal P_{\gamma_0}(\mathbf R^{d+1}))$, then,  for all continuity point $t\ge 0$ of $\mathsf m$, we have $\boldsymbol \Phi_t[f](\mathsf m^N)\to \boldsymbol \Phi_t[f](\mathsf m)$.
\end{lemma}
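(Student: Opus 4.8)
The plan is to mimic the proof of Lemma~\ref{lem_cont_points}, the crucial new difficulty being that the measures $\mathsf m^N_s$ are no longer carried by a common compact set, so that weak convergence alone cannot control the (now unbounded) integrands. First I would settle well-definedness. For $f\in\mathcal C^{1,\gamma_0-1}(\mathbf R^{d+1})$ one has $|f(\theta)|+|\nabla_\theta f(\theta)|\le C(1+|\theta|^{\gamma_0-1})$, and I would combine this with the bounds already established in the excerpt: $|\phi(\theta,z,x)-y|\le C$ by \eqref{borne_phi-y}, $\langle|\nabla_\theta\phi(\theta,\cdot,x)|,\gamma\rangle\le C$ by \eqref{borne nablaphi}, and $|\nabla_\theta\mathscr D_{\rm KL}(q_\theta^1|P_0^1)|\le c(1+|\theta|)$ by \eqref{eq.kl_2}. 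After integrating out $z$ against $\gamma$, the three integrands in \eqref{d-lambda2} are controlled by continuous functions of $\theta$ of growth order $\gamma_0-1$ (the two loss terms) and $\gamma_0$ (the KL term). Since $\mathsf m_s\in\mathcal P_{\gamma_0}(\mathbf R^{d+1})$ and $s\mapsto\mathsf m_s$ is c\`adl\`ag, the map $s\mapsto\int|\theta|^{\gamma_0}\mathsf m_s(\di\theta)$ is bounded on $[0,t]$, so every integral in \eqref{d-lambda2} is finite; this is a direct reprise of the growth estimates of Lemma~\ref{lem:moment_param}.

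For the continuity statement, fix a continuity point $t$ of $\mathsf m$ and introduce the auxiliary functions $H_{x,y}(\theta)=\langle\phi(\theta,\cdot,x)-y,\gamma\rangle$, $G_x(\theta)=\nabla_\theta f(\theta)\cdot\langle\nabla_\theta\phi(\theta,\cdot,x),\gamma\rangle$ and $K(\theta)=\nabla_\theta f(\theta)\cdot\nabla_\theta\mathscr D_{\rm KL}(q_\theta^1|P_0^1)$. By \textbf{A1}, \textbf{A3}, dominated convergence and the explicit formula \eqref{eq.kl_1}, these are continuous, with $H_{x,y}$ bounded, $|G_x(\theta)|\le C(1+|\theta|^{\gamma_0-1})$ and $|K(\theta)|\le C(1+|\theta|^{\gamma_0})$. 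At each continuity point $s$ of $\mathsf m$ one has $\mathsf m^N_s\to\mathsf m_s$ in $\mathcal P_{\gamma_0}(\mathbf R^{d+1})$, and since convergence in $\mathsf W_{\gamma_0}$ is equivalent to weak convergence together with uniform integrability of $|\theta|^{\gamma_0}$ (see \cite[Theorem~6.9]{villani2009optimal}), it yields convergence of $\langle g,\mathsf m_s^N\rangle\to\langle g,\mathsf m_s\rangle$ for every continuous $g$ with $|g|\le C(1+|\theta|^{\gamma_0})$. Applying this to $H_{x,y}$, $G_x$ and $K$, and using that the jump set of $\mathsf m$ is at most countable, I obtain pointwise convergence of all integrands in \eqref{d-lambda2} for a.e.\ $(s,x,y)\in[0,t]\times\mathsf X\times\mathsf Y$.

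It then remains to dominate in order to pass the limit through $\int_0^t\int_{\mathsf X\times\mathsf Y}\!\cdots$, and here lies the main obstacle: I must produce a uniform moment bound $\sup_N\sup_{s\in[0,t]}\int_{\mathbf R^{d+1}}|\theta|^{\gamma_0}\mathsf m_s^N(\di\theta)<+\infty$, which in Lemma~\ref{lem_cont_points} came for free from compactness. I would derive it from the Skorohod convergence itself: pick $T>t$ and time-changes $\lambda_N$ with $\lambda_N(0)=0$, $\sup_{s\le T}|\lambda_N(s)-s|\to0$ and $\sup_{s\le T}\mathsf W_{\gamma_0}(\mathsf m^N_{\lambda_N(s)},\mathsf m_s)\to0$ (see \cite[p.~124]{billingsley2013convergence}). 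Using the identity $\mathsf W_{\gamma_0}(\nu,\delta_0)^{\gamma_0}=\int|\theta|^{\gamma_0}\nu(\di\theta)$, the triangle inequality, and the boundedness on $[0,T]$ of the c\`adl\`ag map $s\mapsto\mathsf W_{\gamma_0}(\mathsf m_s,\delta_0)$, I bound $\sup_{s\le T}\mathsf W_{\gamma_0}(\mathsf m^N_{\lambda_N(s)},\delta_0)$ uniformly in large $N$; the bijectivity of $\lambda_N$ and $\lambda_N(T)\to T>t$ then transfer this bound to the original time scale on $[0,t]$ (the finitely many remaining $N$ are handled by c\`adl\`ag-ness of each $\mathsf m^N$).

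With this bound in hand the integrands are uniformly dominated: $|\langle H_{x,y},\mathsf m_s^N\rangle|\le C$, while $|\langle G_x,\mathsf m_s^N\rangle|$ and $|\langle K,\mathsf m_s^N\rangle|$ are both at most $C(1+\sup_{N,s\le t}\langle|\cdot|^{\gamma_0},\mathsf m_s^N\rangle)$, an $(s,x,y)$-independent constant; since $[0,t]\times\mathsf X\times\mathsf Y$ carries the finite measure $\di s\otimes\pi$, the dominated convergence theorem lets me pass to the limit in both the loss integral and the KL integral. Finally the boundary term converges by the same $\mathsf W_{\gamma_0}$-argument applied at the single continuity point $t$, namely $\langle f,\mathsf m_t^N\rangle\to\langle f,\mathsf m_t\rangle$ because $f$ has growth order $\gamma_0-1$. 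Collecting the three limits gives $\boldsymbol\Phi_t[f](\mathsf m^N)\to\boldsymbol\Phi_t[f](\mathsf m)$, which completes the argument.
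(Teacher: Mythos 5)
Your proof is correct and follows essentially the same route as the paper's: the same growth and continuity bounds on $\theta\mapsto\langle\phi(\theta,\cdot,x)-y,\gamma\rangle$, $\theta\mapsto\nabla_\theta f(\theta)\cdot\langle\nabla_\theta\phi(\theta,\cdot,x),\gamma\rangle$ and $\theta\mapsto\nabla_\theta f(\theta)\cdot\nabla_\theta\mathscr D_{{\rm KL}}(q_\theta^1|P_0^1)$ (of orders $\gamma_0-1$ and $\gamma_0$) give well-definedness via the local boundedness of $s\mapsto\langle 1+|\cdot|^{\gamma_0},\mathsf m_s\rangle$, and continuity is obtained by adapting the argument of Lemma~\ref{lem_cont_points} to $\mathsf W_{\gamma_0}$-convergence at continuity points of $\mathsf m$. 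The only difference is one of completeness: where the paper simply invokes ``similar arguments as those used in the proof of Lemma~\ref{lem_cont_points}'', you make explicit the uniform-in-$N$ moment bound $\sup_N\sup_{s\le t}\langle|\cdot|^{\gamma_0},\mathsf m_s^N\rangle<+\infty$ (via the Skorohod time-change characterization), which is precisely the ingredient replacing the compactness of $\Theta$ in the dominated-convergence step.
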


\begin{proof}
Using {\rm \textbf{A1}}, and because $ \mathsf Y$ is bounded and the function $\phi$ is bounded,  $\mathscr  G_1^{x,y}: \theta \mapsto \langle \phi(\theta,\cdot,x)-y,\gamma\rangle \in  \mathcal C^\infty_b(\mathbf{R}^{d+1})$. In addition, for all  multi-index $\alpha \in \mathbf N^{d+1}$, there exists $C>0$, for all $x,y\in \mathsf X \times \mathsf Y$ and all $\theta\in \mathbf R^{d+1}$, $|\partial_\alpha \mathscr G_1^{x,y}(\theta)|\le C$. The same holds for the function
$\mathscr G_2^{x}: \theta\in \mathbf R^{d+1}\mapsto \langle\nabla_\theta \phi(\theta,\cdot,x), \gamma\rangle$.
Consequently, $\theta\mapsto \nabla_\theta f(\theta)\cdot \mathscr G_2^{x}(\theta)\in\mathcal C^{0,\gamma_0-1}(\mathbf R^{d+1})\hookrightarrow \mathcal C^{0,\gamma_0}(\mathbf R^{d+1})$. Then, there exists $C>0$ independent of $(x,y)\in \mathsf X \times \mathsf Y$ and $s\in [0,t]$ such that
$$
|\langle \mathscr G_1^{x,y},\mathsf m_s\rangle|\le  C,$$
 and
$$|\langle \nabla_\theta f\cdot \mathscr G_2^x,\mathsf m_s\rangle |\le C \Vert   f \Vert_{\mathcal C^{1,\gamma_0-1}} \langle 1+|.|^{\gamma_0}, \mathsf m_s\rangle.$$
Finally, the function $\theta \mapsto \nabla_\theta \mathscr D_{{\rm KL}}(q_{\theta }^1|P_0^1)$ is smooth (see~\eqref{eq.kl_1}) and~\eqref{eq.kl_2} extends to all its derivatives, i.e. for all  multi-index $\alpha \in \mathbf N^{d+1}$,
there exists $c>0$, for all $\theta\in \mathbf R^{d+1}$,
$$
|\partial_\alpha \nabla_{\theta}\mathscr D_{{\rm KL}}(q_{\theta}^1|P_0^1)|\le c(1+|\theta|).
$$
Thus, $ \nabla_\theta f\cdot \nabla_\theta \mathscr D_{{\rm KL}}(q_{\theta }^1|P_0^1)\in \mathcal C^{0,\gamma_0}(\mathbf R^{d+1})$ and for some $C>0$ independent of $s\in [0,t]$
$$
|\langle\nabla_\theta f\cdot \nabla_\theta \mathscr D_{{\rm KL}}(q_{\,_\cdot }^1|P_0^1),\mathsf m_s\big \rangle|\le C \Vert   f \Vert_{\mathcal C^{1,\gamma_0-1}} \langle 1+|.|^{\gamma_0}, \mathsf m_s\rangle.
$$
Since in addition $\sup_{s\in [0,t]}\langle 1+|.|^{\gamma_0}, \mathsf m_s\rangle<+\infty$ (since $s\mapsto \langle 1+|.|^{\gamma_0}, \mathsf m_s\rangle \in \mathcal D(\mathbf R_+,\mathbf R$)),  $\boldsymbol  \Phi_t[f]$ is well defined. To prove the continuity property of $\boldsymbol  \Phi_t[f]$ it then suffices to use the previous upper bounds together  similar  arguments as those used in the proof of Lemma \ref{lem_cont_points} (see also \cite{descours2022law}).
\end{proof}

\begin{proposition}\label{prop_conv_le_sob}
Assume  {\rm \textbf{A1}}$\to$$\mathbf{A5}$. Let $\mu^*$ be a limit point of $(\mu^N)_{N\ge1}$ in $\mathcal D(\mathbf R_+,\mathcal P_{\gamma_0}(\mathbf R^{d+1}))$. Then, $\mu^*$ satisfies a.s. Equation  \eqref{eq.P2}.
\end{proposition}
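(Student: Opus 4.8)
The plan is to mirror the argument of Proposition~\ref{p-cLE}, but working in $\mathcal D(\mathbf R_+,\mathcal P_{\gamma_0}(\mathbf R^{d+1}))$ and coping with the fact that, contrary to the idealized case, we do \emph{not} yet know that limit points are time-continuous. Up to extracting a subsequence, assume $\mu^N\xrightarrow{\mathscr D}\mu^*$. First I would fix a test function $f\in\mathcal C^\infty_c(\mathbf R^{d+1})$, so that simultaneously the pre-limit equation~\eqref{eq.pre_limitz1zN} applies and $\boldsymbol\Phi_t[f]$ (defined in~\eqref{d-lambda2}) is well defined by Lemma~\ref{le.CSobo}. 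Subtracting the two leading drift terms from~\eqref{eq.pre_limitz1zN} and replacing $\langle f,\mu_0^N\rangle$ by $\langle f,\mu_0\rangle$ gives the identity
\begin{equation*}
\boldsymbol\Phi_t[f](\mu^N)=\Big|\langle f,\mu_0^N\rangle-\langle f,\mu_0\rangle+\tfrac{\eta}{N}(\cdots)+\mathbf M_t^{N}[f]+\mathbf W_t^{N}[f]+\mathbf R_t^N[f]\Big|,
\end{equation*}
where $\tfrac{\eta}{N}(\cdots)$ denotes the third and fourth terms of~\eqref{eq.pre_limitz1zN}. Taking expectations and bounding each contribution with the estimates already in hand — the two $\tfrac{\eta}{N}$ integrals by~\eqref{borne_phi-y} and~\eqref{eq.E-3}--\eqref{eq.E-4}, the martingale term by~\eqref{eq.Mkk}, the remainders by~\eqref{eq.Wt} and~\eqref{eq.Rt}, and $\mathbf E|\langle f,\mu_0^N\rangle-\langle f,\mu_0\rangle|\le C/\sqrt N$ by the usual law of large numbers under \textbf{A4} — yields $\mathbf E[\boldsymbol\Phi_t[f](\mu^N)]\to 0$, hence $\boldsymbol\Phi_t[f](\mu^N)\xrightarrow{\mathscr D}0$ for every fixed $t$.

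The main obstacle, and the point where the proof genuinely departs from the idealized one, is that I cannot invoke continuity of $\boldsymbol\Phi_t[f]$ at $\mu^*$ for \emph{all} $t$: Lemma~\ref{le.CSobo} only grants continuity at continuity points of $\mu^*$, and here $\mu^*$ need not belong to $\mathcal C(\mathbf R_+,\mathcal P_{\gamma_0})$. To get around this I would use that every c\`adl\`ag path has at most countably many jumps, so by Tonelli $\int_0^\infty\mathbf P(t\in\mathsf{Disc}(\mu^*))\,\mathrm dt=\mathbf E[\mathrm{Leb}(\mathsf{Disc}(\mu^*))]=0$; hence for Lebesgue-a.e.\ $t$ the time $t$ is a.s.\ a continuity point of $\mu^*$. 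For each such $t$, Lemma~\ref{le.CSobo} and the continuous mapping theorem give $\boldsymbol\Phi_t[f](\mu^N)\xrightarrow{\mathscr D}\boldsymbol\Phi_t[f](\mu^*)$; comparing with $\boldsymbol\Phi_t[f](\mu^N)\xrightarrow{\mathscr D}0$ and invoking uniqueness of the distributional limit gives $\boldsymbol\Phi_t[f](\mu^*)=0$ a.s., for a.e.\ $t$.

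It then remains to upgrade this to all $t$ and all $f$. Since $t\mapsto\boldsymbol\Phi_t[f](\mu^*)$ is right-continuous (the integrals are continuous in $t$ and $t\mapsto\langle f,\mu_t^*\rangle$ is c\`adl\`ag) and vanishes on the dense set of good times, it vanishes identically; intersecting over a countable set of good rational times makes this hold a.s.\ simultaneously for all $t$, for the fixed $f$. Finally I would choose a countable family $\mathscr F_0\subset\mathcal C^\infty_c(\mathbf R^{d+1})$ that is $\|\cdot\|_{\mathcal C^{1,\gamma_0-1}}$-dense in $\mathcal C^\infty_b(\mathbf R^{d+1})$ — possible because $\gamma_0-1>0$ makes the cutoffs $f\chi_R$ converge to $f$ in that norm — run the previous steps simultaneously over $\mathscr F_0$, and extend to every $f\in\mathcal C^\infty_b$ using that $g\mapsto\boldsymbol\Phi_t[g](\mu^*)$ is Lipschitz for $\|\cdot\|_{\mathcal C^{1,\gamma_0-1}}$, with the a.s.\ finite constant $\sup_{s\le t}\langle 1+|\cdot|^{\gamma_0},\mu_s^*\rangle$ furnished by the proof of Lemma~\ref{le.CSobo}. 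This establishes~\eqref{eq.P2} a.s.\ for all $t\ge 0$ and all $f\in\mathcal C^\infty_b(\mathbf R^{d+1})$, completing the proof.

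I expect the delicate step to be the second paragraph: reconciling the $L^2$ control of the error terms (which lives on the prelimit, fully supported sequence) with the passage to a limit point that may jump, which is exactly why the Fubini/continuity-point device replaces the global time-continuity used in the idealized setting.
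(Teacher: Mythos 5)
Your proposal is correct and follows essentially the same route as the paper's proof: the pre-limit equation \eqref{eq.pre_limitz1zN} together with the estimates \eqref{eq.E-3}, \eqref{eq.E-4}, \eqref{eq.Mkk}, \eqref{eq.Wt}, \eqref{eq.Rt} gives $\mathbf E[\boldsymbol\Phi_t[f](\mu^N)]\to0$; the continuous mapping theorem is applied at times where $\mu^*$ is a.s.\ continuous; and the conclusion is upgraded to all $t$ by right-continuity and to all $f$ by density. Your two deviations are matters of implementation. For the good times, you use a Fubini argument (Lebesgue-a.e.\ $t$ is a.s.\ a continuity point of $\mu^*$), whereas the paper cites \cite[Lemma 7.7 in Chapter 3]{ethier2009markov}, which says that the complement of $\mathcal C(\mu^*)=\{t\ge0:\mathbf P(\mu^*_{t^-}=\mu^*_t)=1\}$ is at most countable; either statement suffices, since all that is used is a dense set of good times. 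For the density in $f$, the paper routes through the separable Hilbert space $\mathcal H^{L_0,\gamma_0-1}(\mathbf R^{d+1})$, using the embedding $\mathcal H^{L_0,\gamma_0-1}(\mathbf R^{d+1})\hookrightarrow\mathcal C_0^{1,\gamma_0-1}(\mathbf R^{d+1})$ for the continuity of $f\mapsto\boldsymbol\Phi_s[f](\mathsf m)$ and the inclusion $\mathcal C_b^\infty(\mathbf R^{d+1})\subset\mathcal H^{L_0,\gamma_0-1}(\mathbf R^{d+1})$ (valid since $2(\gamma_0-1)>d+1$); you instead prove density of cutoffs in the $\|\cdot\|_{\mathcal C^{1,\gamma_0-1}}$ norm and exploit the Lipschitz dependence of $\boldsymbol\Phi_t[\cdot](\mu^*)$ on that norm. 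Your variant is more elementary and equally valid; the paper's choice buys the countable dense family for free from separability of the Hilbert space. You are also more careful than the paper on one point: the term $\mathbf E\big|\langle f,\mu_0^N\rangle-\langle f,\mu_0\rangle\big|\le C/\sqrt N$, needed because $\boldsymbol\Phi_t[f]$ is centered at $\mu_0$ while \eqref{eq.pre_limitz1zN} is centered at $\mu_0^N$, is indeed required and is left implicit in the paper.

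One phrase must be repaired: ``intersecting over a countable set of good \emph{rational} times'' does not work, because your good set is only co-null (and the paper's only co-countable), so it may contain no rational at all. What is needed is a countable subset of the good times that is dense in $\mathbf R_+$; this exists because the good set has full Lebesgue measure (hence is dense in $\mathbf R_+$) and, as a subset of a separable metric space, admits a countable dense subset. This is precisely the claim the paper proves when it constructs the set $\mathcal T_{\mu^*}$. With that one-line substitution, your right-continuity argument closes the proof exactly as the paper's does.
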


\begin{proof}
Let us consider $f\in \mathcal C_c^\infty(\mathbf R^{d+1})$ and $\mu^*$ be a limit point of $(\mu^N)_{N\ge1}$ in $\mathcal D(\mathbf R_+,\mathcal P_{\gamma_0}(\mathbf R^{d+1}))$. Recall that by~\cite[lemma 7.7 in Chapter 3]{ethier2009markov}, the complementary of the set
$$\mathcal C({\mu^*})=\{t\ge 0, \, \mathbf P(\mu^*_{t^-}= \mu^*_t)=1\}$$
is at most countable. Let $t_*\in \mathcal C({\mu^*})$. Then, by Lemma \ref{le.CSobo}, one has that   $\mathbf P(\mu^*\in\mathsf D(\boldsymbol \Phi_{t_*}[f]))=0$. Thus, by the continuous mapping theorem, it holds
$$\boldsymbol \Phi_{t_*}[f](\mu^N)\xrightarrow{\mathscr D}\boldsymbol \Phi_{t_*}[f](\mu^*).$$
On the other hand, using \eqref{eq.pre_limitz1zN} and the estimates \eqref{eq.Rt}, \eqref{eq.Wt}, \eqref{eq.Mkk}, \eqref{eq.E-3}, and \eqref{eq.E-4}, it holds
\begin{equation*}
\lim_{N\to\infty}\mathbf E[\boldsymbol  \Phi_{t_*}[f](\mu^N)]=0.
\end{equation*}
Consequently, for all $f\in \mathcal C_c^\infty(\mathbf R^{d+1})$ and $t_*\in \mathcal C({\mu^*})$, it holds a.s. $\boldsymbol  \Phi_{t_*}[f](\mu^*)=0$. On the other hand,  for all $\psi\in \mathcal C_c^\infty(\mathbf R^{d+1})$, $\mathsf m\in \mathcal D(\mathbf R_+,\mathcal P_{\gamma_0}(\mathbf R^{d+1}))$, and $s\ge 0$, the mappings
  $$t\ge 0\mapsto \boldsymbol \Phi_{t}[\psi ](\mathsf m)$$ is right continuous, and
$$f\in \mathcal H^{L_0,\gamma_0-1}(\mathbf R^{d+1})\mapsto \boldsymbol \Phi_{s}[f](\mathsf m)$$
 is continuous (because $\mathcal H^{L_0,\gamma_0-1}(\mathbf R^{d+1})\hookrightarrow \mathcal C_0^{1,\gamma_0-1}(\mathbf R^{d+1})$).
In addition, $\mathcal H^{L_0,\gamma_0-1}(\mathbf R^{d+1})$  admits a dense and countable subset of elements in $\mathcal C_c^\infty(\mathbf R^{d+1})$. Moreover, there exists a countable subset $\mathcal T_{\mu^*}$ of
$\mathcal C({\mu^*})$ such that for all $t\ge 0$ and $\epsilon>0$, there exists $s\in \mathcal T_{\mu^*}$,  $s\in [t,t+\epsilon]$.  We prove this claim. Since $\mathbb R_+$ is a metric space, $\mathcal C({\mu^*})$  is   separable  and thus admits a dense subset $\mathcal O_{\mu^*}$. Since $[t+\epsilon/4,t+3\epsilon/4]\cap \mathcal C({\mu^*})\neq \emptyset$, there exists $u\in [t+\epsilon/4,t+3\epsilon/4]\cap \mathcal C({\mu^*})$. Consider now $s\in \mathcal O_{\mu^*}$ such that $|s-u|\le \epsilon/4$. It then holds $t\le s\le t+ \epsilon$, proving the claim with $\mathcal T_{\mu^*}=\mathcal O_{\mu^*}$.

Hence, we have with a classical argument that a.s. for all $f\in \mathcal H^{L_0,\gamma_0-1}(\mathbf R^{d+1})$ and $t\ge 0$,  $\boldsymbol  \Lambda_{t}[f](\mu^*)=0$. Note also that $\mathcal C^\infty_b(\mathbf R^{d+1})\subset \mathcal H^{L_0,\gamma_0-1}(\mathbf R^{d+1})$ since $2\gamma_0>d+1$. This ends the proof of the proposition.
\end{proof}

\subsection{Uniqueness of the limit equation and end of the proof of Theorem \ref{thm.z1zN}}

 In this section, we prove that there is a unique solution to \eqref{eq.P2} in $\mathcal C(\mathbf R_+,\mathcal P_{1}(\mathbf R^{d+1}))$. To this end, we first need to prove that every limit points of $(\mu^N)_{N\ge 1}$ a.s. belongs to $\mathcal C(\mathbf R_+,\mathcal P_{1}(\mathbf R^{d+1}))$.

\subsubsection{Limit points belong to $\mathcal C(\mathbf R_+,\mathcal P_{1}(\mathbf R^{d+1}))$}

\begin{proposition}\label{p-limit in P}
Assume  {\rm \textbf{A1}}$\to$$\mathbf{A5}$. Let $\mu^*\in \mathcal D(\mathbf R_+,\mathcal P_{\gamma_0}(\mathbf R^{d+1}))$ be a limit point of $(\mu^N)_{N\ge 1}$ in $ \mathcal D(\mathbf R_+,\mathcal P_{\gamma_0}(\mathbf R^{d+1}))$. Then, a.s. $\mu^*\in\mathcal C(\mathbf R_+,\mathcal P_{1}(\mathbf R^{d+1}))$.
\end{proposition}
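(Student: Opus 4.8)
The plan is to first observe that, since $\mathsf W_1\le \mathsf W_{\gamma_0}$ for $\gamma_0\ge 1$, convergence of a subsequence of $(\mu^N)_{N\ge1}$ to $\mu^*$ in $\mathcal D(\mathbf R_+,\mathcal P_{\gamma_0}(\mathbf R^{d+1}))$ also holds in $\mathcal D(\mathbf R_+,\mathcal P_1(\mathbf R^{d+1}))$ (the same Skorohod time-change functions work, as $\sup_t\mathsf W_1\le\sup_t\mathsf W_{\gamma_0}\to 0$). It therefore suffices to work in $\mathcal P_1$ and to show that the limit has no jumps. The c\`adl\`ag path $t\mapsto\mu_t^N$ is piecewise constant and jumps only at times $t=(k+1)/N$, where it passes from $\nu_k^N$ to $\nu_{k+1}^N$; hence its maximal jump on $[0,T]$ in the $\mathsf W_1$-metric equals $\max_{0\le k\le \lfloor NT\rfloor-1}\mathsf W_1(\nu_k^N,\nu_{k+1}^N)$. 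The core of the argument is to prove that this quantity tends to $0$ in probability for every $T>0$, after which a lower-semicontinuity argument upgrades distributional continuity of the limit to almost-sure continuity.

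For the jump estimate, I would use the index-preserving coupling $\frac1N\sum_i\delta_{(\theta_k^i,\theta_{k+1}^i)}$ (equivalently, the duality formula \eqref{Kantorovitch Rubinstein}) to obtain $\mathsf W_1(\nu_k^N,\nu_{k+1}^N)\le \frac1N\sum_{i=1}^N|\theta_{k+1}^i-\theta_k^i|$. Plugging in the one-step bound \eqref{eq.diff}, applying the convexity inequality \eqref{convexity inequality} with $p=2$, and then taking expectations, the moment control of Lemma \ref{lem:moment_param} (with $p=2$) together with $\langle \mathfrak b^2,\gamma\rangle<+\infty$ from \textbf{A1} yields $\mathbf E\big[\mathsf W_1(\nu_k^N,\nu_{k+1}^N)^2\big]\le C/N^2$ uniformly in $k\le \lfloor NT\rfloor$. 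Bounding the maximum by the sum over the (at most $\lfloor NT\rfloor$) jump times then gives
\[
\mathbf E\Big[\max_{0\le k\le \lfloor NT\rfloor-1}\mathsf W_1(\nu_k^N,\nu_{k+1}^N)^2\Big]\le \sum_{k=0}^{\lfloor NT\rfloor-1}\mathbf E\big[\mathsf W_1(\nu_k^N,\nu_{k+1}^N)^2\big]\le \frac{CT}{N}\xrightarrow[N\to\infty]{}0,
\]
so the maximal jump converges to $0$ in $L^2$, hence in probability.

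To conclude, I would invoke the Skorohod representation theorem to realize the subsequence so that $\mu^N\to\mu^*$ almost surely in $\mathcal D(\mathbf R_+,\mathcal P_1(\mathbf R^{d+1}))$, extracting a further subsequence along which the maximal jump on $[0,T]$ also tends to $0$ almost surely. Using the lower semicontinuity of the maximal-jump functional $\mathsf m\mapsto \sup_{0<t\le T}\mathsf W_1(\mathsf m_t,\mathsf m_{t^-})$ for the Skorohod topology (the $\mathcal P_1$-valued analogue of the real-valued statement underlying \cite[Theorem 13.4]{billingsley2013convergence}), the maximal jump of $\mu^*$ on $[0,T]$ is bounded by the $\liminf$ of those of $\mu^N$, hence is $0$ a.s.; letting $T\to\infty$ along integers shows that a.s.\ $\mu^*$ has no jumps, i.e.\ $\mu^*\in\mathcal C(\mathbf R_+,\mathcal P_1(\mathbf R^{d+1}))$. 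The main obstacle is precisely this last passage: transferring ``maximal jump $\to 0$ in probability'' into ``$\mu^*$ a.s.\ continuous'' must be carried out consistently in the $\mathcal P_1$-topology (not the $\mathcal P_{\gamma_0}$-topology in which compactness was obtained), either through the semicontinuity property just mentioned or, alternatively, by reducing to the already-established real-valued criterion applied to $\langle f,\mu^N\rangle$ for $f$ ranging over a countable Lipschitz-dense family and then recombining via the duality formula \eqref{Kantorovitch Rubinstein}.
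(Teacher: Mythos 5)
Your proposal is correct and follows essentially the same route as the paper: the identical reduction to the $\mathsf W_1$-maximal jump of $\mu^N$ (which occurs only at the times $k/N$), the same one-step bound $\mathsf W_1(\nu_k^N,\nu_{k+1}^N)\le \frac1N\sum_i|\theta_{k+1}^i-\theta_k^i|\le d_k^N$ via \eqref{eq.diff}, and the same moment estimates from Lemma \ref{lem:moment_param} and \textbf{A1} giving $\mathbf E[|d_k^N|^2]\le C/N^2$, hence a vanishing expected maximal jump on $[0,T]$. The only difference is at the very end: where the paper simply invokes \cite[Proposition 3.26 in Chapter VI]{jacod2003skorokhod} to convert vanishing maximal jumps into a.s.\ continuity of the limit point, you reprove that transfer by hand (Skorohod representation plus lower semicontinuity of the maximal-jump functional for the $J_1$ topology), which is in substance the proof of the cited result.
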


\begin{proof}
 Note that since $\mathsf W_1\le \mathsf W_{\gamma_0}$, $\mu^{N'} \xrightarrow{\mathscr D} \mu^*$ also in $ \mathcal D(\mathbf R_+,\mathcal P_{1}(\mathbf R^{d+1}))$, along some subsequence $N'$. According to \cite[Proposition 3.26 in Chapter VI]{jacod2003skorokhod},  $\mu^*\in\mathcal C(\mathbf R_+,\mathcal P_{1}(\mathbf R^{d+1}))$ a.s.  if  for all $T>0$, $\lim_{N\to +\infty} \mathbf E\big[ \sup_{t\in [0,T]} \mathsf W_1(\mu^N_{t_-},\mu^N_t)  \big]=0$. Using \eqref{Kantorovitch Rubinstein}, this is equivalent to prove that
\begin{equation}\label{eq.L1}
\lim_{N\to +\infty} \mathbf E\Big[ \sup_{t\in [0,T]} \sup_{\Vert f\Vert_{\text{Lip}}\le 1}|\langle f,\mu^N_{t_-}\rangle-\langle f,\mu^N_t\rangle| \Big]=0.
\end{equation}
 Let us consider $T>0$ and a Lipschitz function $f:\mathbf R^{d+1}\to \mathbf R$ such that $\Vert f\Vert_{\text{Lip}}\le 1$. We have $\langle f,\mu_t^N\rangle=\langle f,\mu_0^N\rangle+ \sum_{k=0}^{\lfloor Nt\rfloor-1}\langle f,\nu_{k+1}^N\rangle-\langle f,\nu_k^N\rangle$ (with usual convention $\sum_0^{-1}=0$). Thus the discontinuity points of $t\in [0,T]\mapsto \langle f,\mu_t^N\rangle$ lies exactly at $\{1/N, 2/N,\ldots, \lfloor NT\rfloor/N\}$ and
\begin{align}\label{eq.Bs}
|\langle f,\mu^N_{t_-}\rangle-\langle f,\mu^N_t\rangle|\le \max_{k=0,\ldots,\lfloor NT\rfloor-1}|\langle f,\nu_{k+1}^N\rangle-\langle f,\nu_{k}^N\rangle|, \ \ \forall t\in [0,T], \, f \text{ Lipschitz}.
\end{align}
  Pick $k=0,\ldots,\lfloor NT\rfloor-1$. We have by \eqref{eq.diff},
\begin{align}\label{eq.Bs2}
|\langle f,\nu_{k+1}^N\rangle-\langle f,\nu_{k}^N\rangle|&\le \frac 1N\sum_{i=1}^N  |\theta_{k+1}^i-\theta_k^i| \le \frac CN\sum_{i=1}^N\Big[ \frac{1}{NB}\sum_{\ell=1}^B (1+\mathfrak b(\mathsf Z^{i,\ell}_k)) +\frac{1}{N} (1+|\theta_k^i|)\Big]=:d_k^N
\end{align}
Hence, it holds:
\begin{align*}
|d_k^N|^2 \le   \frac CN\sum_{i=1}^N\Big[ \frac{1}{N^2B}\sum_{\ell=1}^B (1+\mathfrak b^2(\mathsf Z^{i,\ell}_k)) +\frac{1}{N^2} (1+|\theta_k^i|^2)\Big],
\end{align*}
where thanks to Lemma \ref{lem:moment_param} and \textbf{A1}, for all $k=0,\ldots,\lfloor NT\rfloor-1$, $\mathbf E[|d_k^N|^2]\le C/N^2$ for some  $C>0$ independent of $N\ge 1$ and $k=0,\ldots,\lfloor NT\rfloor-1$.
Thus, using \eqref{eq.Bs} and \eqref{eq.Bs2},
\begin{align*}
\mathbf E\Big[ \sup_{t\in [0,T]} \sup_{\Vert f\Vert_{\text{Lip}}\le 1}|\langle f,\mu^N_{t_-}\rangle-\langle f,\mu^N_t\rangle| \Big]&\le \mathbf E\Big[ \sup_{\Vert f\Vert_{\text{Lip}}\le 1} \max_{k=0,\ldots,\lfloor NT\rfloor-1}|\langle f,\nu_{k+1}^N\rangle-\langle f,\nu_{k}^N\rangle| \Big]\\
&\le \mathbf E\Big[  \max_{k=0,\ldots,\lfloor NT\rfloor-1}d_k^N \Big] \\
&\le \mathbf E\Big[  \sqrt{\sum_{k=0}^{\lfloor NT\rfloor-1} |d_k^N|^2 }\Big]\\
&\le \sqrt{\mathbf E\Big[   \sum_{k=0}^{\lfloor NT\rfloor-1} |d_k^N|^2 \Big]}\le \frac C{\sqrt N}.
\end{align*}
This concludes the proof of Proposition \ref{p-limit in P}.
\end{proof}

\subsubsection{Uniqueness of the solution to \eqref{eq.P2}}

\begin{proposition}\label{pr.uni2}
There is  a  unique solution $\bar\mu\in\mathcal C(\mathbf R_+,\mathcal P_1(\mathbf R^{d+1}))$ to \eqref{eq.P2}.
\end{proposition}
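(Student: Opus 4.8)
The plan is to adapt the uniqueness argument of Proposition \ref{p-uniq} to the non-compact setting, working directly in $\mathcal P_1(\mathbf R^{d+1})$ instead of on the compact $\Theta$. Existence of a solution in $\mathcal C(\mathbf R_+,\mathcal P_1(\mathbf R^{d+1}))$ is already provided by Propositions \ref{prop_rc_inP}, \ref{prop_conv_le_sob} and \ref{p-limit in P}, so only uniqueness remains. As before, I would reinterpret any solution $\bar\mu$ of \eqref{eq.P2} as a weak solution of the continuity equation $\partial_t\bar\mu_t=\mathrm{div}(\boldsymbol v[\bar\mu_t]\bar\mu_t)$, with the velocity field $\boldsymbol v[\mu]$ from \eqref{def V[mu]}, using that $\mathcal C_c^\infty(\mathbf R^{d+1})\subset\mathcal C_b^\infty(\mathbf R^{d+1})$ are admissible test functions in \eqref{eq.P2}, and that for fixed $(z,x)$ the function $\phi(\cdot,z,x)$ lies in $\mathcal C_b^\infty(\mathbf R^{d+1})$ (its $\theta$-derivatives are bounded by powers of $\mathfrak b(z)$ via \textbf{A1} and \textbf{A3}), so it too may be used as a test function.

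The three structural estimates (Items 1--3 in the proof of Proposition \ref{p-uniq}) must be re-established globally on $\mathbf R^{d+1}$ and for measures in $\mathcal P_1$. For Item 1, the Hessian bounds on $\phi(\cdot,z,x)$ and on $\theta\mapsto\mathscr D_{\rm KL}(q^1_\theta|P^1_0)$ used there already hold on all of $\mathbf R^{d+1}$, so $|\mathrm J_\theta\boldsymbol v[\mu](\theta)|\le C$ uniformly in $\mu$ and $\theta$; in particular $\boldsymbol v[\mu]$ is globally Lipschitz in $\theta$ with constant independent of $\mu$, and by \eqref{borne nablaphi} and \eqref{eq.kl_2} it has at most linear growth $|\boldsymbol v[\mu](\theta)|\le C(1+|\theta|)$, again uniformly in $\mu$. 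For Item 3, since $\phi(\cdot,z,x)$ is Lipschitz in $\theta$ with constant $C\mathfrak b(z)$ by \eqref{eq.Bntheta}, the duality formula \eqref{Kantorovitch Rubinstein} gives $|\langle\phi(\cdot,z,x),\mu-\nu\rangle|\le C\mathfrak b(z)\,\mathsf W_1(\mu,\nu)$, and integration against $\gamma$ yields $\sup_\theta|\boldsymbol v[\mu](\theta)-\boldsymbol v[\nu](\theta)|\le L'\mathsf W_1(\mu,\nu)$ with $L'$ uniform. Item 2 (time-Lipschitz regularity of $t\mapsto\boldsymbol v[\bar\mu_t](\theta)$) is where the non-compactness truly enters: testing \eqref{eq.P2} against $\phi(\cdot,z,x)$ now produces a KL contribution $\langle\nabla_\theta\phi(\cdot,z,x)\cdot\nabla_\theta\mathscr D_{\rm KL},\bar\mu_r\rangle$ which is no longer bounded, but is controlled by $C\mathfrak b(z)(1+\langle|\cdot|,\bar\mu_r\rangle)$; since $\bar\mu\in\mathcal C(\mathbf R_+,\mathcal P_1)$, the map $r\mapsto\langle|\cdot|,\bar\mu_r\rangle$ is locally bounded, so $|\langle\phi(\cdot,z,x),\bar\mu_t-\bar\mu_s\rangle|\le C_T\mathfrak b(z)|t-s|$ on $[0,T]$, whence $|\boldsymbol v[\bar\mu_t](\theta)-\boldsymbol v[\bar\mu_s](\theta)|\le C_T|t-s|$.

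With these estimates in hand, $v(t,\theta):=\boldsymbol v[\bar\mu_t](\theta)$ is globally Lipschitz in $\theta$ and locally Lipschitz in $t$, and its linear growth guarantees that the associated flow $\phi_t$ is globally defined and that $\phi_t\#\mu_0$ remains in $\mathcal P_1$ (a Gronwall bound on $|\phi_t(\theta)|$ keeps first moments finite). The representation theorem \cite[Theorem 5.34]{villani2021topics} then yields $\bar\mu_t=\phi_t\#\mu_0$ for every solution. Finally, given two solutions $\bar\mu,\tilde\mu\in\mathcal C(\mathbf R_+,\mathcal P_1)$, I would combine the flow representation with the Wasserstein stability estimates of \cite{piccoli2016properties}, exactly as in \cite[Proposition 2.14]{descours2022law}, together with Item 3, to obtain a Gronwall inequality $\mathsf W_1(\bar\mu_t,\tilde\mu_t)\le L'\int_0^t\mathsf W_1(\bar\mu_s,\tilde\mu_s)\di s$ on each $[0,T]$, forcing $\mathsf W_1(\bar\mu_t,\tilde\mu_t)=0$ for all $t$, hence uniqueness on all of $\mathbf R_+$. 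The main obstacle is Item 2: keeping the constants controlled in the face of the linearly growing KL drift, which requires exploiting that any candidate solution already belongs to $\mathcal C(\mathbf R_+,\mathcal P_1)$ (Proposition \ref{p-limit in P}) so that its first moment is locally bounded, and then checking that the flow preserves finiteness of first moments so that both the representation formula and the $\mathsf W_1$-stability estimates genuinely apply on the whole space.
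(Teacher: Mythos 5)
Your proposal is correct and follows essentially the same route as the paper: existence from Propositions \ref{prop_rc_inP}, \ref{p-limit in P} and \ref{prop_conv_le_sob}, then uniqueness by re-establishing the time-Lipschitz bound (the paper's \eqref{eq.itemB}) via the locally bounded first moment of a $\mathcal C(\mathbf R_+,\mathcal P_1)$ solution to control the linearly growing KL drift, reusing Items 1 and 3 of Proposition \ref{p-uniq}, invoking the flow representation of \cite[Theorem 5.34]{villani2021topics}, and concluding with the $\mathsf W_1$-stability/Gronwall argument of \cite{piccoli2016properties} as in \cite[Proposition 2.14]{descours2022law}. Your extra checks (admissibility of $\phi(\cdot,z,x)$ as a test function, the flow preserving finiteness of first moments) are sound elaborations of points the paper leaves implicit, not a different argument.
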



\begin{proof}
First of all, the existence of a solution is provided by  Propositions~\ref{prop_rc_inP},~\ref{p-limit in P} and~\ref{prop_conv_le_sob}.
Let us now prove that there is a unique solution to  \eqref{eq.P2}   in $\mathcal C(\mathbf R_+,\mathcal P_{1}(\mathbf R^{d+1}))$.

Recall the definition of   $ \boldsymbol v[\mu]$   in \eqref{def V[mu]}.  We claim that
 for all $T>0$ and
 all solution $\bar\mu\in\mathcal C(\mathbf R_+,\mathcal P_{1}(\mathbf R^{d+1}))$ of \eqref{eq.P2}, there exists $C>0$ such that
\begin{equation}\label{eq.itemB}
| \boldsymbol v[\bar\mu_t](\theta)- \boldsymbol v[\bar\mu_s](\theta)|\leq C|t-s|, \ \text{ for all  $ 0\leq s \le  t\leq T$   and  $\theta\in\mathbf R^{d+1}$}.
\end{equation}
 The proof of item \eqref{eq.itemB} is the same as the one made for Item 2 in Proposition~\ref{p-uniq} since it holds using \eqref{eq.kl_2} and \eqref{eq.Bntheta}, for all $0\le s\le  t\le T$ and $z\in \mathbf R^d$,
\begin{align*}
\Big |\int_s^t\langle\nabla_\theta\phi(\cdot,z,x)\cdot \nabla_\theta \mathscr D_{{\rm KL}}(q_{\,_\cdot }^1|P_0^1),\bar\mu_r\rangle\di r\Big |&\le C\mathfrak b(z)\int_s^t \langle (1+|\cdot| ), \bar \mu_r\rangle \di r\\
&\le  C\mathfrak b(z)\, \max_{r\in [0,T]}\langle (1+|\cdot| ), \bar \mu_r\rangle |t-s|.
\end{align*}

We now  conclude the proof of   Proposition \ref{pr.uni2}.
Item 1 in the proof of Proposition~\ref{p-uniq}  and \eqref{eq.itemB}    imply that $v(t,\theta)= \boldsymbol v[\bar\mu_t](\theta)$ is globally Lipschitz on  $[0,T]\times  \mathbf R^{d+1}$, for all $T>0$, when $\bar\mu\in\mathcal C(\mathbf R_+,\mathcal P_{1}(\mathbf R^{d+1}))$ is a solution   of \eqref{eq.P2}. Since  in addition   a solution $\bar \mu$  to \eqref{eq.P2}  is a weak solution on $\mathbf R_+$ to \eqref{eq.measure} in $\mathcal C(\mathbf R_+,\mathcal P(\mathbf R^{d+1}))$, it holds by~\cite[Theorem 5.34]{villani2021topics}:
\begin{equation}\label{eq.vi2}
\forall t\ge 0, \ \bar \mu_t=\phi_t\# \mu_0,
\end{equation}
 where $\phi_t$ is the flow generated by the vector field $\boldsymbol v[\bar \mu_t](\theta)$ over $\mathbf R^{d+1}$.
  Together with Item 3 in the proof of  Proposition \ref{p-uniq}  and using  the same arguments as those used in Step~3 of the proof of~\cite[Proposition 2.14]{descours2022law}, two solutions  agrees on each $[0,T]$ for all $T>0$. One then deduces the uniqueness of the solution to \eqref{eq_limit}. The proof of Proposition \ref{pr.uni2} is complete.
\end{proof}

We are now in position to end the proof of Theorem \ref{thm.z1zN}.

\begin{proof}[Proof of Theorem \ref{thm.z1zN}]
By Proposition \ref{prop_rc_inP},  $(\mu^N)_{N\ge1}$ is relatively compact in $\mathcal D(\mathbf R_+,\mathcal P_{\gamma_0}(\mathbf R^{d+1}))$.  Let $\mu^1,\mu^2\in \mathcal D(\mathbf R_+,\mathcal P_{\gamma_0}(\mathbf R^{d+1}))$ be two limit points of this sequence. By Proposition \ref{p-limit in P},  a.s. $\bar\mu^1,\bar\mu^2\in \mathcal C(\mathbf R_+,\mathcal P_{1}(\mathbf R^{d+1}))$. In addition, according to  Proposition \ref{prop_conv_le_sob}, $\mu^1$ and $ \mu^2$ are a.s.  solutions of \eqref{eq.P2}.   Denoting by $\bar\mu\in  \mathcal C(\mathbf R_+,\mathcal P_{\gamma_0}(\mathbf R^{d+1}))$ the unique solution to \eqref{eq.P2} (see Proposition \ref{pr.uni2}), we have a.s.
$$\bar \mu^1 =\bar\mu  \text{ and } \bar \mu^2=\bar\mu \text{ in } \mathcal C(\mathbf R_+,\mathcal P_{1}(\mathbf R^{d+1})).$$
In particular  $\bar\mu\in \mathcal D(\mathbf R_+,\mathcal P_{\gamma_0}(\mathbf R^{d+1}) )$ and  $\bar\mu^j=\bar\mu$   in $\mathcal D(\mathbf R_+,\mathcal P_{\gamma_0}(\mathbf R^{d+1}))$, $j\in \{1,2\}$. As a consequence, $\bar\mu$ is the unique limit point of $(\mu^N)_{N\ge1}$   in $\mathcal D(\mathbf R_+,\mathcal P_{\gamma_0}(\mathbf R^{d+1}))$ and the whole sequence $(\mu^N)_{N\ge1}$  converges to  $\bar\mu$ in    $\mathcal D(\mathbf R_+,\mathcal P_{\gamma_0}(\mathbf R^{d+1}))$. Since $\bar\mu$ is deterministic, the  convergence also holds in probability. The proof of Theorem \ref{thm.z1zN} is complete.
\end{proof}

Let us now prove Proposition \ref{pr.u}.
\begin{sloppypar}
\begin{proof}[Proof of Proposition \ref{pr.u}]
Any solution to  \eqref{eq_limit} in $\mathcal C([0,T],\mathcal P(\Theta_T))$ is a solution to  \eqref{eq.P2} in $\mathcal C([0,T],\mathcal P_{1}( \mathbf R^{d+1}))$. The result follows from  Proposition \ref{pr.uni2}.
\end{proof}
\end{sloppypar}

\end{document}